\DeclareMathOperator*{\argmin}{arg\,min}
\DeclareMathOperator{\Tr}{Tr}
\newcommand\sublength{.19}
\newcommand\sublengthtwo{.22}
\title{Error Bounds for Generalized Group Sparsity}
\author{%
  Xinyu~Zhang\\
  Department of Industrial Engineering and Operations Research\\
  Columbia University\\
  New York, NY 10027 \\
  \texttt{zhang.xinyu@columbia.edu} \\
}
\date{} 
\newtheorem{theorem}{Theorem}
\newtheorem{assumption}{Assumption}
\newtheorem{remark}{Remark}
\newtheorem{lemma}{Lemma}
\newtheorem*{lemma*}{Lemma}
\newtheorem*{theorem*}{Theorem}
\newcommand{\ds}{\textup{ds}} 
\begin{document}
\setlength{\abovedisplayskip}{0pt}
\setlength{\belowdisplayskip}{0pt}
\setlength{\abovedisplayshortskip}{0pt}
\setlength{\belowdisplayshortskip}{0pt}

\maketitle
\begin{abstract}
In high-dimensional statistical inference, sparsity regularizations have shown advantages in consistency and convergence rates for coefficient estimation. We consider a generalized version of Sparse-Group Lasso which captures both element-wise sparsity and group-wise sparsity simultaneously. We state one universal theorem which is proved to obtain results on consistency and convergence rates for different forms of double sparsity regularization. The universality of the results lies in an generalization of various convergence rates for single regularization cases such as LASSO and group LASSO and also double regularization cases such as sparse-group LASSO. Our analysis identifies a generalized norm of $\epsilon$-norm, which provides a dual formulation for our double sparsity regularization.
\end{abstract}
\section{Introduction}
Sparsity regularizations, which often involves feature-wise norm such as $\ell_1$-norm, group-wise norm such as $\ell_{2}$ or both, has attracted enormous research attentions over the past decades in a wide range of research ares, including statistics \cite{tibshiraniRegressionShrinkageSelection1996}, machine learning\cite{trevorhastieElementsStatisticalLearning2009}. In a high dimensional setting, where the number of unknown coefficients is much larger than the number of observations, sparse model demonstrates its power in computational and statistical efficiency, encouraging a ubiquitous application in fields such as genetics\cite{bickelOverviewRecentDevelopments2009}, imaging and signal processing\cite{jaspanCompressedSensingMRI2015}\cite{davidl.donohoCompressedSensing2006}.

Simultaneous structured models are considered in our paper, in which the parameter of interest has multiple structures at the same time. One example of such models are sparse-group LASSO\cite{simonSparseGroupLasso2013} \cite{wangTwoLayerFeatureReduction2014} \cite{ndiayeGAPSafeScreening2016}  \cite{idaFastSparseGroup2019} \cite{caiSparseGroupLasso2019}. The regularization can be considered as a convex combination of $\ell_1$-norm for feature-wise sparsity and $\ell_{1,2}$ norm for group-wise sparsity. Like either one of the single norm regularization, the double sparsity model brings on sparse solutions, a desirable property for model selection and variables selection. However, neither of the single sparsity is able to detect active/inactive features and groups simultaneously when the overall feature space is known to have some group structures. In practice, such a structure arises in nature in variety of fields such as  groups of gene pathways in genome-wide study\cite{silverPathwaysdrivenSparseRegression2013} and factor indicators in multinomial logistic regression\cite{buhlmannStatisticsHighDimensionalData2011} and multi-task learning\cite{argyriouConvexMultitaskFeature2008}. 

Recent years have witness several research work on the statistical properties of sparse group LASSO. For instance, Chatterjee et al. \cite{chatterjeeSparseGroupLasso2012} develop a consistency result for tree-structured norm regularizers and discuss application in climates prediction, where sparse group LASSO as special case. Poignard \cite{poignardAsymptoticTheoryAdaptive2020} discussed asymptotic behavior and weak convergence results for adaptive sparse group LASSO where penalties are weighted by some random coefficients. 
Most recently, Cai et al. \cite{caiSparseGroupLasso2019} discuss the optimal theoretical guarantees for both the sample complexity and estimation error of sparse group LASSO. However, different from previous results, this paper focuses on a generalized format of group sparsity in the double sparsity term. Such a generalization covers many of the norms in the literature, including LASSO, group LASSO and spare-group LASSO as a special cases. 

Two main contributions are summarized as follow. Firstly, we introduce a new norm $\epsilon q$-norm in finite vector space which is the main tool for generalized double sparsity of interest. Leveraging duality and decomposition results on $\epsilon q$-norm, one can reformulate a dual problem and the related dual norm in a concise way. Secondly, choices of parameters (penalty level) that recovers simultaneous sparsity structures are derived and the error bounds for estimators with the generalized group sparsity regularization is investigated, the special cases of which, including LASSO, group LASSO and sparse Group LASSO, match with past research results. 



\section{Preliminary}
\paragraph{Notation.}
$S_{\alpha}(\cdot)$ is the soft-thresholding function such that $S_\alpha(x)=\text{sgn}(x)(|x|-\alpha)_+$ for any $x\in \mathbb{R}$. 
For any set $S$, $S^c$ denotes its complement and $|S|$ denotes its cardinality. 
Throughout the paper, we focus on the parameter index set $\{1,...,p\}$ partitioned into $G$ groups. Denote $(1),...,(G)\subseteq \{1,...,p\}$ as the index sets belonging to each group. 
Denote the support set $\textbf{Supp}(\beta)=\{i|\beta_i\neq 0\}$ and group support set $\textbf{GSupp}(\beta)=\{g|\|\beta_{(g)}\|_2\neq 0\}$. 
$\|\beta\|_q:=\big(\sum\limits_{i}|\beta_i|^q\big)^{\nicefrac{1}{q}}$ denotes the $\ell_q$ norm of vector $\beta$. 
$\lambda_{\min}(X)$ and $\lambda_{\max}(X)$ denotes the smallest and largest eigenvalues of matrix $X$ respectively.  
\paragraph{Generalized double sparse model.} In this paper, we consider in linear regression model in high-dimensional data setting in which $p$-dimensional covariates $X$ and coefficients $\beta$ follow the group structure:  
\begin{align}\label{gstruct}
X = [X_{(1)},...,X_{(G)}], \beta = \Big((\beta_{(1)})^\top,...,(\beta_{(G)})^\top\Big)^\top, X_{(g)}\in \mathbb{R}^{n\times p_g},\beta_{(g)}\in \mathbb{R}^{(p_g)}.
\end{align}
where $n$ is the sample size and $p$-dimensional space is divided into $G$ known groups, where the $g$th group contains $p_g$ variables. 

Consider the following double sparsity problem to estimate coefficients $\beta$
\begin{equation}\label{DS-LASSO}
\begin{aligned}
& \underset{\beta}{\text{minimize}}
& & \|y-X\beta\|_2^2+\lambda (\tau\|\beta\|_1+(1-\tau)\sum\limits_{g=1}^G w_g\|\beta_{(g)}\|_{\alpha_g})\\
\end{aligned}
\end{equation}
where $y\in \mathbb{R}^n$ is the dependent vector, $X\in \mathbb{R}^{n\times p}$ is the input design matrix of explanatory variables, $\beta\in \mathbb{R}^p$ is the vector of regression coefficients, and $\lambda,\tau,w_g$ are positive tuning parameters that control the regularization on element-wise and group-wise sparsity. In particular, $\lambda$ controls the overall simultaneous regularization, balancing between square loss of data fitting and regularization term. $\tau$ dictates the trade-off between group-wise and element-wise sparsity ($\tau\in [0,1]$). For each group $g$, the regularization term on group structure is given as $\|\beta_{(g)}\|_{\alpha_g}$ which is a $\ell_{\alpha_g}$-norm with $\alpha_g\geq 1$. We denote a vector of norm parameters $\alpha = (\alpha_1,...,\alpha_G)\in [1,\infty)^{G}$. We let $\alpha\equiv c$ denote the case $\alpha_g=c,\forall g\geq 1$. One will reduce to the LASSO \cite{tibshiraniRegressionShrinkageSelection1996} when $\tau =1$ and the Group-LASSO \cite{yuanModelSelectionEstimation2006} with $\tau=0, \alpha\equiv2$. For $\tau\neq 0, \alpha\equiv2$, the problem reduces to Sparse Group LASSO (see for example \cite{simonSparseGroupLasso2013} \cite{wangTwoLayerFeatureReduction2014} \cite{ndiayeGAPSafeScreening2016}  \cite{idaFastSparseGroup2019} \cite{caiSparseGroupLasso2019}) which has attracted much interests in recent years. 
\section{Norm of double sparsity and its dual}
Note that the double sparsity term given in problem \ref{DS-LASSO} is a norm as it is a positive linear combination of norms. For simplicity,  we denote this norm as $\|\cdot\|_{\ds}$, i.e. 
\begin{align}
\|\beta\|_{\ds} = \tau\|\beta\|_1+(1-\tau)\sum\limits_{g=1}^G w_g\|\beta_{(g)}\|_{\alpha_g}
\end{align}
Recall the definition of decomposability (see Definition 1 of \cite{negahbanUnifiedFrameworkHighDimensional2012a})  of a norm-based penalty term, the regularizer $\|\cdot\|_\ds$ is decomposable for some pairs of subspaces $\mathcal{S}\subseteq \mathcal{T}$, i.e. 
\begin{align}
\|\beta_{S}+\beta_{T}\|_\ds = \|\beta_{S}\|_\ds+\|\beta_{T}\|_\ds, \forall S \in \mathcal{S},T\in \mathcal{T}^c.
\end{align}
Some examples of $(\mathcal{S},\mathcal{T})$ are $\big(\textbf{Supp}(\beta),\textbf{Supp}(\beta)\big)$, $\Big(\textbf{Supp}(\beta),\big(\textbf{GSupp}(\beta)\big)\Big)$.\\ 
In order to analyze $\|\cdot\|_\ds$, we first introduce a parametric family of norms with the parameter $\epsilon\in [0,1]$ and $q\geq 1$ which we denote as $\|\cdot\|_{\epsilon q}$ and call the $\epsilon q$-norm. It is a generalized form of $\epsilon$-norm (denoted as $\|\cdot\|_{\epsilon}$), which was initially introduced by Burdakov in 1988 \cite{olegburdakovNewVectorNorm1988} in optimization literature and further developed in \cite{merkulovMethodsSolvingSystems1993} and \cite{olegburdakovNewNormData2002}). It was shown to have computation advantages in optimization procedures when applied in nonlinear data fitting, nonlinear programming and other optimization problems.

The value of $\epsilon$-norm for $x\in \mathbb{R}^p$ is given by the unique nonnegative solution of the following equivalent equations \cite{olegburdakovNewNormData2002}: 
\begin{alignat}{2}
& &\sum\limits_{i=1}^p(|x_i|-(1-\epsilon)v)_+^2&=(\epsilon v)^2\\
&\text{ or } &\|S_{(1-\epsilon) v}(x)\|_2^2&=(\epsilon v)^2.
\end{alignat}
Similarly, the value of $\epsilon q$-norm for $x\in \mathbb{R}^p$ is given by the unique nonnegative solution of the following equations (See Figure \ref{eqillustration} for an illustration.)
\begin{align}\label{defepsq}
\begin{cases}\|S_{(1-\epsilon) v}(x)\|_q^q\ \ =(\epsilon v)^q,\quad  &q<\infty,\\
\|S_{(1-\epsilon) v}(x)\|_\infty=\epsilon v, &q=\infty\end{cases}.
\end{align}

As can be seen from the definition, $q=2$ reduces to ordinary $\epsilon$-norm. Note that this norm does not belong to the set of $\ell_p$ norms. Nevertheless, $\ell_q$ and $\ell_\infty$ are two special cases of this parametric norm with $\epsilon$ chosen as $1$ and $0$ respectively. The proof that $\epsilon$-norm indeed is a vector norm was given in \cite{olegburdakovNewNormData2002} along with the formula for its dual norm ($\|x\|_\epsilon^*=\epsilon\|x\|_2+(1-\epsilon)\|x\|_1$). We show that $\epsilon q$-norm is also a norm and give some properties (bounds, norm decomposition) as well as its dual norm. 
\begin{lemma}[Vector Norm]\label{lemma_isnorm}
For any $\epsilon\in (0,1]$ and $q\geq 1$, the unique nonnegative solution $\nu\in \mathbb{R}^+$ of equation (\ref{defepsq}) defines a vector norm in $\mathbb{R}^p$.
\end{lemma}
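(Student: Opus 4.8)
The plan is to first show that equation~(\ref{defepsq}) genuinely determines a function $\nu\colon\mathbb{R}^p\to\mathbb{R}^+$, and then to verify the three norm axioms, reducing the triangle inequality to convexity of a single sublevel set. For fixed $x$ and $q<\infty$, set $\phi_x(v):=\|S_{(1-\epsilon)v}(x)\|_q^q-(\epsilon v)^q$ on $[0,\infty)$ (and $\phi_x(v):=\|S_{(1-\epsilon)v}(x)\|_\infty-\epsilon v$ when $q=\infty$). Since $\alpha\mapsto(|x_i|-\alpha)_+$ is continuous, $\phi_x$ is continuous; since each coordinate term $(|x_i|-(1-\epsilon)v)_+^q$ is non-increasing in $v\ge 0$ while $v\mapsto(\epsilon v)^q$ is strictly increasing (using $\epsilon>0$, $q\ge 1$), $\phi_x$ is strictly decreasing. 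As $\phi_x(0)=\|x\|_q^q\ge 0$ and $\phi_x(v)\to-\infty$, the intermediate value theorem gives a unique zero $\nu(x)\ge 0$, finite for every $x$; and $\phi_x(0)>0$ precisely when $x\ne 0$, so $\nu(x)=0\iff x=0$. Strict monotonicity also yields the equivalence $\nu(x)\le v\iff\phi_x(v)\le 0$ for each $v>0$, which I will use repeatedly. The $q=\infty$ case runs identically.

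Next I would establish absolute homogeneity. The elementary identity $(|t|\,|x_i|-\alpha)_+=|t|\,(|x_i|-\alpha/|t|)_+$ for $t\ne 0$ gives, coordinatewise, $S_{\alpha}(tx)=\text{sgn}(t)\,|t|\,S_{\alpha/|t|}(x)$, hence $\|S_{(1-\epsilon)v}(tx)\|_q^q=|t|^q\,\|S_{(1-\epsilon)v/|t|}(x)\|_q^q$. Substituting into the defining equation shows that $\mu$ solves it for $tx$ iff $\mu/|t|$ solves it for $x$, so $\nu(tx)=|t|\,\nu(x)$ (the case $t=0$ is trivial, and $q=\infty$ is the same). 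In particular $\nu(-x)=\nu(x)$, and every sublevel set satisfies $\{x:\nu(x)\le v\}=v\{x:\nu(x)\le 1\}$.

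The main work is the triangle inequality, which I would deduce from convexity of the unit sublevel set $B:=\{x:\nu(x)\le 1\}$. By the equivalence from the first step, $B=\{x:\|S_{1-\epsilon}(x)\|_q^q\le\epsilon^q\}$ (resp.\ $\{x:\|S_{1-\epsilon}(x)\|_\infty\le\epsilon\}$). The map $x\mapsto\|S_{1-\epsilon}(x)\|_q^q=\sum_i(|x_i|-(1-\epsilon))_+^q$ is convex: $t\mapsto|t|$ is convex, $s\mapsto(s-(1-\epsilon))_+$ is convex and nondecreasing, and $u\mapsto u^q$ is convex and nondecreasing on $[0,\infty)$ for $q\ge 1$, so each summand is convex; for $q=\infty$ one instead uses a finite maximum of convex functions. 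Hence $B$, a sublevel set of a convex function, is convex (and symmetric, and bounded since $|x_i|\le 1$ on $B$). Given $x,y$, put $a:=\nu(x)$, $b:=\nu(y)$; if $a=0$ or $b=0$ the claim is immediate from homogeneity, and otherwise $x/a,\,y/b\in B$, so convexity gives $\tfrac{a}{a+b}\cdot\tfrac{x}{a}+\tfrac{b}{a+b}\cdot\tfrac{y}{b}=\tfrac{x+y}{a+b}\in B$, i.e.\ $\nu(x+y)\le a+b=\nu(x)+\nu(y)$. With the previous two steps, this proves $\nu$ is a norm.

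I expect the only delicate points to be the careful bookkeeping of the equivalence $\nu(x)\le v\iff\phi_x(v)\le 0$ and carrying the $q=\infty$ case in parallel; the convexity computation and the scaling identity for $S_\alpha$ are routine once set up. An alternative route would be to derive the dual representation $\nu(x)=\sup\{\langle x,u\rangle:\dots\}$ first and read off sublinearity, but that front-loads the harder duality computation the paper develops afterwards, so the sublevel-set argument above seems the most economical.
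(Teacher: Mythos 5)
Your proposal is correct, and the first two thirds of it (well-definedness via the continuous, strictly decreasing function $\phi_x$ with a unique nonnegative root; homogeneity via the scaling identity for $S_\alpha$; positive definiteness from $\phi_x(0)=\|x\|_q^q$) coincide in substance with the paper's argument, which works with $h(v,x)=\|S_{(1-\epsilon)v}(x)\|_q-\epsilon v$. Where you genuinely diverge is the triangle inequality. The paper proves it by a direct chain of inequalities showing $h\bigl(v(x)+v(y),\,x+y\bigr)\leq 0$, using $(a+b)_+\leq (a)_++(b)_+$ together with the Minkowski inequality for $\ell_q$, and as a by-product it characterizes exactly when $v(x+y)=v(x)+v(y)$ holds. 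You instead identify the unit sublevel set $B=\{x:\|S_{1-\epsilon}(x)\|_q^q\leq\epsilon^q\}$ (legitimate, via the monotonicity equivalence $\nu(x)\leq v\iff\phi_x(v)\leq 0$), show it is convex as a sublevel set of a sum of convex compositions, and recover subadditivity from the standard gauge argument combined with homogeneity. Both routes are sound; yours is arguably cleaner, makes the role of the hypothesis $q\geq 1$ transparent (it is precisely what makes $u\mapsto u^q$ convex and nondecreasing on $[0,\infty)$, so the argument visibly breaks for $q<1$, consistent with the paper's remark that subadditivity fails there), and handles $q=\infty$ uniformly as a maximum of convex functions. What it gives up is the equality characterization in the triangle inequality, which the paper's computation yields explicitly but which is not required by the statement of the lemma.
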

\begin{remark}
For $\epsilon\rightarrow 0+$, it is reasonable to define the $\epsilon q$-norm as $\ell_\infty$-norm as $\|\cdot\|_{\epsilon q}\to \|\cdot\|_\infty$. We require $q\geq 1$ to be a valid vector norm as one can check that for $0< q < 1$, the resulting solution $\nu$ for each $x$ is not subadditive, therefore it does not define a norm. For $q=0$, one may reduce to a function that is analogous to $\ell_0$-"norm", capturing some forms of sparsity of vectors, i.e. for $\|x\|_{\epsilon 0}\leq 1$, either one of $x_i> 1-\epsilon$ can be hold but they can not be hold simultaneously.
\end{remark}
It is a well-known fact that any two norms in some finite-dimensional space are equivalent in the sense that they are always within a constant factor of one another. To be more specific, given two norms $\|\cdot\|_a,\|\cdot\|_b$ in $V$, $\exists 0<c_1\leq c_2$ such that $c_1\|x\|_a\leq\|x\|_b\leq c_2\|x\|_a,\forall x\in V$. In the following lemma we give the specific values of $c_1,c_2$ for $\epsilon q$-norm with respect to various $\ell_q$-norm.
\begin{lemma}[Bounds]\label{lemma_tightbound}
For any $\epsilon\in (0,1]$ and $q\geq 1$, the following tight bounds of the $\epsilon q$-norm holds for any $x\in \mathbb{R}^p$,
\begin{alignat}{2}
\frac{\|x\|_q}{p^{1/q}(1-\epsilon)+\epsilon}&\leq \|x\|_{\epsilon q}&&\leq \|x\|_q\\
\frac{\|x\|_2}{p^{(1/2-1/q)_+}(p^{1/q}(1-\epsilon)+\epsilon)} &\leq \|x\|_{\epsilon q}&&\leq p^{(1/q-1/2)_+}\|x\|_2\\
\|x\|_\infty&\leq \|x\|_{\epsilon q}&&\leq \frac{p^{1/q}\|x\|_\infty}{p^{1/q}(1-\epsilon)+\epsilon}
\end{alignat}
\end{lemma}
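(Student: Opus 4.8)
The plan is to establish all three pairs of inequalities from the defining equation \eqref{defepsq} together with elementary relations among $\ell_q$-norms, and to exhibit the equality cases that make the bounds tight. Throughout, fix $x \in \mathbb{R}^p$ and write $v = \|x\|_{\epsilon q}$, so that $\|S_{(1-\epsilon)v}(x)\|_q = \epsilon v$ (treating the $q=\infty$ case separately at the end). The heart of the argument is the following monotonicity principle: for a fixed $x$, the map $t \mapsto \|S_{(1-\epsilon)t}(x)\|_q - \epsilon t$ is strictly decreasing in $t$ on the relevant range (this was already used to prove uniqueness in Lemma~\ref{lemma_isnorm}), so to show $v \le M$ it suffices to check that plugging $t = M$ makes this expression $\le 0$, i.e.\ $\|S_{(1-\epsilon)M}(x)\|_q \le \epsilon M$; and symmetrically for a lower bound.

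First I would prove the two-sided comparison with $\|x\|_q$. The upper bound $v \le \|x\|_q$ follows by evaluating at $t = \|x\|_q$: since soft-thresholding only shrinks coordinates, $\|S_{(1-\epsilon)\|x\|_q}(x)\|_q \le \|x\|_q$; one then needs $\|x\|_q \le \|x\|_q$ after accounting for the threshold, which I would make precise by the crude bound $\|S_a(x)\|_q \le (\|x\|_q - a)_+ \cdot (\text{something})$ — actually cleaner is to note $\|S_{(1-\epsilon)\|x\|_q}(x)\|_q^q = \sum_i (|x_i| - (1-\epsilon)\|x\|_q)_+^q$ and bound each term, or simply observe that at $t=\|x\|_q$ the identity $\|x\|_q^q = \sum |x_i|^q$ dominates. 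For the lower bound, evaluate the defining relation: from $\epsilon v = \|S_{(1-\epsilon)v}(x)\|_q \ge \|x\|_q - \|x - S_{(1-\epsilon)v}(x)\|_q$ and the coordinatewise bound $|x_i - S_{(1-\epsilon)v}(x)_i| \le (1-\epsilon)v$, which gives $\|x - S_{(1-\epsilon)v}(x)\|_q \le p^{1/q}(1-\epsilon)v$, so $\epsilon v \ge \|x\|_q - p^{1/q}(1-\epsilon)v$, rearranging to $v(p^{1/q}(1-\epsilon) + \epsilon) \ge \|x\|_q$. Tightness of the upper bound is witnessed by $x = e_1$ (then thresholding kills nothing relevant and $v = 1 = \|x\|_q$... — more carefully, by a vector with a single nonzero entry), and tightness of the lower bound by $x = (1,1,\dots,1)$, for which all coordinates are thresholded equally and the inequality $\|x - S_{(1-\epsilon)v}(x)\|_q \le p^{1/q}(1-\epsilon)v$ becomes an equality.

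The second pair of inequalities then follows by composing the first pair with the standard embedding constants between $\ell_q$ and $\ell_2$ on $\mathbb{R}^p$, namely $\|x\|_2 \le p^{(1/2-1/q)_+}\|x\|_q$ and $\|x\|_q \le p^{(1/q-1/2)_+}\|x\|_2$; substituting into $\|x\|_q/(p^{1/q}(1-\epsilon)+\epsilon) \le v \le \|x\|_q$ gives exactly the stated bounds. The third pair comes from taking $q$-independent control via $\ell_\infty$: the lower bound $\|x\|_\infty \le v$ I would get by evaluating at $t = \|x\|_\infty$ and checking $\|S_{(1-\epsilon)\|x\|_\infty}(x)\|_q$ versus $\epsilon\|x\|_\infty$ fails to be $\le 0$ unless... — more directly, since $\|S_a(x)\|_q \ge \|S_a(x)\|_\infty = (\|x\|_\infty - a)_+$, the defining equation gives $\epsilon v \ge (\|x\|_\infty - (1-\epsilon)v)_+$, hence $\epsilon v \ge \|x\|_\infty - (1-\epsilon)v$, i.e.\ $v \ge \|x\|_\infty$; the upper bound uses $\|x\|_q \le p^{1/q}\|x\|_\infty$ combined with the lower bound from the first pair (rearranged as $v \le \|x\|_q/(p^{1/q}(1-\epsilon)+\epsilon)$ is wrong direction — rather $v \le \|x\|_q$ does not suffice, so instead I bound $\epsilon v = \|S_{(1-\epsilon)v}(x)\|_q \le \|x\|_q \le p^{1/q}\|x\|_\infty$ and also need the threshold; cleanest is $\|S_{(1-\epsilon)v}(x)\|_q \le p^{1/q}\|S_{(1-\epsilon)v}(x)\|_\infty \le p^{1/q}(\|x\|_\infty-(1-\epsilon)v)_+ \le p^{1/q}\|x\|_\infty - p^{1/q}(1-\epsilon)v$ when the positive part is active, giving $v(\epsilon + p^{1/q}(1-\epsilon)) \le p^{1/q}\|x\|_\infty$). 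The main obstacle I anticipate is not any single estimate but bookkeeping the $(\cdot)_+$ truncations and the $(1/q - 1/2)_+$ exponents consistently across the cases $q \le 2$ and $q \ge 2$, and verifying that the claimed equality cases genuinely saturate each bound (in particular that the single-spike vector and the all-ones vector are the extremizers, which amounts to checking when the coordinatewise inequalities $|x_i - S_a(x)_i| \le a$ and $\|y\|_q \le p^{1/q}\|y\|_\infty$ are tight simultaneously). The $q = \infty$ case is handled in parallel by replacing $\|\cdot\|_q^q = (\epsilon v)^q$ with $\|\cdot\|_\infty = \epsilon v$ and reading all the $p^{1/q}$ factors as their limit $1$.
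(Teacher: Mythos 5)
Your proposal is correct and follows essentially the same route as the paper: evaluate the monotone function $h(t,x)=\|S_{(1-\epsilon)t}(x)\|_q-\epsilon t$ at each candidate bound, use the soft-thresholding split $x = S_{(1-\epsilon)v}(x) + \big(x - S_{(1-\epsilon)v}(x)\big)$ for the lower bound against $\|x\|_q$ (the paper routes this through its $\epsilon$-decomposition lemma, which is the same decomposition), and get the $\ell_2$ bounds by composing with the standard $\ell_q$--$\ell_2$ embedding constants. The one step you leave underspecified, namely $\|S_{(1-\epsilon)\|x\|_q}(x)\|_q \le \epsilon\|x\|_q$, is closed exactly by the ``bound each term'' option you mention: $(|x_i|-(1-\epsilon)\|x\|_q)_+ \le (|x_i|-(1-\epsilon)|x_i|)_+=\epsilon|x_i|$ because $\|x\|_q\ge |x_i|$, which is precisely the rewriting the paper uses.
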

\begin{remark}
Similarly, one can show that for $0\leq q<1$, we have $\|\cdot\|_\infty\leq \|\cdot\|_{\epsilon q}\leq \|\cdot\|_1\leq \|\cdot\|_q$. Figure \ref{varq_ell} gives an illustrations of the norm inequalities. 
\end{remark}
\begin{figure}
  \centering
  \begin{subfigure}{0.32\textwidth}
    \centering
    \includegraphics[width=1\linewidth]{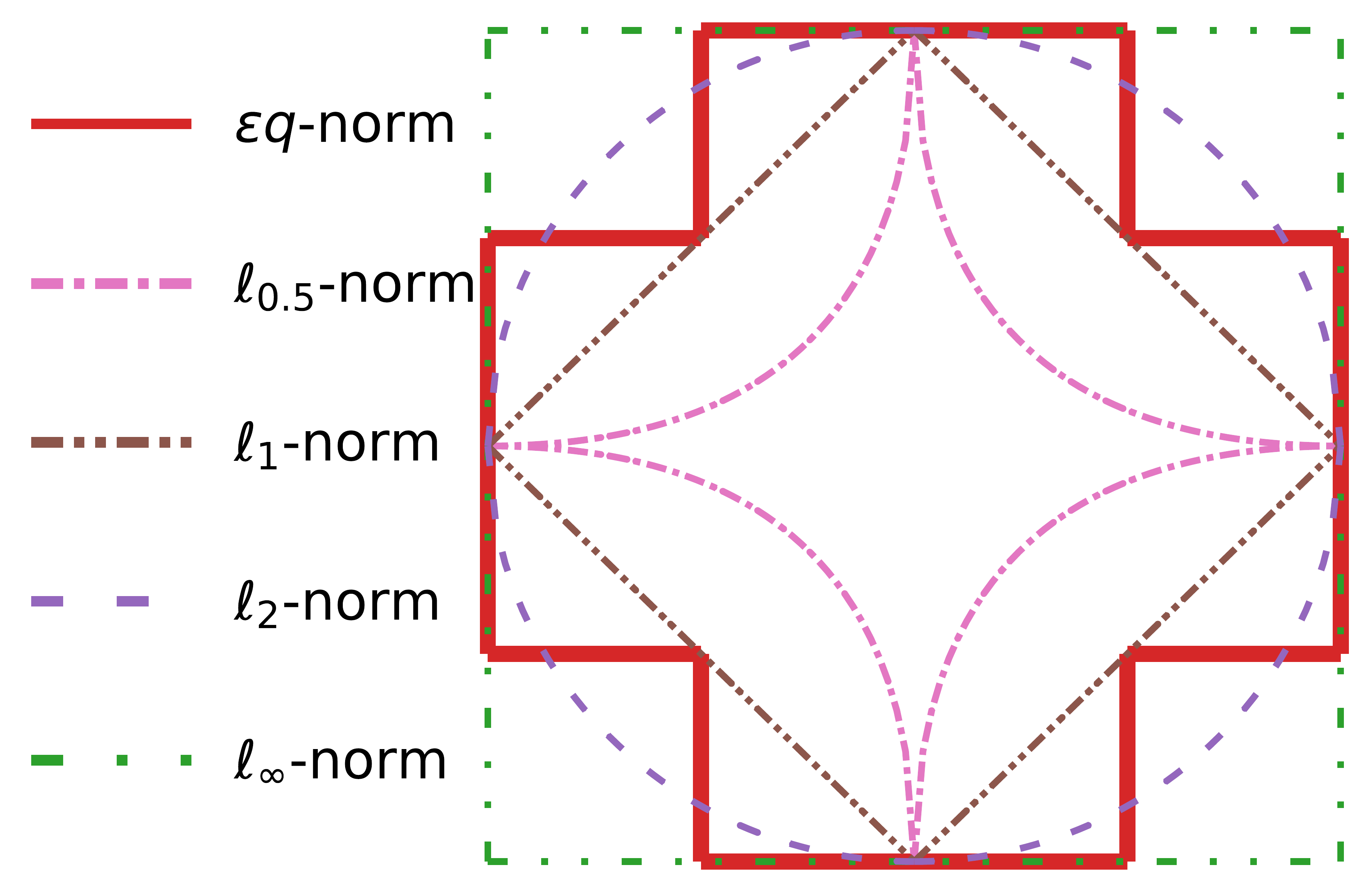}
    \caption{$q=0,\epsilon=0.5$}  
  \end{subfigure}
  \begin{subfigure}{\sublengthtwo\textwidth}
    \centering
    \includegraphics[width=1\linewidth]{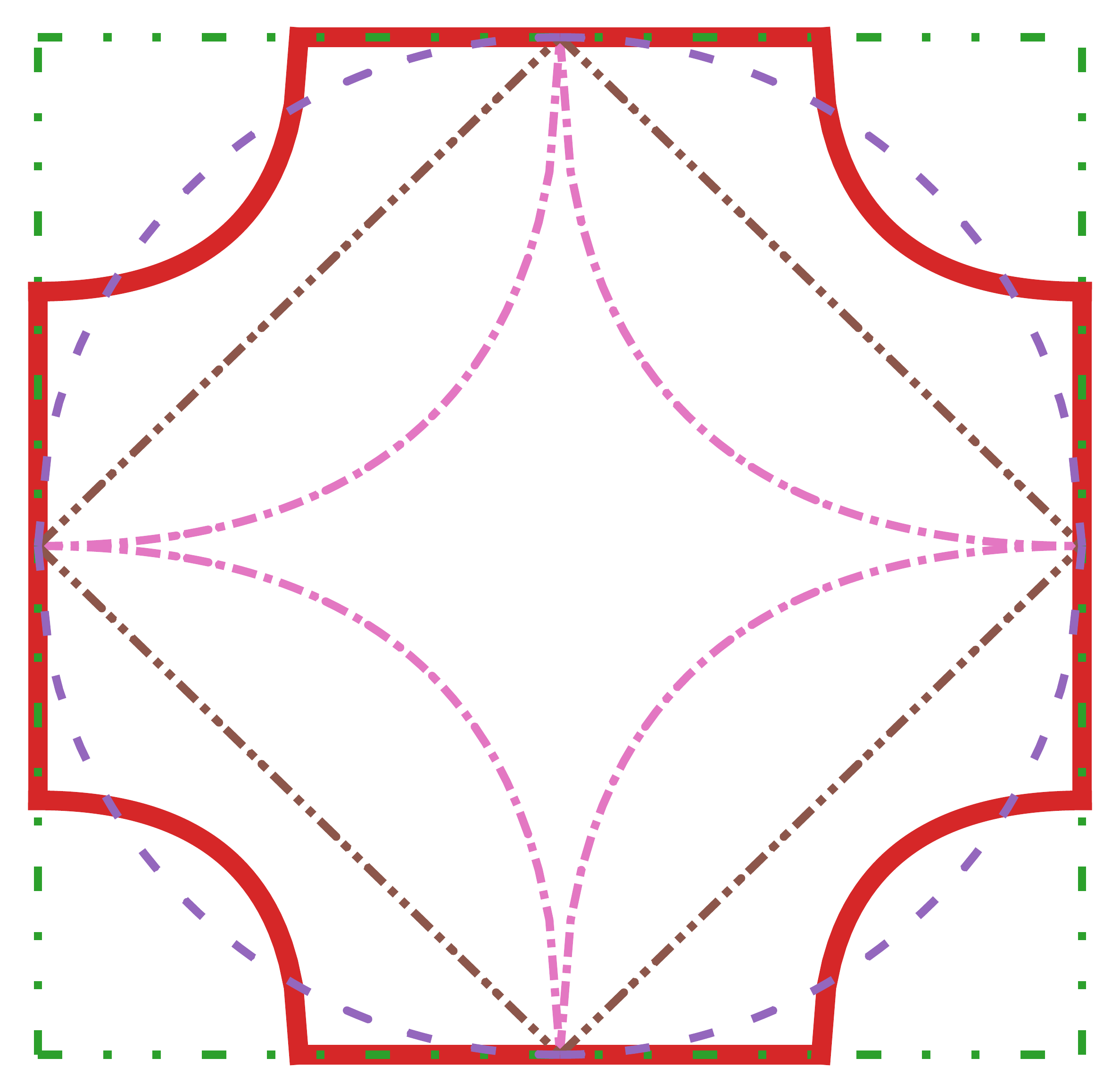}
    \caption{$q=0.5,\epsilon=0.5$}  
  \end{subfigure}
  \begin{subfigure}{\sublengthtwo\textwidth}
    \centering
    \includegraphics[width=1\linewidth]{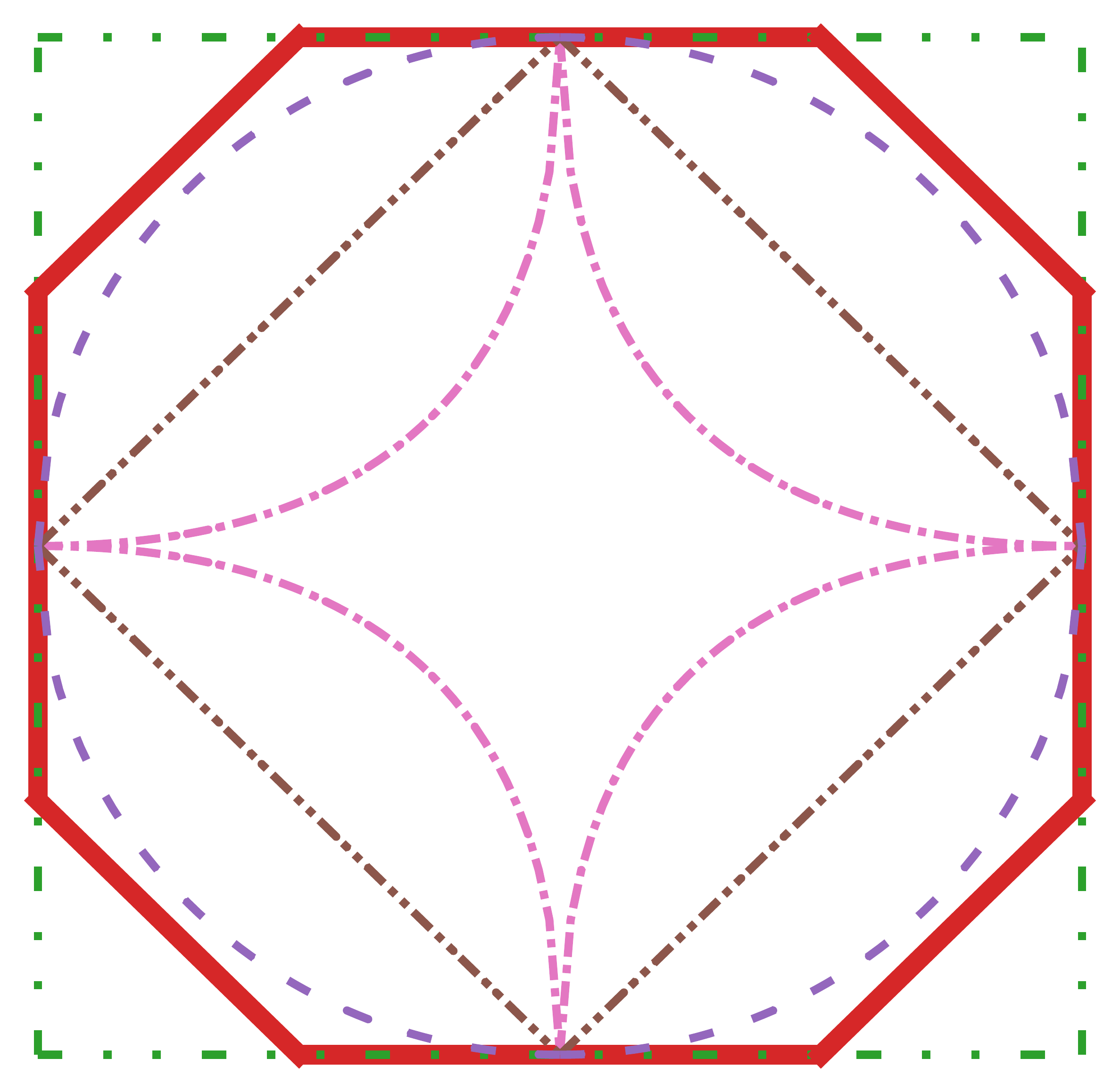}
    \caption{$q=1,\epsilon=0.5$}  
  \end{subfigure}  
  \begin{subfigure}{\sublengthtwo\textwidth}
    \centering
    \includegraphics[width=1\linewidth]{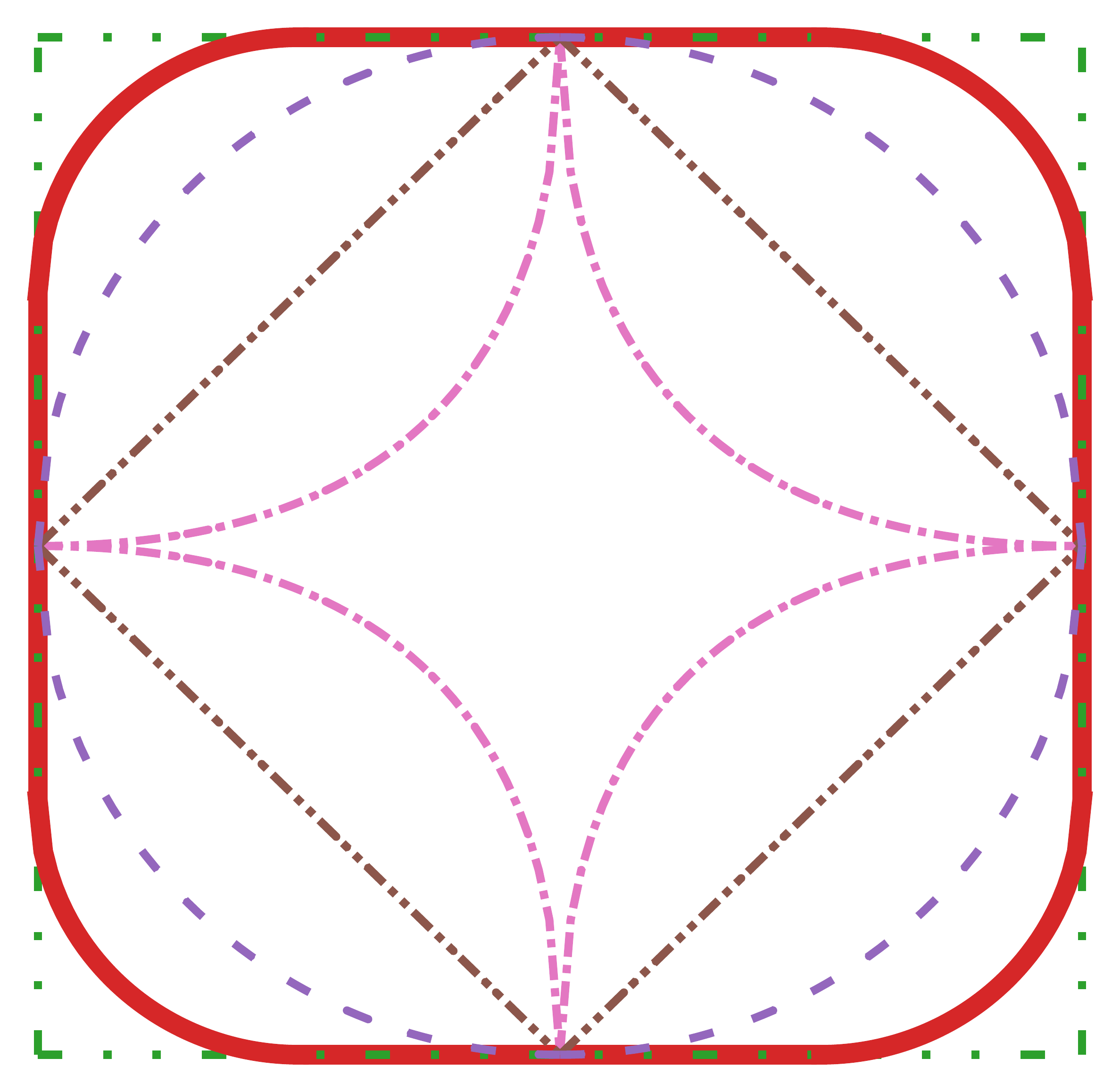}
    \caption{$q=2,\epsilon=0.5$}  
  \end{subfigure}  
  \caption{Unit ball of $\|\cdot\|_{\epsilon q}$ and $\ell_q$. The plots show the set inclusion relationship between $\|\cdot\|_{\epsilon q}$ and various $\ell_q$ unit ball.}
  \label{varq_ell}
  \end{figure}

Given $\epsilon q$-norm's relation with $\ell_q$-norm, it turns out that any vector $x$ is an addition of two vectors such that $\|x\|_{\epsilon q}$ is a convex combination (with coefficient $\epsilon$) of $\epsilon q$-norm from each of the two vectors.
\begin{lemma}[$\varepsilon$-decomposition]\label{lemma_decomposition}
Any vector $x\in \mathbb{R}^p$ can be uniquely decomposed and written in the form 
\begin{align}
x = x^{(\epsilon,q)}+x^{(1-\epsilon,q)} 
\end{align}
with $x^{(\epsilon,q)},x^{(1-\epsilon,q)}\in \mathbb{R}^p$ such that 
\begin{equation}
\begin{aligned}
\|x^{(\epsilon,q)}\|_q&=\epsilon\|x\|_{\epsilon q},\\
\|x^{(1-\epsilon,q)}\|_\infty &= (1-\epsilon)\|x\|_{\epsilon q}.
\end{aligned} 
\end{equation}
In addition, we have 
\begin{align}\label{setrep}
\{x\in\mathbb{R}^p:\|x\|_{\epsilon q}\leq \nu\}= \{u+v:u,v\in\mathbb{R}^p,\|u\|_q\leq \epsilon \nu,\|v\|_\infty\leq (1-\epsilon) \nu\}. 
\end{align}
\end{lemma}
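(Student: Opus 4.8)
The plan is to write the decomposition explicitly through soft-thresholding, read off the two norm identities directly from the defining equation~(\ref{defepsq}), and then deduce both uniqueness and the Minkowski-sum identity~(\ref{setrep}) from one coordinate-wise comparison. Throughout I treat $q<\infty$; the case $q=\infty$ is identical with $\|\cdot\|_\infty$ in place of $\|\cdot\|_q^q$ in~(\ref{defepsq}). Set $v:=\|x\|_{\epsilon q}$ and put $x^{(\epsilon,q)}:=S_{(1-\epsilon)v}(x)$ and $x^{(1-\epsilon,q)}:=x-x^{(\epsilon,q)}$, so that coordinate-wise $(x^{(1-\epsilon,q)})_i=\operatorname{sgn}(x_i)\min\{|x_i|,(1-\epsilon)v\}$. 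The identity $\|x^{(\epsilon,q)}\|_q=\epsilon v$ is exactly~(\ref{defepsq}). For the second identity, $\|x^{(1-\epsilon,q)}\|_\infty=\max_i\min\{|x_i|,(1-\epsilon)v\}$ equals $(1-\epsilon)v$ provided some coordinate has $|x_i|\ge(1-\epsilon)v$; this is forced whenever $x\neq 0$, since otherwise $S_{(1-\epsilon)v}(x)=0$ while $\epsilon v>0$ (here $v>0$ because $\|\cdot\|_{\epsilon q}$ is a norm by Lemma~\ref{lemma_isnorm}), contradicting~(\ref{defepsq}); the degenerate case $x=0$ gives $v=0$ and both pieces vanish.

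For uniqueness, suppose $x=u+w$ with $\|u\|_q=\epsilon v$ and $\|w\|_\infty=(1-\epsilon)v$. Then $|w_i|\le(1-\epsilon)v$ for every $i$, hence $|u_i|=|x_i-w_i|\ge|x_i|-|w_i|\ge(|x_i|-(1-\epsilon)v)_+$; raising to the $q$-th power and summing gives $\|u\|_q^q\ge\|S_{(1-\epsilon)v}(x)\|_q^q=(\epsilon v)^q=\|u\|_q^q$, so every inequality above is an equality, coordinate by coordinate. On coordinates with $|x_i|>(1-\epsilon)v$ the tight reverse triangle inequality $|x_i-w_i|=|x_i|-|w_i|$ together with $|w_i|=(1-\epsilon)v$ forces $w_i=\operatorname{sgn}(x_i)(1-\epsilon)v$ and $u_i=\operatorname{sgn}(x_i)(|x_i|-(1-\epsilon)v)$, while on the remaining coordinates $|u_i|=0$ gives $u_i=0$ and $w_i=x_i$; this is precisely the soft-thresholding decomposition, so it is unique.

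For~(\ref{setrep}), inclusion ``$\subseteq$'' is immediate from the construction: if $\|x\|_{\epsilon q}\le\nu$ then $\|x^{(\epsilon,q)}\|_q=\epsilon\|x\|_{\epsilon q}\le\epsilon\nu$ and $\|x^{(1-\epsilon,q)}\|_\infty\le(1-\epsilon)\nu$. For ``$\supseteq$'', if $x=u+w$ with $\|u\|_q\le\epsilon\nu$ and $\|w\|_\infty\le(1-\epsilon)\nu$, the same coordinate-wise bound yields $\sum_i(|x_i|-(1-\epsilon)\nu)_+^q\le\|u\|_q^q\le(\epsilon\nu)^q$, i.e.\ $\psi(\nu)\le 0$ where $\psi(t):=\|S_{(1-\epsilon)t}(x)\|_q^q-(\epsilon t)^q$. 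Since $\psi$ is continuous and strictly decreasing on $[0,\infty)$ (a non-increasing term minus a strictly increasing one) with unique zero at $\|x\|_{\epsilon q}$ by~(\ref{defepsq}) and Lemma~\ref{lemma_isnorm}, $\psi(\nu)\le 0$ forces $\|x\|_{\epsilon q}\le\nu$.

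The step I expect to be the main obstacle is the uniqueness argument: extracting the chain of equalities is routine, but one must then use the equality case of the triangle inequality carefully enough to pin down the \emph{signs} (not merely the magnitudes) of $u_i$ and $w_i$, and to check that nothing degenerates on coordinates where $|x_i|=(1-\epsilon)v$ or $x_i=0$; the remaining verifications (monotonicity of $\psi$, the $q=\infty$ variant) are straightforward.
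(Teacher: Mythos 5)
Your proposal is correct and follows essentially the same route as the paper: the same explicit soft-thresholding decomposition $x^{(\epsilon,q)}=S_{(1-\epsilon)\|x\|_{\epsilon q}}(x)$, with uniqueness and the Minkowski-sum identity both obtained by comparing $|u_i|$ coordinate-wise against $(|x_i|-(1-\epsilon)v)_+$. Your closing of the reverse inclusion via monotonicity of $\psi(t)=\|S_{(1-\epsilon)t}(x)\|_q^q-(\epsilon t)^q$ is a slightly cleaner bookkeeping device than the paper's contradiction argument built on the superadditivity bound $(a+b)^q\geq a^q+b^q$, but the underlying mechanism is the same.
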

\begin{remark}
As a matter of fact, the results also hold true for $0\leq q<1$. See the supplement for a detailed proof for all $q\geq 0$.
\end{remark}
\begin{remark}
The set representation (\ref{setrep}) gives a geometric interpretation for $\|\cdot\|_{\epsilon q}$ norm. The unit ball of $\|\cdot\|_{\epsilon q}$ (where $\nu=1$) can be written as a Minkowski addition of $\ell_q$-norm and $\ell_\infty$-norm balls of the radius $\epsilon$ and $1-\epsilon$ respectively. Figure \ref{eqillustration} shows the unit ball of $\|\cdot\|_{\epsilon q}$ for various choices of $\epsilon$ and $q$. 
\end{remark}

The $\epsilon$-decomposition gives a nice corollary for the dual norm of $\|\cdot\|_{\epsilon q}$ and the resulting norm allows us to analyze the double sparsity regularization. Basically, the dual norm can be considered as a convex combination (with coefficient $\epsilon$) of $\ell_{q^*}$ and $\ell_1$-norm where $q^*=q/(q-1)$.
\begin{figure}
  \centering
  \begin{subfigure}{\sublength\textwidth}
    \centering
    \includegraphics[width=1\linewidth]{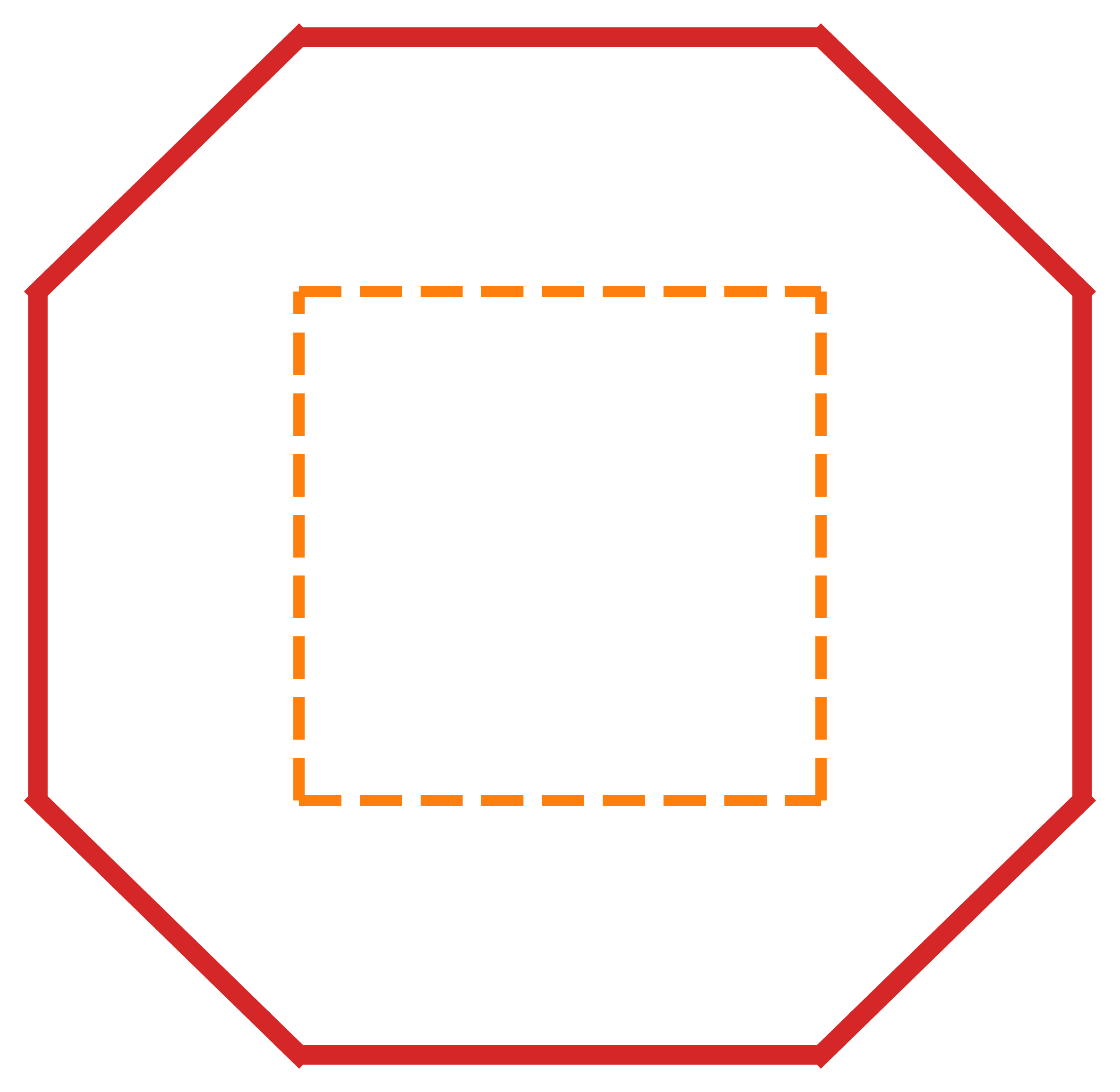}
    \caption{$q=1,\epsilon=0.5$}  
  \end{subfigure}
  \begin{subfigure}{\sublength\textwidth}
    \centering
    \includegraphics[width=1\linewidth]{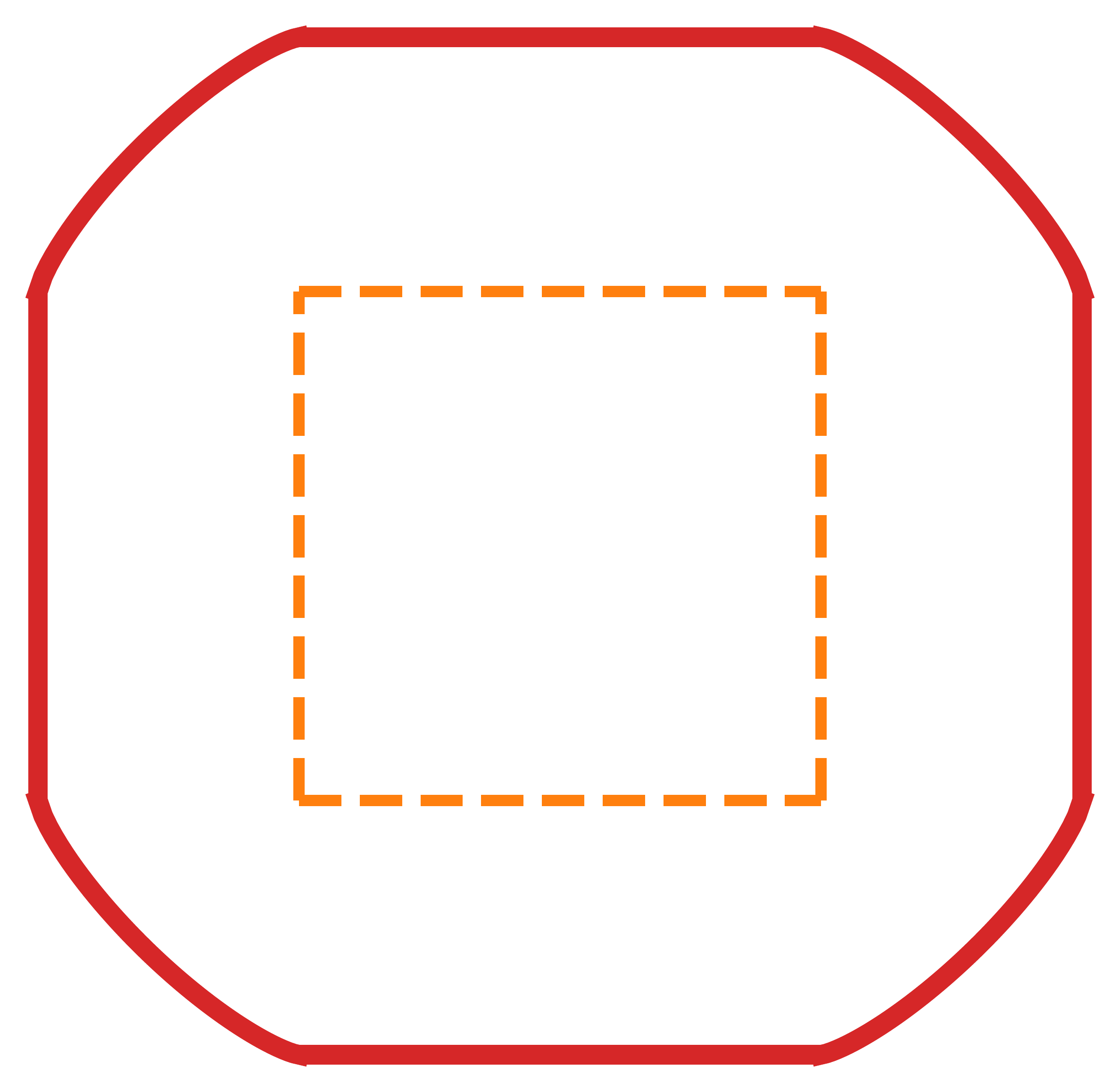}
    \caption{$q=1.3,\epsilon=0.5$}  
  \end{subfigure}
  \begin{subfigure}{\sublength\textwidth}
    \centering
    \includegraphics[width=1\linewidth]{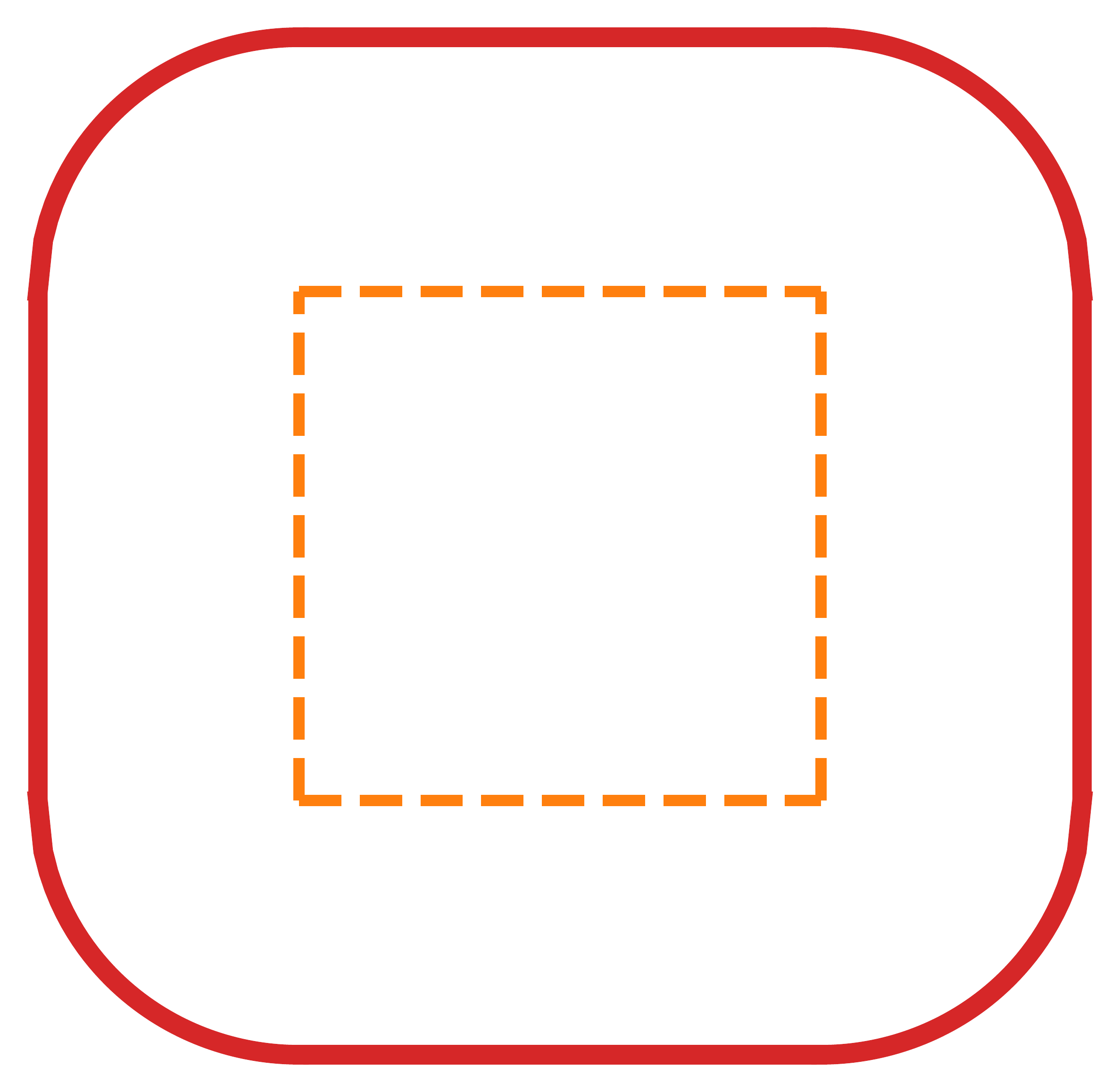}
    \caption{$q=2,\epsilon=0.5$}  
  \end{subfigure}  
  \begin{subfigure}{\sublength\textwidth}
    \centering
    \includegraphics[width=1\linewidth]{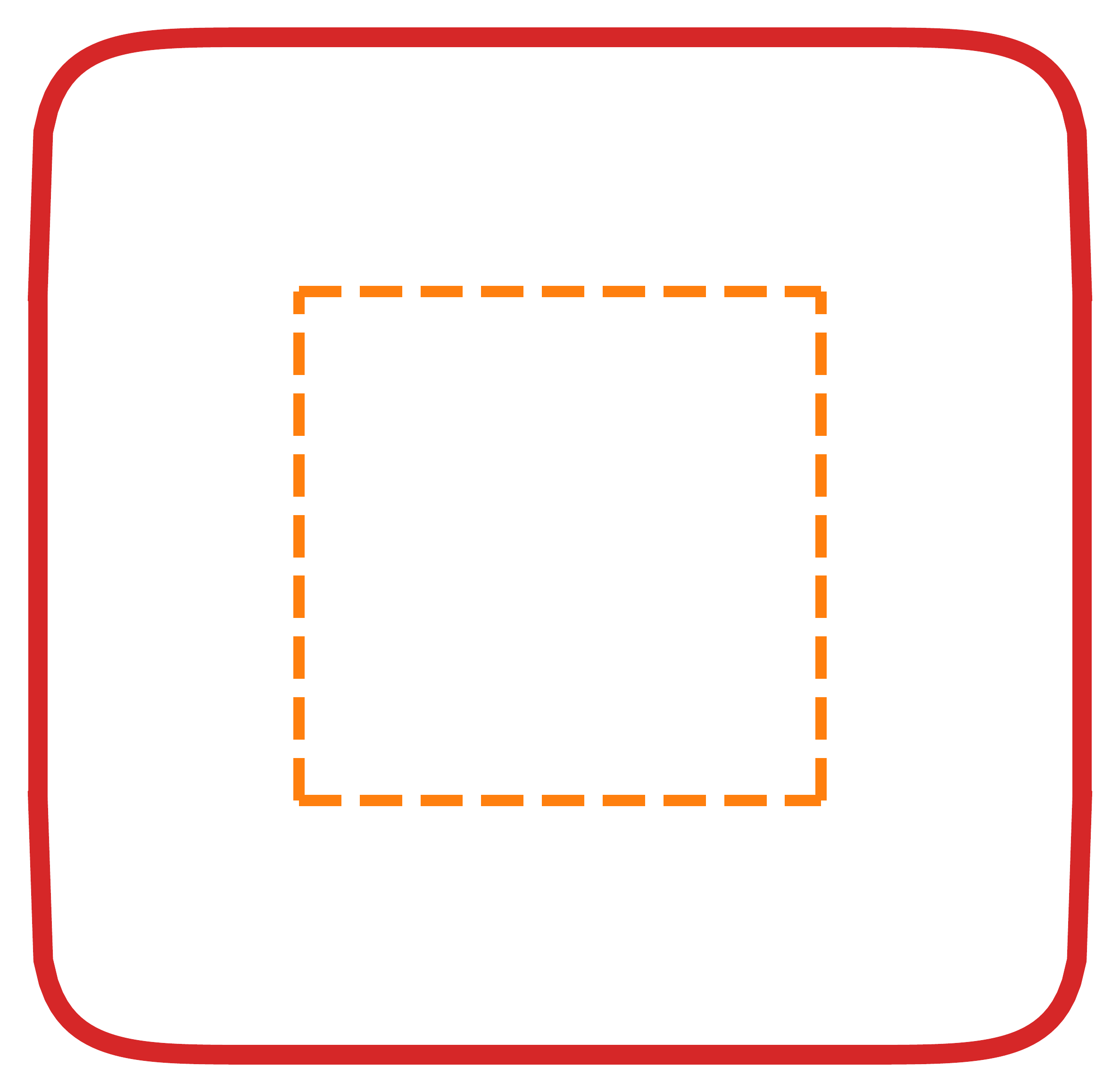}
    \caption{$q=5,\epsilon=0.5$}  
  \end{subfigure}  
  \begin{subfigure}{\sublength\textwidth}
    \centering
    \includegraphics[width=1\linewidth]{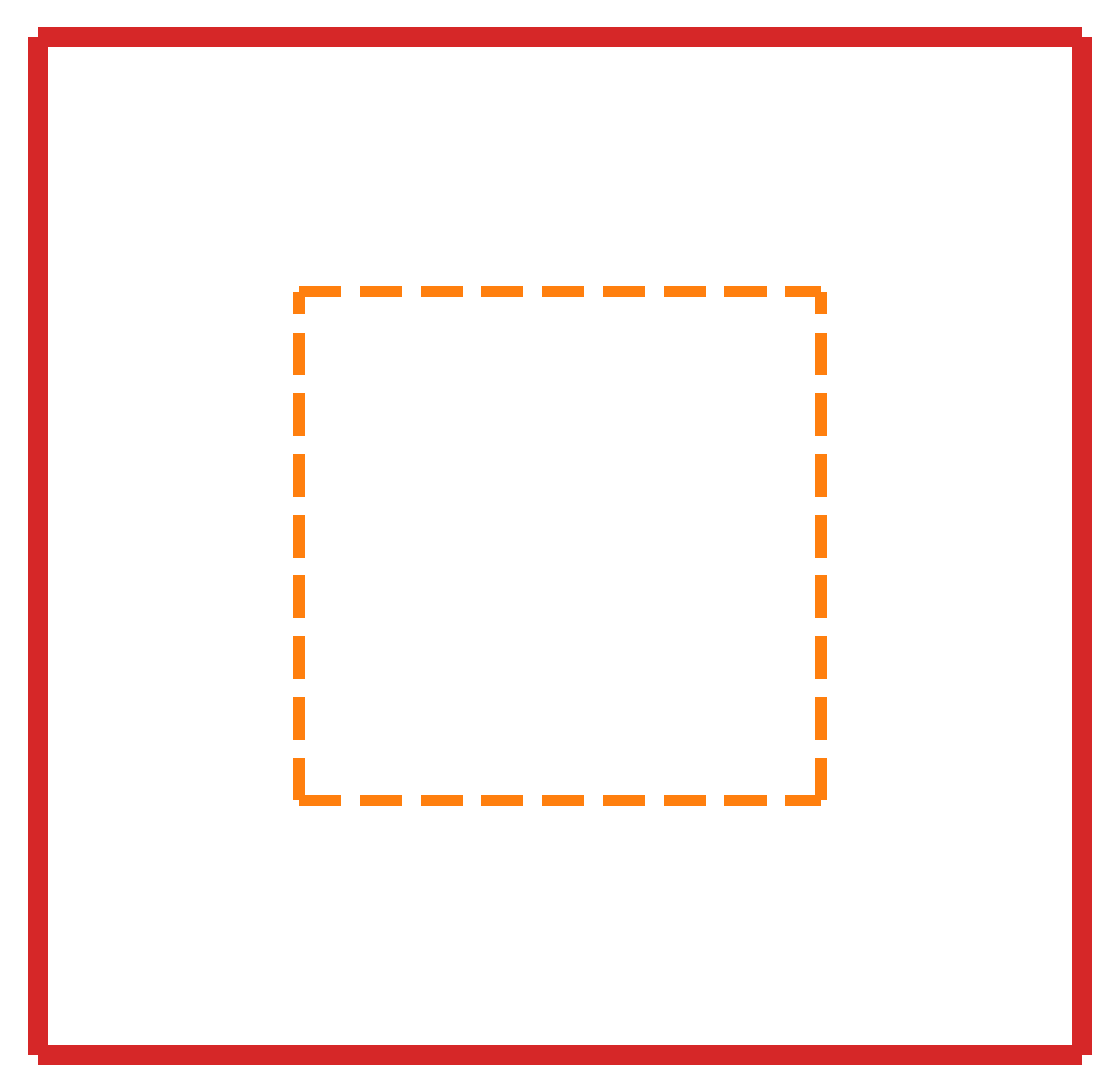}
    \caption{$q=\infty,\epsilon=0.5$}  
  \end{subfigure}  
  \begin{subfigure}{\sublength\textwidth}
    \centering
    \includegraphics[width=1\linewidth]{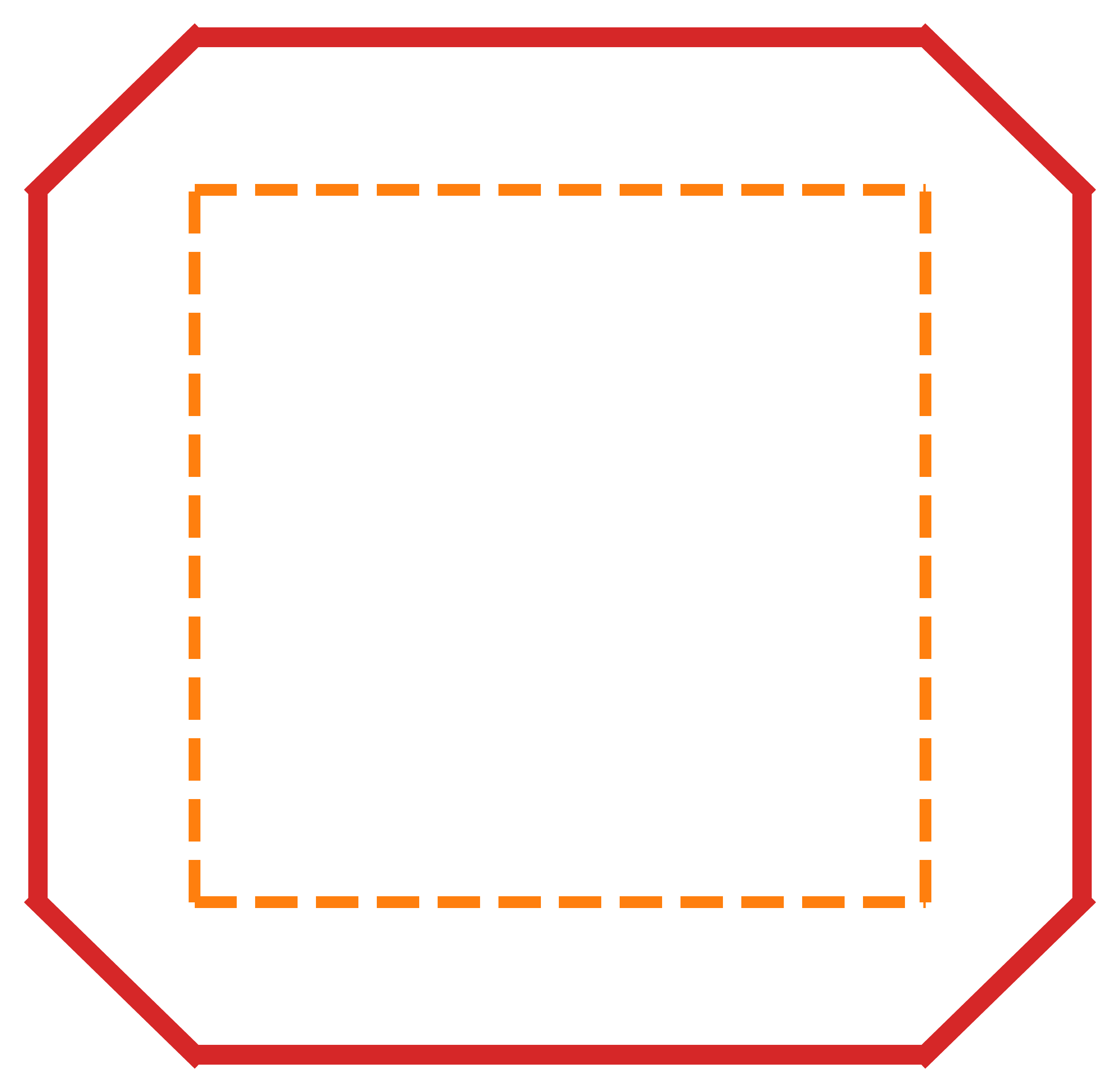}
    \caption{$\epsilon=0.3,q=1$}  
  \end{subfigure}  
  \begin{subfigure}{\sublength\textwidth}
    \centering
    \includegraphics[width=1\linewidth]{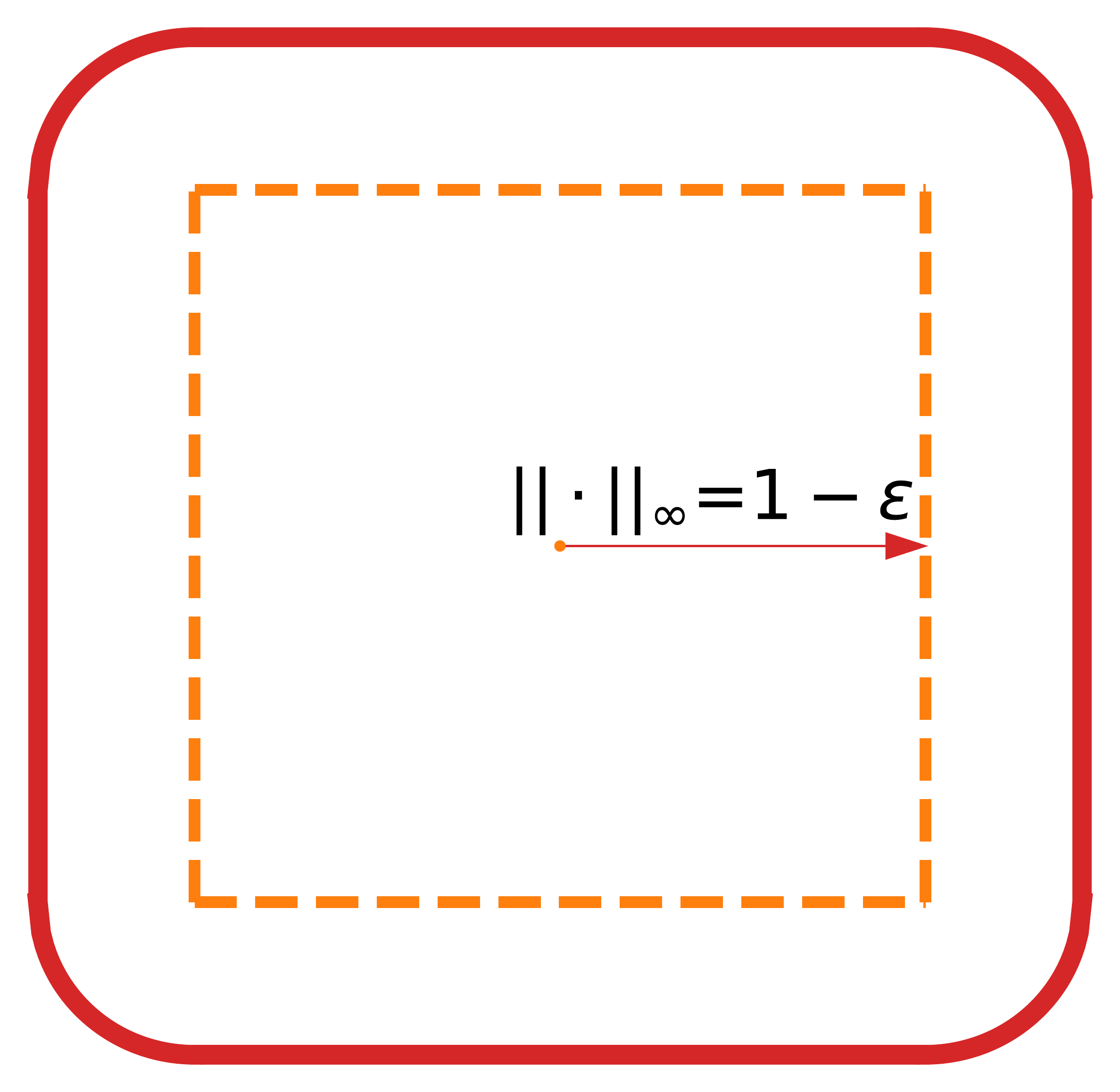}
    \caption{$\epsilon=0.3,q=2$}  
  \end{subfigure}
  \begin{subfigure}{\sublength\textwidth}
    \centering
    \includegraphics[width=1\linewidth]{q=2_epsilon=5}
    \caption{$\epsilon=0.5,q=2$}  
  \end{subfigure}
  \begin{subfigure}{\sublength\textwidth}
    \centering
    \includegraphics[width=1\linewidth]{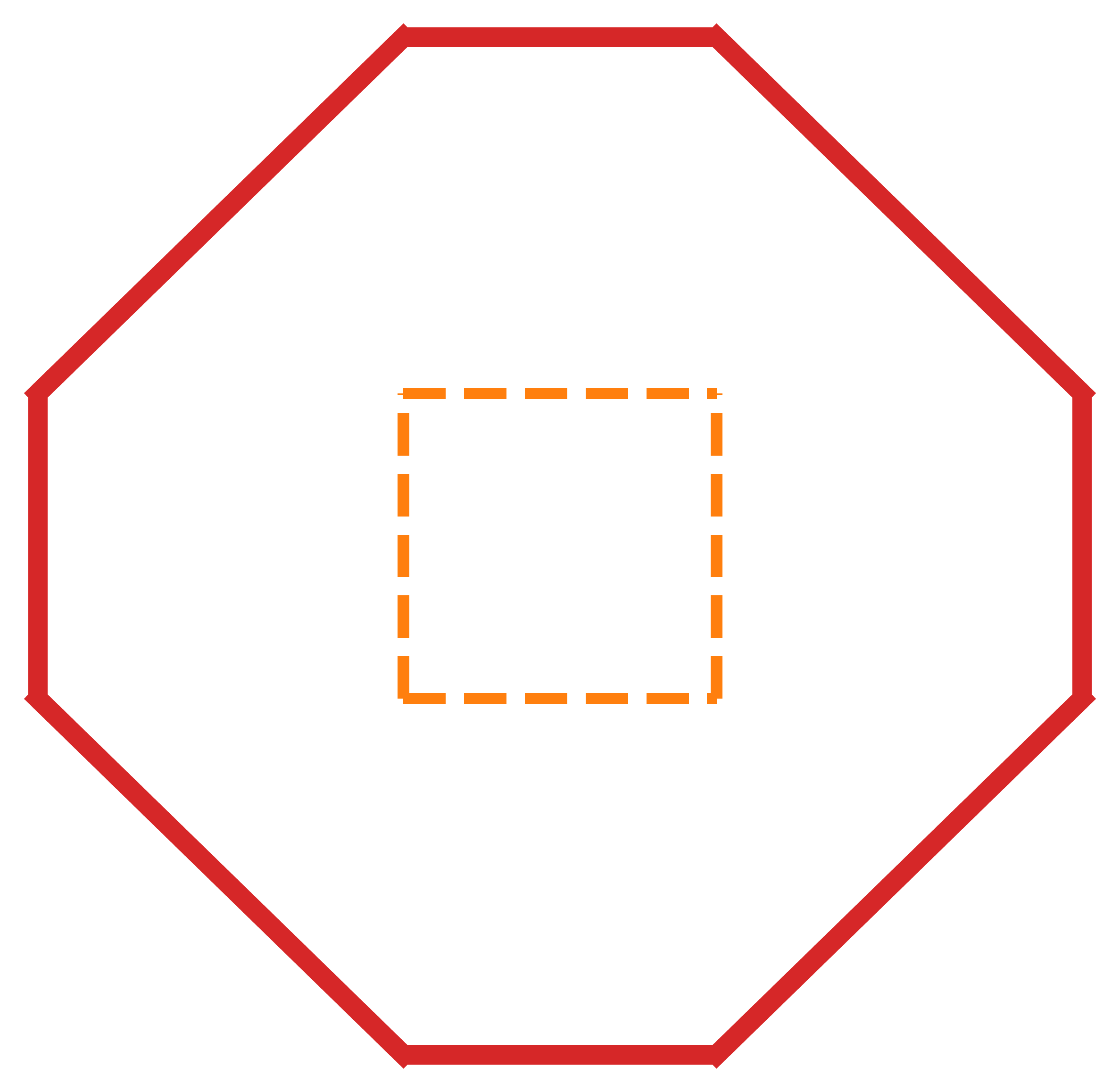}
    \caption{$\epsilon=0.7,q=1$}  
  \end{subfigure}  
  \begin{subfigure}{\sublength\textwidth}
    \centering
    \includegraphics[width=1\linewidth]{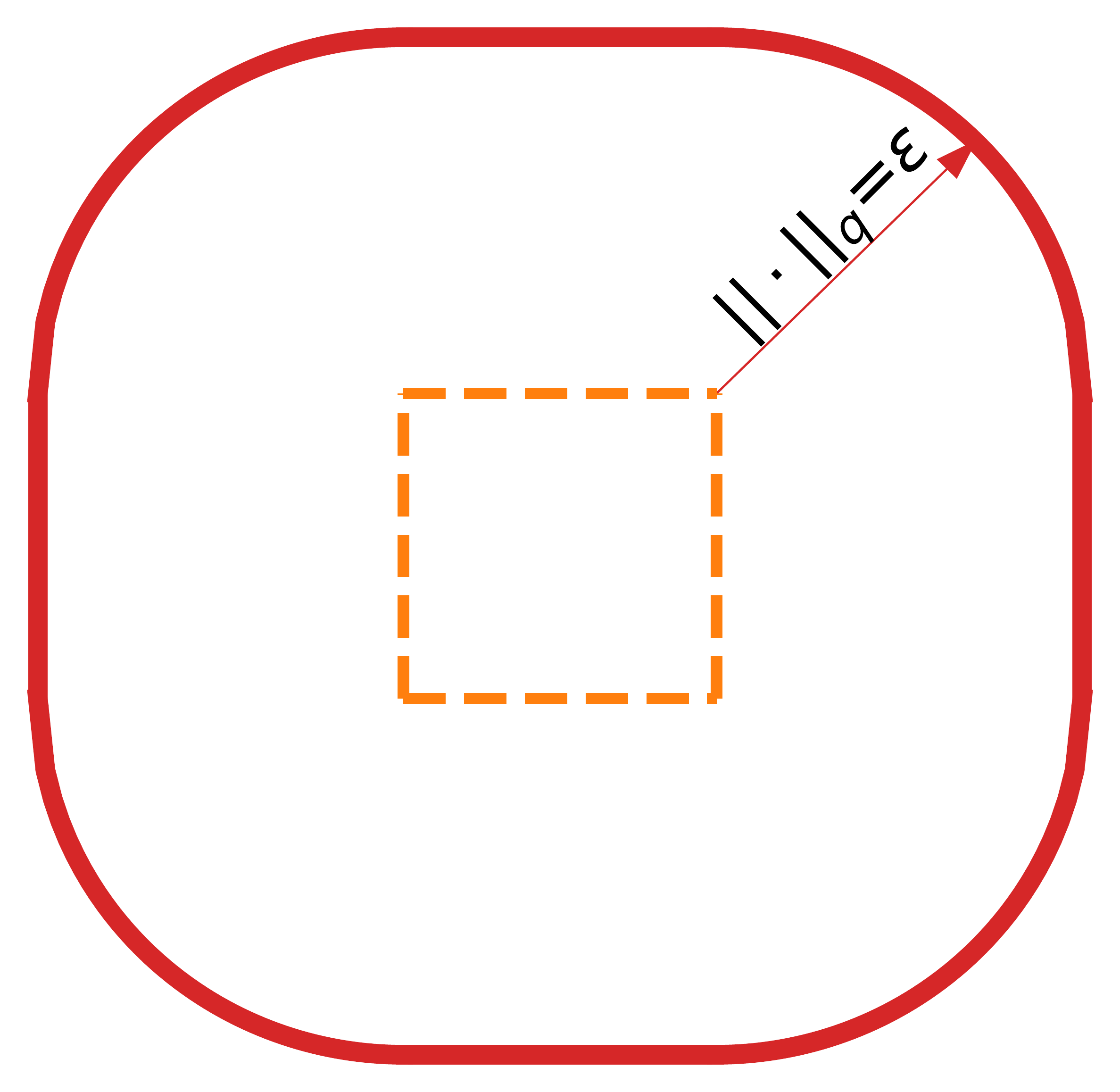}
    \caption{$\epsilon=0.7,q=2$}  
  \end{subfigure}  

  \caption{Unit ball in $\epsilon q$-norm for various choices of $(\epsilon,q)$. Each set is a Minkowski sum of $\ell_q$ ball of radius $\epsilon$ and $\ell_\infty$ ball of radius $1-\epsilon$, i.e. $\{x\in\mathbb{R}^p|\|x\|_{\epsilon q}\leq 1\}=\{a+b|\|a\|_q\leq \varepsilon,\|b\|_\infty\leq 1-\varepsilon \}$.}
\label{eqillustration}
\end{figure}

\begin{lemma}\label{lemma_dual_norm_primitive}
For any $\epsilon\in (0,1]$ and $q\in [1,\infty)$, the norm dual of the $\epsilon q$-norm is given by 
\begin{alignat}{2}
\|y\|_{\epsilon q}^*=\begin{cases}\epsilon\|y\|_{q/(q-1)}+(1-\epsilon)\|y\|_{1},\quad &q<\infty,\\
\|y\|_{1}, &q=\infty\end{cases}.
\end{alignat}
\end{lemma}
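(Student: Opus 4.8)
The plan is to compute the dual norm straight from its definition, $\|y\|_{\epsilon q}^* = \sup\{\langle x,y\rangle : \|x\|_{\epsilon q}\le 1\}$, using the $\varepsilon$-decomposition of Lemma~\ref{lemma_decomposition} as the only real input. The crucial step is to invoke the set representation~(\ref{setrep}) with $\nu=1$, which identifies the unit ball of $\|\cdot\|_{\epsilon q}$ with the Minkowski sum $\{u+v : \|u\|_q\le\epsilon,\ \|v\|_\infty\le 1-\epsilon\}$. Substituting this description of the feasible set into the supremum gives
\begin{align}
\|y\|_{\epsilon q}^* = \sup_{\substack{\|u\|_q\le\epsilon\\ \|v\|_\infty\le 1-\epsilon}} \langle u+v,\, y\rangle .
\end{align}

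Next I would separate the two contributions. Because $\langle\cdot,y\rangle$ is linear and the constraints on $u$ and $v$ decouple, the supremum over the product set factors as $\sup_{\|u\|_q\le\epsilon}\langle u,y\rangle + \sup_{\|v\|_\infty\le 1-\epsilon}\langle v,y\rangle$; this is just the statement that the support function of a Minkowski sum is the sum of the support functions. Each term is a textbook instance of Hölder duality: the support function of the $\ell_q$-ball of radius $\epsilon$ is $\epsilon\|y\|_{q^*}$ with $q^*=q/(q-1)$ the conjugate exponent, and the support function of the $\ell_\infty$-ball of radius $1-\epsilon$ is $(1-\epsilon)\|y\|_1$, since the dual of $\ell_\infty$ is $\ell_1$. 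Adding these yields $\|y\|_{\epsilon q}^* = \epsilon\|y\|_{q^*}+(1-\epsilon)\|y\|_1$ for $q<\infty$. As sanity checks, this collapses to $\|y\|_{q^*}$ at $\epsilon=1$ (where $\|\cdot\|_{\epsilon q}=\|\cdot\|_q$ by Lemma~\ref{lemma_tightbound}) and reproduces Burdakov's formula at $q=2$.

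Finally I would dispose of the case $q=\infty$. Here the two bounds of Lemma~\ref{lemma_tightbound} with $q=\infty$ both reduce to $\|x\|_\infty$ (since $p^{1/q}\to 1$), so $\|\cdot\|_{\epsilon\infty}=\|\cdot\|_\infty$ identically and its dual is $\|\cdot\|_1$; equivalently one may pass to the limit $q\to\infty$, $q^*\to 1$, in the formula just derived, whereupon the two terms merge into $\|y\|_1$.

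I do not anticipate a genuine obstacle: once Lemma~\ref{lemma_decomposition} is in hand the argument is essentially one line of convex analysis. The only point needing a little care is that the decomposition $x=u+v$ is not unique for $x$ in the interior of the ball, but this is harmless because we only use the equality of the two sets in~(\ref{setrep}), not a bijection; and since both sets are compact the suprema above are attained, so the same computation also produces explicit maximizers if one wants them.
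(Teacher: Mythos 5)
Your proposal is correct and follows essentially the same route as the paper: both compute $\|y\|_{\epsilon q}^*$ as the support function of the unit ball, rewrite that ball via the Minkowski-sum representation~(\ref{setrep}) from Lemma~\ref{lemma_decomposition}, split the supremum over the decoupled constraints, and finish with H\"older duality on each piece. Your additional remarks on the $q=\infty$ case and on attainment are fine but not needed; the paper simply labels $q=\infty$ as trivial.
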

From Lemma \ref{lemma_dual_norm_primitive} we shall easily get two special cases: (1) $\|y\|_{\epsilon 2}^*=\epsilon\|y\|_2+(1-\epsilon)\|y\|_1$; (2) $\|y\|_{\epsilon 1}^*=\epsilon\|y\|_\infty+(1-\epsilon)\|y\|_1$. It is worth noting that every finite-dimensional normed space is reflexive, implying the dual of the dual norm is the norm itself. It also holds true for $\|\cdot\|_\ds$ and its dual in the lemma below.
\begin{lemma}\label{lemma_dual_norm}
First denote $\epsilon_g:=\frac{(1-\tau)w_g}{\tau+(1-\tau)w_g},\forall g\in\{1,...,G\}$ and $\alpha^*=(\alpha^*_1,...,\alpha^*_G)$ where $\alpha^*_g=\frac{\alpha_g}{\alpha_g-1}$. If $\alpha_g=\infty,\alpha_g^*=1$. The double sparsity norm $\|\cdot\|_\ds$ satisfies the following properties: $\forall \beta\in\mathbb{R}^p,x\in \mathbb{R}^p$,
\begin{align}
\|\beta\|_\ds =\sum\limits_{g\in \{1,...,G\}}(\tau+(1-\tau)w_g)\|\beta_{(g)}\|_{\epsilon_g \alpha_g^*}^*, \quad \textup{and}\quad  \|x\|_\ds^* = \max\limits_{g\in \{1,...,G\}}\frac{\|x_{(g)}\|_{\epsilon_g \alpha_g^*}}{\tau+(1-\tau)w_g}.
\end{align}
\end{lemma}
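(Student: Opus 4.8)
The plan is to reduce both claims to the fact that $\|\cdot\|_\ds$ is \emph{block-separable} over the group partition, and then to invoke Lemma~\ref{lemma_dual_norm_primitive}. Since $\{1,\dots,p\}$ is partitioned into the groups $(1),\dots,(G)$ we have $\|\beta\|_1=\sum_{g=1}^G\|\beta_{(g)}\|_1$, hence
\[
\|\beta\|_\ds=\sum_{g=1}^G\Big(\tau\|\beta_{(g)}\|_1+(1-\tau)w_g\|\beta_{(g)}\|_{\alpha_g}\Big)=:\sum_{g=1}^G N_g(\beta_{(g)}),
\]
where each $N_g$ is a norm on $\mathbb{R}^{p_g}$. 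For the first identity I fix $g$, write $c_g:=\tau+(1-\tau)w_g$ so that $\epsilon_g=(1-\tau)w_g/c_g$ and $1-\epsilon_g=\tau/c_g$, and note that $\alpha_g$ and $\alpha_g^*$ are conjugate exponents, i.e. $(\alpha_g^*)^*=\alpha_g$ with the convention $1\leftrightarrow\infty$. Applying Lemma~\ref{lemma_dual_norm_primitive} with $q=\alpha_g^*$ gives
\[
\|\beta_{(g)}\|_{\epsilon_g\alpha_g^*}^*=\epsilon_g\|\beta_{(g)}\|_{\alpha_g}+(1-\epsilon_g)\|\beta_{(g)}\|_1,
\]
so that $c_g\|\beta_{(g)}\|_{\epsilon_g\alpha_g^*}^*=(1-\tau)w_g\|\beta_{(g)}\|_{\alpha_g}+\tau\|\beta_{(g)}\|_1=N_g(\beta_{(g)})$; summing over $g$ gives the first identity. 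The boundary values $\alpha_g\in\{1,\infty\}$ and $\tau\in\{0,1\}$ (i.e. $\epsilon_g\in\{0,1\}$) are covered by the same computation using the limiting conventions recorded after Lemma~\ref{lemma_isnorm} and in Lemma~\ref{lemma_dual_norm_primitive}; in particular when $\tau=1$ one has $\|\cdot\|_\ds=\|\cdot\|_1$ and the claim is immediate.

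For the dual-norm identity I use the standard principle that the dual of a sum of norms acting on disjoint coordinate blocks is the maximum of the blockwise dual norms. Explicitly, with $\|\beta\|_\ds=\sum_g N_g(\beta_{(g)})$ as above, for any $x\in\mathbb{R}^p$
\[
\|x\|_\ds^*=\sup_{\|\beta\|_\ds\le 1}\langle x,\beta\rangle=\sup\Big\{\sum_{g}t_g\,N_g^*(x_{(g)}):t_g\ge 0,\ \sum_g t_g\le 1\Big\}=\max_{g}N_g^*(x_{(g)}),
\]
where the middle equality follows by optimizing over each block at fixed budget $t_g=N_g(\beta_{(g)})$ (using $\sup_{N_g(u)=t_g}\langle x_{(g)},u\rangle=t_g N_g^*(x_{(g)})$), and the last equality is the value of the resulting linear program in $(t_g)_g$, attained by putting the whole budget on the block maximizing $N_g^*(x_{(g)})$. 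It remains to identify $N_g^*$: since $N_g=c_g\|\cdot\|_{\epsilon_g\alpha_g^*}^*$ with $c_g>0$ and since every norm on a finite-dimensional space is reflexive, $N_g^*=c_g^{-1}\big(\|\cdot\|_{\epsilon_g\alpha_g^*}^*\big)^*=c_g^{-1}\|\cdot\|_{\epsilon_g\alpha_g^*}$, where $\|\cdot\|_{\epsilon_g\alpha_g^*}$ is a genuine norm by Lemma~\ref{lemma_isnorm} (or, for $\epsilon_g=0$, equals $\ell_\infty$ by convention). Substituting yields $\|x\|_\ds^*=\max_g \|x_{(g)}\|_{\epsilon_g\alpha_g^*}/c_g$, which is the asserted formula.

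The only genuinely substantive step is the block-separation formula for the dual norm — specifically, verifying that at a fixed budget allocation $(t_g)$ the supremum over block $g$ is exactly $t_g N_g^*(x_{(g)})$ and that the outer maximization over $(t_g)$ with $\sum_g t_g\le 1$ is solved by concentrating all mass on the largest $N_g^*(x_{(g)})$. Everything else is exponent bookkeeping together with the substitutions $1-\epsilon_g=\tau/c_g$ and $(\alpha_g^*)^*=\alpha_g$; the one thing to watch is handling the limiting cases $\tau\in\{0,1\}$ and $\alpha_g\in\{1,\infty\}$ consistently with the conventions already fixed for the $\epsilon q$-norm.
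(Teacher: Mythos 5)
Your proof is correct, and the first identity is handled exactly as in the paper: split $\|\beta\|_1$ over the blocks, factor out $c_g=\tau+(1-\tau)w_g$, and recognize $(1-\epsilon_g)\|\cdot\|_1+\epsilon_g\|\cdot\|_{\alpha_g}$ as $\|\cdot\|_{\epsilon_g\alpha_g^*}^*$ via Lemma~\ref{lemma_dual_norm_primitive}. Where you diverge is the dual-norm half. The paper derives $\|x\|_\ds^*$ by a Lagrangian computation: it writes the constrained supremum as a sup--inf, swaps the order, expresses the conjugate of $\nu\tau\|\cdot\|_1+\nu(1-\tau)w_g\|\cdot\|_{\alpha_g}$ as an infimal convolution of indicator functions of an $\ell_\infty$-ball and an $\ell_{\alpha_g^*}$-ball (citing Theorem 16.4 of Rockafellar), and then reassembles the $\epsilon_g\alpha_g^*$-norm from that Minkowski-sum description, which ties back to the set representation in Lemma~\ref{lemma_decomposition}. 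You instead prove the block-separation identity $\big(\sum_g N_g\big)^*=\max_g N_g^*$ directly by the budget-allocation/linear-program argument and then identify $N_g^*$ by reflexivity together with Lemma~\ref{lemma_dual_norm_primitive}. Your route is more elementary and self-contained: it avoids the sup--inf exchange (which the paper performs without explicit justification) and the external infimal-convolution theorem, at the cost of not re-exhibiting the Minkowski-sum structure of the $\epsilon q$-ball that the paper's derivation makes visible and later reuses in Lemma~\ref{lemma_dual_dslasso}. Both arguments rest on the same two pillars (Lemma~\ref{lemma_dual_norm_primitive} and finite-dimensional reflexivity, the latter of which the paper itself flags just before the lemma), so the proofs are interchangeable; your handling of the boundary cases $\epsilon_g\in\{0,1\}$ and $\alpha_g\in\{1,\infty\}$ is also consistent with the paper's conventions.
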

With the dual expression, one can follow the standard Lagrangian multiplier method to derive the dual for the primal problem. In our case, the dual program for the generalized double sparsity LASSO (\ref{DS-LASSO}) is given in the following lemma. 
\begin{lemma}\label{lemma_dual_dslasso}
The dual formulation of optimization problem \ref{DS-LASSO} is given by a convex optimization below
\begin{equation}
\begin{aligned}\label{dual_dslasso}
& \underset{\theta,u_g,v_g}{\text{max}}
& &  \|y\|_2^2-\|\frac{\lambda \theta}{2}-y\|_2^2\\
& & & u_g+v_g = X_{(g)}^\top \theta,\forall g=1,...,G,\\
& & &\|u_g\|_{\alpha_g^*} \leq\epsilon_g(\tau+(1-\tau)w_g),\forall g=1,...,G,\\
& & & \|v_g\|_{\infty} \leq(1-\epsilon_g)\big(\tau+(1-\tau)w_g\big),\forall g=1,...,G.
\end{aligned}
\end{equation}

\end{lemma}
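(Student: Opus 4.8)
The plan is to derive the dual of \eqref{DS-LASSO} by introducing splitting variables that mirror the $\epsilon$-decomposition of Lemma~\ref{lemma_decomposition}, then applying standard Lagrangian duality. First I would use Lemma~\ref{lemma_dual_norm} to write the penalty as $\|\beta\|_\ds = \sum_{g}(\tau+(1-\tau)w_g)\|\beta_{(g)}\|_{\epsilon_g\alpha_g^*}^*$, and recall that by duality $\|\beta_{(g)}\|_{\epsilon_g\alpha_g^*}^* = \sup\{\langle \beta_{(g)}, z_g\rangle : \|z_g\|_{\epsilon_g\alpha_g^*}\le 1\}$. Since the dual of $\|\cdot\|_{\epsilon_g\alpha_g^*}$ is, by Lemma~\ref{lemma_dual_norm_primitive} (applied with $q=\alpha_g^*$, so $q/(q-1)=\alpha_g$), the convex combination $\epsilon_g\|\cdot\|_{\alpha_g}+(1-\epsilon_g)\|\cdot\|_1$, the unit ball of $\|\cdot\|_{\epsilon_g\alpha_g^*}$ admits — via the set representation \eqref{setrep} (which is a statement about the primal norm, here $\|\cdot\|_{\epsilon\alpha^*}$, whose decomposition involves $\ell_{\alpha^*}$ and $\ell_\infty$ balls) — the form obtained by dualizing: vectors of $\epsilon q$-type at the primal level correspond to the $(\ell_{\alpha_g}, \ell_1)$-structure at the dual level. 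I would therefore introduce, for each $g$, dual block variables split as $X_{(g)}^\top\theta = u_g+v_g$ with constraints $\|u_g\|_{\alpha_g^*}\le \epsilon_g(\tau+(1-\tau)w_g)$ and $\|v_g\|_\infty\le (1-\epsilon_g)(\tau+(1-\tau)w_g)$, which is exactly the feasible set appearing in \eqref{dual_dslasso}.

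The concrete derivation proceeds as follows. Introduce the auxiliary variable $r = y - X\beta$ so the primal becomes $\min_{\beta,r}\{\|r\|_2^2 + \lambda\|\beta\|_\ds : r = y-X\beta\}$, and form the Lagrangian with multiplier $\theta\in\mathbb{R}^n$ for the equality constraint: $\mathcal{L}(\beta,r,\theta) = \|r\|_2^2 + \lambda\|\beta\|_\ds + \langle\theta, y - X\beta - r\rangle$. Minimizing over $r$ gives $r = \theta/2$ and contributes $\langle\theta,y\rangle - \tfrac14\|\theta\|_2^2$; completing the square rewrites this as $\|y\|_2^2 - \|\tfrac{\theta}{2}-y\|_2^2$, and rescaling $\theta\mapsto \lambda\theta$ matches the objective $\|y\|_2^2-\|\tfrac{\lambda\theta}{2}-y\|_2^2$ in \eqref{dual_dslasso}. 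Minimizing over $\beta$ gives $-\infty$ unless $\|X^\top\theta\|_\ds^* \le \lambda$ (after the rescaling), i.e. by Lemma~\ref{lemma_dual_norm}, $\frac{\|X_{(g)}^\top\theta\|_{\epsilon_g\alpha_g^*}}{\tau+(1-\tau)w_g}\le 1$ for every $g$; in that case the $\beta$-infimum is $0$. Finally I would replace each constraint $\|X_{(g)}^\top\theta\|_{\epsilon_g\alpha_g^*}\le \tau+(1-\tau)w_g$ by its equivalent split form using the set representation \eqref{setrep}: there exist $u_g,v_g$ with $u_g+v_g = X_{(g)}^\top\theta$, $\|u_g\|_{\alpha_g^*}\le \epsilon_g(\tau+(1-\tau)w_g)$, $\|v_g\|_\infty\le(1-\epsilon_g)(\tau+(1-\tau)w_g)$, yielding precisely \eqref{dual_dslasso}. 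Strong duality (zero gap, attainment) holds because the primal is convex with only affine constraints, so Slater's condition is trivially satisfied.

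The main obstacle I anticipate is bookkeeping around which norm plays which role: the primal penalty involves $\|\cdot\|_{\alpha_g}$ on blocks, its dual involves $\|\cdot\|_{\epsilon_g\alpha_g^*}$ (a $q$-norm-type object with $q=\alpha_g^*$), and the set representation \eqref{setrep} for that object decomposes it into an $\ell_{\alpha_g^*}$ ball and an $\ell_\infty$ ball — so one must be careful that the exponent on $u_g$ is $\alpha_g^*$ and not $\alpha_g$, and that the radii carry the correct $\epsilon_g$ versus $1-\epsilon_g$ split. A secondary point is the degenerate case $\alpha_g=\infty$ (hence $\alpha_g^*=1$ and, from Lemma~\ref{lemma_dual_norm_primitive}, the dual norm collapses to $\|\cdot\|_1$ with the $\epsilon$-decomposition being into $\ell_1$ and $\ell_\infty$ balls), which should be checked to remain consistent with the stated constraints; and one should confirm that the rescaling $\theta\mapsto\lambda\theta$ is applied uniformly so that the factor $\lambda$ appears only inside $\|\tfrac{\lambda\theta}{2}-y\|_2^2$ and drops out of all the block constraints.
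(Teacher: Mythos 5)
Your proposal is correct and follows essentially the same route as the paper: introduce the residual variable $z=y-X\beta$ with multiplier $\lambda\theta$, complete the square over $z$ to get the objective $\|y\|_2^2-\|\tfrac{\lambda\theta}{2}-y\|_2^2$, use the conjugate of $\|\cdot\|_\ds$ (Lemma~\ref{lemma_dual_norm}) to obtain the constraints $\|X_{(g)}^\top\theta\|_{\epsilon_g\alpha_g^*}\leq \tau+(1-\tau)w_g$, and then apply the set representation (\ref{setrep}) to split each constraint into the $u_g,v_g$ form. The only cosmetic issue is a momentary notational inconsistency about whether the multiplier is $\theta$ or $\lambda\theta$ when stating the feasibility condition, but the final constraints match the paper's.
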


\section{Convergence rates for exact sparsity}
We now shall state our main results (Theorem \ref{thm_l2error}) on the convergence rate of the estimator from problem (\ref{DS-LASSO}) when the unknown regressors coefficients $\beta^*$ is $(s,s_G)$-sparse. It  states the upper bound of convergence rate for L2 norm of the error as well as the lower bound of the regularizer required for the convergence. The proof of convergence needs some assumptions and required properties of the loss function and choice of penalty level $\lambda$ determined by the design matrix $X$ as well as the error distribution. Theorem \ref{thm_rec} gives a lower bound for $\|Xv\|_2$, also known as restricted eigenvalue convexity condition as seen in \cite{raskuttiRestrictedEigenvalueProperties} and \cite{geerConditionsUsedProve2009}. The condition is crucial in connecting the difference of the objective function and the estimator error.

We consider a broad class of random Gaussian designs. In particular, suppose the linear model $y=X\beta+\varepsilon$ in which each sample (i.e. each row of X) is from Gaussian distribution $\mathcal{N}(0,\Sigma)$.
\begin{theorem}\label{thm_rec}
For any Gaussian random design matrix $X\in\mathbb{R}^{n\times p}$ with i.i.d. $\mathcal{N}(0,\Sigma)$ rows, the following inequalities hold for all $v\in \mathbb{R}^p$ with probability $1-c\exp(-c'n)$, in which $c,c'$ are some constants,
\begin{align}
\|Xv\|_2&\geq \frac{\sqrt{n}}{4}\|\Sigma^{1/2}v\|_{2}-3\kappa_1\|v\|_\ds,\\
\|Xv\|_2&\geq \Big(\frac{\sqrt{n}}{4}\lambda_{\min}(\Sigma^{1/2})-3\kappa_1\kappa_2\Big)\|v\|_2,\quad \textup{ for } n> \frac{144\kappa_1^2\kappa_2^2}{\lambda_{\min}(\Sigma)}\label{rsc},
\end{align}
where 
\begin{align*}
\kappa_1:&=\max\limits_{g\in\{1,...,G\}} \Bigg[\frac{p_{g}^{(1/2-1/\alpha_g)_+}\sqrt{\Tr{(\Sigma_{(g)(g)})}}}{\tau+(1-\tau)w_g}\bigwedge \frac{3\sqrt{\log{p_g}\cdot\max_{i\in (g)}\Sigma_{ii}}}{(\tau+(1-\tau)w_g)(1-\epsilon_g+\epsilon_g p_g^{1/\alpha_g-1})}\Bigg],\\
\kappa_2:&= \sqrt{G}\max\limits_{g\in\{1,...,G\}}(\tau p_g^{1/2}+(1-\tau)w_g p_g^{(1/\alpha_g-1/2)_+}),\ \epsilon_g:=\frac{(1-\tau)w_g}{\tau+(1-\tau)w_g},\forall g\in\{1,...,G\}
\end{align*}

The second inequality (\ref{rsc}) is called restricted eigenvalues condition\cite{raskuttiRestrictedEigenvalueProperties}. One may construct various forms of $\ref{rsc}$, such as within a cone or star-shape containing a neighbor of zero(see \cite{negahbanUnifiedFrameworkHighDimensional2012a}).
\end{theorem}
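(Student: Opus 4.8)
The plan is to prove both inequalities by reducing the random quantity $\|Xv\|_2$ to a deterministic lower bound plus a uniform (over all $v$) stochastic error term, following the Gaussian comparison / Gordon–Sudakov–Fernique approach used in the restricted eigenvalue literature (Raskutti–Wainwright–Yu, Negahban et al.). First I would write $X = W\Sigma^{1/2}$ where $W\in\mathbb{R}^{n\times p}$ has i.i.d.\ $\mathcal{N}(0,1)$ entries, so that $\|Xv\|_2 = \|W(\Sigma^{1/2}v)\|_2$. Setting $u = \Sigma^{1/2}v$, I would apply Gordon's minimax inequality (or the one-sided version as in Lemma 3 of Raskutti–Wainwright–Yu) to the Gaussian process $v\mapsto \|Wu\|_2 - \sqrt{n}\|u\|_2$, together with concentration of Lipschitz functions of Gaussians, to obtain that with probability at least $1-c\exp(-c'n)$,
\begin{align*}
\|Xv\|_2 \;\ge\; \tfrac{\sqrt n}{4}\|\Sigma^{1/2}v\|_2 \;-\; c_0\,\mathbb{E}\Big[\sup_{\|\Sigma^{1/2}v\|_2\le 1}\,\langle g, \Sigma^{1/2}v\rangle\Big]\cdot\big(\text{scaling in }\|v\|_\ds\big),
\end{align*}
where $g\sim\mathcal{N}(0,I_p)$; the precise homogenization argument (peeling over the value of $\|v\|_\ds/\|\Sigma^{1/2}v\|_2$) turns the sup into the $\|v\|_\ds$-weighted error term. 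The constant $1/4$ and the factor $3$ come from standard slack in this comparison; I would not optimize them.

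The second step is to control the Gaussian width term $\mathbb{E}\sup\{\langle g,\Sigma^{1/2}v\rangle : \|v\|_\ds \le 1\}$, which by duality equals $\mathbb{E}\|\Sigma^{1/2}g\|_\ds^*$. Here I would invoke Lemma~\ref{lemma_dual_norm}, which gives $\|x\|_\ds^* = \max_{g}\|x_{(g)}\|_{\epsilon_g\alpha_g^*}/(\tau+(1-\tau)w_g)$. So the width is $\mathbb{E}\max_g \|(\Sigma^{1/2}g)_{(g)}\|_{\epsilon_g\alpha_g^*}/(\tau+(1-\tau)w_g)$. For each group I would bound the $\epsilon_g\alpha_g^*$-norm using Lemma~\ref{lemma_tightbound}: the bound $\|x\|_{\epsilon q}\le p^{(1/q-1/2)_+}\|x\|_2$ handles the Euclidean route, yielding the $p_g^{(1/2-1/\alpha_g)_+}\sqrt{\Tr(\Sigma_{(g)(g)})}$ term after $\mathbb{E}\|(\Sigma^{1/2}g)_{(g)}\|_2 \le \sqrt{\Tr(\Sigma_{(g)(g)})}$; and the bound $\|x\|_{\epsilon q}\le p^{1/q}\|x\|_\infty/(p^{1/q}(1-\epsilon)+\epsilon)$ handles the $\ell_\infty$ route, yielding the $\sqrt{\log p_g \cdot \max_{i\in(g)}\Sigma_{ii}}$ term after a maximal inequality for Gaussians, $\mathbb{E}\max_{i\in(g)}|(\Sigma^{1/2}g)_i| \lesssim \sqrt{\log p_g\,\max_{i\in(g)}\Sigma_{ii}}$. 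Taking the minimum of the two routes group-by-group and the max over $g$ reproduces exactly the definition of $\kappa_1$ (the factor $3$ absorbing the maximal-inequality constant). The max over $G$ groups contributes only a $\sqrt{\log G}$ or constant factor that I would fold into $c_0$.

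For inequality~(\ref{rsc}) I would specialize the first inequality: bound $\|\Sigma^{1/2}v\|_2 \ge \lambda_{\min}(\Sigma^{1/2})\|v\|_2$, and bound $\|v\|_\ds \le \kappa_2\|v\|_2/\sqrt G \cdot \sqrt G = \kappa_2\|v\|_2$ via the equivalence $\|v\|_\ds = \tau\|v\|_1 + (1-\tau)\sum_g w_g\|v_{(g)}\|_{\alpha_g} \le \sum_g(\tau\sqrt{p_g} + (1-\tau)w_g p_g^{(1/\alpha_g-1/2)_+})\|v_{(g)}\|_2 \le \kappa_2\|v\|_2$ by Cauchy–Schwarz across groups (this is where the $\sqrt G$ in $\kappa_2$ enters). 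Substituting gives $\|Xv\|_2 \ge (\tfrac{\sqrt n}{4}\lambda_{\min}(\Sigma^{1/2}) - 3\kappa_1\kappa_2)\|v\|_2$, and the stated sample-size threshold $n > 144\kappa_1^2\kappa_2^2/\lambda_{\min}(\Sigma)$ is exactly what makes the bracket positive (since $\lambda_{\min}(\Sigma^{1/2})^2 = \lambda_{\min}(\Sigma)$).

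The main obstacle I anticipate is the peeling/homogenization argument that upgrades a fixed-radius Gaussian comparison bound into one holding simultaneously for \emph{all} $v\in\mathbb{R}^p$ with the $\|v\|_\ds$-dependent slack — this requires a union bound over a geometric grid of radii and careful tracking that the failure probability stays $c\exp(-c'n)$ rather than degrading with $p$; the norm-specific estimates (Steps 2–3) are then essentially bookkeeping with Lemmas~\ref{lemma_tightbound} and~\ref{lemma_dual_norm} plus standard Gaussian maximal inequalities.
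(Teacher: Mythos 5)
Your proposal follows essentially the same route as the paper's proof: a Sudakov--Fernique (Gordon-type) comparison of $\inf_{v}\sup_{u}u^\top Xv$ with the decoupled process $s^\top u + t^\top\Sigma^{1/2}v$, Gaussian concentration for Lipschitz functions, the Raskutti--Wainwright--Yu peeling argument to make the bound uniform in $\|v\|_\ds$, the dual-norm width bound on $\mathbb{E}\|\Sigma^{1/2}t\|_\ds^*$ split into the $\sqrt{\Tr(\Sigma_{(g)(g)})}$ and $\sqrt{\log p_g}$ routes to produce $\kappa_1$, and the Cauchy--Schwarz estimate $\|v\|_\ds\le\kappa_2\|v\|_2$ together with $\|\Sigma^{1/2}v\|_2\ge\lambda_{\min}(\Sigma^{1/2})\|v\|_2$ for the restricted eigenvalue form. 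The approach and all key ingredients match the paper's argument.
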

The proof relied on Sudakov-Fernique inequality for Gaussian process comparison, Gaussian concentration inequality for Lipschitz functions and the peeling argument. Note that similar results for each element-wise or group-wise sparsity (replacing the $\|\cdot\|_\ds$ by the corresponding norm) have been obtained in past work on the basis pursuit and (group) LASSO (see \cite{bickelSimultaneousAnalysisLasso2009}, \cite{meinshausenLassotypeRecoverySparse2009}, \cite{negahbanUnifiedFrameworkHighDimensional2012a}, \cite{geerConditionsUsedProve2009}. Restricted condition of design matrix of general $\Sigma$-Gaussian ensemble can also be found in \cite{raskuttiMinimaxRatesEstimation2011}, \cite{raskuttiRestrictedEigenvalueProperties}. Rudelson and Zhou \cite{rudelsonReconstructionAnisotropicRandom2013} extended the ensemble class to the case of sub-Gaussian designs with substantial covariates dependencies. 

We further make two mild assumptions on design matrix $X$ and noisy observations respectively. 
\begin{assumption}\label{column_normalization}
We consider a column normalization assumption on design matrix $X$ based on the double sparsity norm, 
\begin{align}
\|X\|_{(g)\to 2}:=\sup\limits_{u:\epsilon_g\|u\|_{\alpha_g}+(1-\epsilon_g)\|u\|_{1}\leq 1}\|X_{(g)}u\|_2=\sqrt{n}
\end{align}
\end{assumption}
\begin{remark}
It is worth noting that is a natural extension of the column normalization condition $\frac{\|X_{j}\|_2}{\sqrt{n}}, j\in\{1,...,p\}$ as seen in \cite{negahbanUnifiedFrameworkHighDimensional2012a} which is no loss of generality as in practice linear model can be scaled appropriately so that the condition can be satisfied.
\end{remark}
\begin{assumption}\label{sub_gaussian}
We consider the observation noise/error $\varepsilon\in \mathbb{R}^n$ is zero-mean and has sub-Gaussian tails, i.e. 
\begin{align}
P(\frac{\varepsilon^\top u}{\|u\|_2}\geq\delta)\leq 2\exp(-\frac{n\delta^2}{2\sigma^2}), \forall \delta>0
\end{align}
for some constant $\sigma>0$.
\end{assumption}
\begin{remark}
Many probability distributions can satisfies the above assumption. In particular, the assumption holds if the noise distribution is standardized normal distribution or bounded random variables. As a matter of fact, any Gaussian distribution will satisfy the sub-Gaussian tail properties. 
\end{remark}
Theorem \ref{thm_lambda} below gives a high probability upper bound for $\|X^\top\varepsilon\|_\ds^*$ which in turn provides a guideline for the penalty level $\lambda$ in recovering sparsity of estimators.  
\begin{theorem}\label{thm_lambda}
Suppose that $X$ satisfies the block column normalization condition (Assumption \ref{column_normalization}) and the observation noise $\varepsilon$ is sub-Gaussian, satisfying sub-Gaussian tail (Assumption \ref{sub_gaussian}). Then we have with probability at least $1-\frac{2}{G^2}$, the following inequality holds
\begin{align}
\frac{\|X^\top \varepsilon\|_\ds^*}{n}\leq \max\limits_{g\in\{1,...,G\}}\sigma\frac{\bigg[\frac{\epsilon_gp_g^{(1/\alpha_g-1/2)_+}+(1-\epsilon_g)p_g^{1/2}}{\sqrt{n}}\Big(\big[p_{g}^{(1/2-1/\alpha_g)_+}\sqrt{p_g}\big] \bigwedge \frac{{p_g}^{1/\alpha_g^*}\sqrt{2\log{p_g}}}{{p_g}^{1/\alpha_g^*}(1-\epsilon_g)+\epsilon_g} \Big)+  \sqrt{\frac{6\log{G}}{n}}\bigg]}{\tau+(1-\tau)w_g}.
\end{align}
\end{theorem}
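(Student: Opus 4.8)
\emph{Proof strategy.} The plan is to use the dual-norm identity of Lemma~\ref{lemma_dual_norm} to reduce to a single group and then pay a union bound over the $G$ groups. Since $\|X^\top\varepsilon\|_\ds^*=\max_{g}\frac{\|X_{(g)}^\top\varepsilon\|_{\epsilon_g\alpha_g^*}}{\tau+(1-\tau)w_g}$, it is enough to control $Z_g:=\|X_{(g)}^\top\varepsilon\|_{\epsilon_g\alpha_g^*}$ for each fixed $g$ with failure probability $2/G^{3}$; a union bound over $g=1,\dots,G$ then leaves the claimed $1-2/G^{2}$. Writing $Z_g=\sup_{u\in\mathcal B_g}\langle\varepsilon,X_{(g)}u\rangle$ with $\mathcal B_g=\{u\in\mathbb R^{p_g}:\epsilon_g\|u\|_{\alpha_g}+(1-\epsilon_g)\|u\|_{1}\le 1\}$ the unit ball of $\|\cdot\|_{\epsilon_g\alpha_g^{*}}^{*}$ (Lemma~\ref{lemma_dual_norm_primitive}), Assumption~\ref{column_normalization} reads exactly $\sup_{u\in\mathcal B_g}\|X_{(g)}u\|_2=\sqrt n$; since $e_j\in\mathcal B_g$ for all $j\in(g)$ this also yields $\|X_j\|_2\le\sqrt n$ for every column, so $\|X_{(g)}\|_F^{2}=\sum_{j\in(g)}\|X_j\|_2^{2}\le p_g n$.

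I would then split $Z_g\le\mathbb E Z_g+(Z_g-\mathbb E Z_g)$ and treat the two parts separately. For the fluctuation, the map $\varepsilon\mapsto\sup_{u\in\mathcal B_g}\langle\varepsilon,X_{(g)}u\rangle$ is Lipschitz with constant $\sup_{u\in\mathcal B_g}\|X_{(g)}u\|_2=\sqrt n$, so a concentration inequality for Lipschitz functions under the sub-Gaussian tail of Assumption~\ref{sub_gaussian} (the same device used in the proof of Theorem~\ref{thm_rec}) gives $P(Z_g\ge\mathbb E Z_g+t)\le 2\exp\!\big(-t^{2}/(2n\sigma^{2})\big)$. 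Taking $t=\sigma\sqrt{6n\log G}$ makes this $\le 2/G^{3}$, and after dividing by $n$ this produces precisely the summand $\sigma\sqrt{6\log G/n}$ in the stated bound. A key point is to apply the concentration to $Z_g$ itself --- which is only $\sqrt n$-Lipschitz --- and not after the norm comparison used below, so that this additive term is not inflated by $p_g$- or $\epsilon_g$-dependent factors.

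For the mean I would invoke the two-sided comparisons of Lemma~\ref{lemma_tightbound} with $q=\alpha_g^{*}$, which (using $p_g^{(1/q-1/2)_+}=p_g^{(1/2-1/\alpha_g)_+}$ and $p_g^{1/q}=p_g^{1/\alpha_g^{*}}$) give, pointwise, both $\|X_{(g)}^\top\varepsilon\|_{\epsilon_g\alpha_g^{*}}\le p_g^{(1/2-1/\alpha_g)_+}\|X_{(g)}^\top\varepsilon\|_2$ and $\|X_{(g)}^\top\varepsilon\|_{\epsilon_g\alpha_g^{*}}\le\frac{p_g^{1/\alpha_g^{*}}}{p_g^{1/\alpha_g^{*}}(1-\epsilon_g)+\epsilon_g}\|X_{(g)}^\top\varepsilon\|_\infty$; hence $\mathbb E Z_g$ is at most the minimum of the two resulting expectations. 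For the $\ell_2$ branch, Jensen and $\mathbb E\|X_{(g)}^\top\varepsilon\|_2^{2}\le\sigma^{2}\|X_{(g)}\|_F^{2}$ together with $\|X_{(g)}\|_F\le\sqrt{p_g}\,\|X_{(g)}\|_{2\to2}\le\sqrt{p_g}\,C_g\sqrt n$ give a bound of order $\sigma\sqrt n\,C_g\,p_g^{(1/2-1/\alpha_g)_+}\sqrt{p_g}$; here $C_g:=\epsilon_g p_g^{(1/\alpha_g-1/2)_+}+(1-\epsilon_g)p_g^{1/2}\ (\ge 1)$ arises because any $u$ with $\|u\|_2\le 1$ has $\|u\|_{\epsilon_g\alpha_g^{*}}^{*}=\epsilon_g\|u\|_{\alpha_g}+(1-\epsilon_g)\|u\|_{1}\le C_g$, so $\|X_{(g)}u\|_2\le C_g\sqrt n$ by Assumption~\ref{column_normalization}. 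For the $\ell_\infty$ branch, $\|X_{(g)}^\top\varepsilon\|_\infty=\max_{j\in(g)}|X_j^\top\varepsilon|$ is a maximum of $p_g$ centered sub-Gaussian variables of parameter at most $\|X_j\|_2^{2}\sigma^{2}\le n\sigma^{2}$, so its expectation is of order $\sigma\sqrt n\,\sqrt{2\log p_g}$, giving a bound of order $\sigma\sqrt n\,C_g\,\frac{p_g^{1/\alpha_g^{*}}\sqrt{2\log p_g}}{p_g^{1/\alpha_g^{*}}(1-\epsilon_g)+\epsilon_g}$. Taking the minimum yields $\mathbb E Z_g\le \sigma\sqrt n\,C_g B_g$, where $B_g$ is the bracketed minimum in the statement, and dividing by $n$ turns $\sigma\sqrt n\,C_g$ into $\sigma A_g$ with $A_g=\big(\epsilon_g p_g^{(1/\alpha_g-1/2)_+}+(1-\epsilon_g)p_g^{1/2}\big)/\sqrt n$.

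Combining the two parts, with probability at least $1-2/G^{3}$ we get $Z_g/n\le\sigma\big[A_g B_g+\sqrt{6\log G/n}\big]$; a union bound over $g$ and the identity of Lemma~\ref{lemma_dual_norm} finish the proof. I expect the main obstacle to be the constant/exponent bookkeeping in the mean estimate --- matching the exponents of Lemma~\ref{lemma_tightbound} (stated for $\epsilon q$-norms relative to $\ell_q,\ell_2,\ell_\infty$) to the $\epsilon_g\alpha_g^{*}$ instance, and tracking how the block column-normalization (an operator bound in the $\|\cdot\|_{\epsilon_g\alpha_g^{*}}^{*}$ geometry) transfers to the Euclidean and sup norms so that the prefactor comes out exactly as $A_g$. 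A secondary point is justifying the Lipschitz-concentration step from Assumption~\ref{sub_gaussian} alone (it is immediate when $\varepsilon$ is Gaussian; otherwise one may instead discretize $\mathcal B_g$, at the cost of an extra entropy term that is in any case dominated by the $p_g$-factors already present).
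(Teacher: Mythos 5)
Your proposal is correct and its overall architecture coincides with the paper's: reduce via the dual identity $\|X^\top\varepsilon\|_\ds^*=\max_g\|X_{(g)}^\top\varepsilon\|_{\epsilon_g\alpha_g^*}/(\tau+(1-\tau)w_g)$, control each group by splitting into mean plus fluctuation, use the $\sqrt{n}$-Lipschitz property coming from Assumption~\ref{column_normalization} together with concentration for Lipschitz functions at level $\delta=\sqrt{6\log G/n}$ to get per-group failure $2/G^3$, and union bound. Where you genuinely diverge is the mean estimate. The paper introduces a surrogate Gaussian process $Z_u=\frac{C_g}{\sqrt n}\langle u,\nu\rangle$ with $\nu\sim\mathcal N(0,I_{p_g})$ and $C_g=\epsilon_g p_g^{(1/\alpha_g-1/2)_+}+(1-\epsilon_g)p_g^{1/2}$, checks that its increments dominate those of $Y_u=\frac1n\langle u,X_{(g)}^\top\varepsilon\rangle$ over the dual unit ball (this is exactly where $C_g$ enters, via $\|X_{(g)}(u-u')\|_2\le\|X\|_{(g)\to2}\,C_g\|u-u'\|_2$), applies Sudakov--Fernique, and only then uses the comparison bounds of Lemma~\ref{lemma_tightbound} on $\mathbb E\|\nu\|_{\epsilon_g\alpha_g^*}$ to produce the bracketed minimum. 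You instead apply Lemma~\ref{lemma_tightbound} pointwise to $X_{(g)}^\top\varepsilon$ itself and bound $\mathbb E\|X_{(g)}^\top\varepsilon\|_2$ by Jensen plus a Frobenius-norm estimate and $\mathbb E\|X_{(g)}^\top\varepsilon\|_\infty$ by a max-of-sub-Gaussians bound. Both routes land on the same expression; yours is more elementary and, notably, does not lean on a Gaussian-process comparison inequality for noise that is only assumed sub-Gaussian (a point on which the paper's argument is actually looser than yours). The trade-off is that in your $\ell_2$ branch the factor $C_g$ does not arise intrinsically --- your direct estimate $\|X_{(g)}\|_F^2\le p_g n$ already gives a bound without it, and you insert $C_g\ge1$ only to match the stated constant --- whereas in the paper it is forced by the increment comparison. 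Your closing caveats (exponent bookkeeping with $q=\alpha_g^*$, and the need to discretize if one insists on deriving concentration from Assumption~\ref{sub_gaussian} alone) are apt; the paper simply invokes the Gaussian Lipschitz concentration lemma at that step.
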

With the above two theorems as well as these conditions, we shall be able to give the convergence in the following novel result:
\begin{theorem}\label{thm_l2error}
Suppose that the design matrix satisfies the column normalization condition (Assumption \ref{column_normalization}) and the restricted eigenvalues condition (\ref{rsc}). Moreover, the noise $\varepsilon$ is sub-Gaussian (Assumption \ref{sub_gaussian}). The generalized double sparsity estimator $\hat{\beta}$ with 
\begin{align}\label{eqn_lambda}
\lambda\geq2\max\limits_{g\in\{1,...,G\}}\sigma\frac{\bigg[\Big(\epsilon_gp_g^{(1/\alpha_g-1/2)_+}+(1-\epsilon_g)p_g^{1/2}\Big)\Big(\big[p_{g}^{(1/2-1/\alpha_g)_+}\sqrt{np_g}\big] \bigwedge \frac{{p_g}^{1/\alpha_g^*}\sqrt{2n\log{p_g}}}{{p_g}^{1/\alpha_g^*}(1-\epsilon_g)+\epsilon_g} \Big)+  \sqrt{6n\log{G}}\bigg]}{\tau+(1-\tau)w_g}
\end{align}
satisfies the following $L_2$ error with probability at least $1-\frac{2}{G^2}-ce^{-c'/n}$ for some positive constants $c,c'$,
\begin{align}
\|\hat{\beta}-\beta^*\|_2^2\leq \frac{4\lambda^2\Big[\tau \sqrt{s}+(1-\tau) \sqrt{s_G}\max\limits_{g\in\{1,...,G\}}[w_gp_g^{(1/\alpha-1/2)_+}]\Big]^2}{\Big(\frac{\sqrt{n}}{4}\lambda_{\min}(\Sigma^{1/2})-3\kappa_1\kappa_2\Big)^4},\label{eqn_l2error}
\end{align}
where 
\begin{align*}
\kappa_1&=\max\limits_{g\in\{1,...,G\}} \Bigg[\frac{p_{g}^{(1/2-1/\alpha_g)_+}\sqrt{\Tr{(\Sigma_{(g)(g)})}}}{\tau+(1-\tau)w_g}\bigwedge \frac{3\sqrt{\log{p_g}\cdot\max_{i\in (g)}\Sigma_{ii}}}{(\tau+(1-\tau)w_g)(1-\epsilon_g+\epsilon_g p_g^{1/\alpha_g-1})}\Bigg],\\
\kappa_2&= \sqrt{G}\max\limits_{g\in\{1,...,G\}}(\tau p_g^{1/2}+(1-\tau)w_g p_g^{(1/\alpha_g-1/2)_+}),\\ 
\epsilon_g&=\frac{(1-\tau)w_g}{\tau+(1-\tau)w_g},\forall g\in\{1,...,G\},\ n> \frac{144\kappa_1^2\kappa_2^2}{\lambda_{\min}(\Sigma)}.
\end{align*}
\end{theorem}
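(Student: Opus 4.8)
The plan is to follow the standard two-ingredient recipe for deriving $L_2$ error bounds for regularized $M$-estimators with a decomposable regularizer (as in \cite{negahbanUnifiedFrameworkHighDimensional2012a}), but with the $\|\cdot\|_\ds$ norm and its dual from Lemma \ref{lemma_dual_norm} playing the role of the penalty. First I would set $\hat{v} := \hat{\beta}-\beta^*$ and use the basic inequality obtained from optimality of $\hat{\beta}$: since $\hat{\beta}$ minimizes $\|y-X\beta\|_2^2 + \lambda\|\beta\|_\ds$ and $y = X\beta^*+\varepsilon$, one gets $\|X\hat{v}\|_2^2 \le 2\varepsilon^\top X\hat{v} + \lambda(\|\beta^*\|_\ds - \|\hat{\beta}\|_\ds)$. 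Applying Hölder's inequality for the dual pair $(\|\cdot\|_\ds,\|\cdot\|_\ds^*)$ gives $2\varepsilon^\top X\hat{v} \le 2\|X^\top\varepsilon\|_\ds^*\,\|\hat{v}\|_\ds \le \lambda\|\hat{v}\|_\ds$ by the choice of $\lambda$ in (\ref{eqn_lambda}), which is exactly $2\|X^\top\varepsilon\|_\ds^* \le \lambda$ on the high-probability event of Theorem \ref{thm_lambda} (rescaled, since Theorem \ref{thm_lambda} bounds $\|X^\top\varepsilon\|_\ds^*/n$).

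Next I would exploit decomposability of $\|\cdot\|_\ds$ with respect to the pair of subspaces associated with $\big(\textbf{Supp}(\beta^*),\textbf{GSupp}(\beta^*)\big)$. Writing $\hat v = \hat v_M + \hat v_{M^\perp}$ where $M$ is the model subspace, the triangle inequality plus decomposability yields the standard cone membership: $\|\hat v_{M^\perp}\|_\ds \le 3\|\hat v_M\|_\ds$, and hence $\|\hat v\|_\ds \le 4\|\hat v_M\|_\ds$. The crucial quantity is then the \emph{subspace compatibility constant} $\Psi(M) := \sup_{v\in M\setminus\{0\}} \|v\|_\ds/\|v\|_2$, so that $\|\hat v\|_\ds \le 4\Psi(M)\|\hat v\|_2$. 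Here I would compute $\Psi(M)$ explicitly for the $(s,s_G)$-sparse model: on the model subspace, $\|v\|_\ds = \tau\|v\|_1 + (1-\tau)\sum_g w_g\|v_{(g)}\|_{\alpha_g}$ with support of size $\le s$ contained in $\le s_G$ groups, so $\|v\|_1 \le \sqrt{s}\|v\|_2$ and $\sum_g w_g \|v_{(g)}\|_{\alpha_g} \le \max_g[w_g p_g^{(1/\alpha_g-1/2)_+}]\sum_{g\in\textbf{GSupp}}\|v_{(g)}\|_2 \le \max_g[w_g p_g^{(1/\alpha_g-1/2)_+}]\sqrt{s_G}\|v\|_2$, giving precisely the bracketed factor $\tau\sqrt{s}+(1-\tau)\sqrt{s_G}\max_g[w_g p_g^{(1/\alpha_g-1/2)_+}]$ appearing in (\ref{eqn_l2error}).

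To close the argument I would lower-bound $\|X\hat v\|_2^2$ using the restricted eigenvalue condition (\ref{rsc}) of Theorem \ref{thm_rec}: $\|X\hat v\|_2^2 \ge \big(\tfrac{\sqrt n}{4}\lambda_{\min}(\Sigma^{1/2})-3\kappa_1\kappa_2\big)^2 \|\hat v\|_2^2$ on its high-probability event (which requires the sample-size condition $n > 144\kappa_1^2\kappa_2^2/\lambda_{\min}(\Sigma)$ to make the leading constant positive). Combining with the basic inequality $\|X\hat v\|_2^2 \le \lambda\|\hat v\|_\ds \le 4\lambda\Psi(M)\|\hat v\|_2$ gives $\big(\tfrac{\sqrt n}{4}\lambda_{\min}(\Sigma^{1/2})-3\kappa_1\kappa_2\big)^2\|\hat v\|_2^2 \le 4\lambda\Psi(M)\|\hat v\|_2$, so $\|\hat v\|_2 \le 4\lambda\Psi(M)\big/\big(\tfrac{\sqrt n}{4}\lambda_{\min}(\Sigma^{1/2})-3\kappa_1\kappa_2\big)^2$; squaring yields (\ref{eqn_l2error}). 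The probability bound is obtained by a union bound over the event of Theorem \ref{thm_lambda} (failure $\le 2/G^2$) and that of Theorem \ref{thm_rec} (failure $\le ce^{-c'n}$). The main obstacle I anticipate is getting the cone inequality exactly right — one must verify that $\|\cdot\|_\ds$ is genuinely decomposable over the \emph{simultaneous} support/group-support subspace (not just each piece separately), so that $\|\beta^*_M + v_{M^\perp}\|_\ds = \|\beta^*_M\|_\ds + \|v_{M^\perp}\|_\ds$ holds for the combined structure; the examples listed after the decomposability definition in the excerpt suggest the relevant pair is $\big(\textbf{Supp}(\beta^*),\textbf{GSupp}(\beta^*)\big)$, and I would need to check the $\ell_1$ term and each group $\ell_{\alpha_g}$ term are each additive over this split simultaneously, which holds because the split respects both the coordinate partition and the group partition.
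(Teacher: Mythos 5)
Your overall architecture matches the paper's: basic inequality from optimality of $\hat\beta$, H\"older with the dual pair $(\|\cdot\|_\ds,\|\cdot\|_\ds^*)$ and Theorem \ref{thm_lambda} to absorb the noise term into $\lambda\|\hat v\|_\ds$, the restricted eigenvalue bound (\ref{rsc}) from Theorem \ref{thm_rec} to lower-bound $\|X\hat v\|_2^2$, a restricted-support compatibility factor $\tau\sqrt{s}+(1-\tau)\sqrt{s_G}\max_g[w_gp_g^{(1/\alpha_g-1/2)_+}]$, and a union bound over the two events. However, the step you yourself flag as the ``main obstacle'' is a genuine gap, and your proposed resolution of it does not work. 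You want a single model subspace $M$ over which $\|\cdot\|_\ds$ is decomposable \emph{and} on which $\|v\|_1\le\sqrt{s}\|v\|_2$; the natural candidate is $M=\mathrm{span}\{e_i: i\in \textbf{Supp}(\beta^*)\}$. But for $\alpha_g>1$ the group terms are not additive over a within-group coordinate split: if group $g$ is active but not fully supported, then for $v\in M^\perp$ one only has the triangle inequality $\|\beta^*_{(g)}+v_{(g)}\|_{\alpha_g}\le\|\beta^*_{(g)}\|_{\alpha_g}+\|v_{(g)}\|_{\alpha_g}$, not equality, so $\|\beta^*_M+v_{M^\perp}\|_\ds\ne\|\beta^*_M\|_\ds+\|v_{M^\perp}\|_\ds$. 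Your closing claim that the split ``respects both the coordinate partition and the group partition'' holds only when every active group is fully dense ($S=\cup_{g\in T}(g)$), which is precisely the regime where element-wise sparsity is vacuous. Taking instead $M=\mathrm{span}\{e_i: i\in(g),\,g\in\textbf{GSupp}(\beta^*)\}$ restores decomposability but inflates the $\ell_1$ part of the compatibility constant to $\sqrt{\sum_{g\in T}p_g}$ rather than $\sqrt{s}$, so you would not recover the stated bracket.

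The paper sidesteps this by never insisting on a single decomposable subspace. It expands $\|\beta^*+h\|_\ds-\|\beta^*\|_\ds$ term by term, applying the reverse triangle inequality to the $\ell_1$ piece split over $S/S^c$ and, separately, to each group piece split over $T/T^c$ (exact additivity is used only where $\beta^*$ vanishes, i.e.\ on $S^c$ and on groups outside $T$). Matching this against the same split of the upper bound $\|h\|_\ds$ makes the $\|h_{S^c}\|_1$ and $\sum_{g\notin T}$ terms cancel exactly, leaving $\kappa_l^2\|h\|_2^2\le 2\lambda\big(\tau\|h_S\|_1+(1-\tau)\sum_{g\in T}w_g\|h_{(g)}\|_{\alpha_g}\big)$, after which $\|h_S\|_1\le\sqrt{s}\|h\|_2$ and $\sum_{g\in T}w_g\|h_{(g)}\|_{\alpha_g}\le\sqrt{s_G}\max_g[w_gp_g^{(1/\alpha_g-1/2)_+}]\|h\|_2$ give exactly the constant $2\lambda(\cdots)/\kappa_l^2$ whose square is (\ref{eqn_l2error}). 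Two smaller points: your chain $\|X\hat v\|_2^2\le\lambda\|\hat v\|_\ds$ silently drops the term $\lambda(\|\beta^*\|_\ds-\|\hat\beta\|_\ds)$, which is not nonpositive in general; and your final constant ($16\lambda^2$ in the numerator after squaring $4\lambda\Psi/\kappa_l^2$) is a factor of $4$ larger than the stated $4\lambda^2$, whereas the paper's cancellation yields the stated constant.
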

\begin{remark}
As the results are general and can be applied to any $\{(\alpha_g,w_g,p_g)\}_{g=1}^G\in [1,\infty)^G\times (0,\infty)^G\times \mathbb{N}_+^G$ and $\tau\in[0,1]$, we can observes how different choices of regularization and sparsity norm will affect the convergence rates. We consider the following cases. Case (1),(2),(3),(4) are single sparsity cases (either element-wise or group-wise regularization but not both.) Case (5),(6),(7) are double sparsity where simultaneous sparsity is considered. 
\end{remark}
\paragraph{(1). $\bm{\tau=1}$:} The case $\tau=1$ corresponds to the traditional LASSO problem with element-wise sparsity. Accordingly, $\epsilon_g=0,\forall g\in\{1,...,G\}$ in this case. As there is no group norm in the regularization term or group structures are utilized in the objective function, one can assume $p_g=1,G=p$. Expression (\ref{eqn_lambda}), $\kappa_1$ and $\kappa_2$ becomes 
\begin{align}
\lambda&\geq2\max\limits_{g\in\{1,...,G\}}\sigma\bigg[\sqrt{n} \wedge \sqrt{2n\log{p_g}} +  \sqrt{6n\log{G}}\bigg]=2\max\limits_{g\in\{1,...,p\}}\sigma\sqrt{6n\log{p}}=2\sigma\sqrt{6n\log{p}}\\
\kappa_1&=\max\limits_{g\in\{1,...,G\}} \Bigg[\sqrt{\Tr{(\Sigma_{(g)(g)})}}\bigwedge \frac{3\sqrt{\log{p_g}\cdot\max_{i\in (g)}\Sigma_{ii}}}{(1-\epsilon_g+\epsilon_g)}\Bigg]=0,\ \kappa_2= \sqrt{G}\max\limits_{g\in\{1,...,G\}}\tau p_g^{1/2}=\sqrt{p}.
\end{align}
By letting $\lambda = \sigma\sqrt{6n\log{p}}$, Expression (\ref{eqn_l2error}) then becomes
\begin{align}
\|\hat{\beta}-\beta^*\|_2^2\leq \frac{1024\lambda^2s}{n^2\lambda_{\min}(\Sigma^{2})}=\frac{6144}{\lambda_{\min}(\Sigma^{2})}\sigma^2\frac{s\log{p}}{n}
\end{align}
The regularization parameter $\lambda=O(\sigma\sqrt{n\log{p}})$ as well as the L2 norm of error $\|\hat{\beta}-\beta^*\|_2=O(\sigma\sqrt{\frac{s\log{p}}{n}})$ are an exact recovery to well-established past work (see \cite{bickelSimultaneousAnalysisLasso2009},\cite{candesDantzigSelectorStatistical2007},\cite{meinshausenLassotypeRecoverySparse2009}). Our proof illustrates a novel approach which generalizes traditional LASSO problem easily. 

\paragraph{(2). $\bm{\tau}=\bm{0},\bm{\alpha}\bm{\equiv}\bm{1}$:} This case share much similarity as the previous one. They both only contain $\ell_1$ regularization. The key difference is that in the traditional LASSO, element-wise regularization term has equal weight while in this case different groups have different weights determined by $w_g$. It is easy to verify that $\epsilon_g=1,\forall g\in\{1,...,G\}$. Expression (\ref{eqn_lambda}), $\kappa_1$ and $\kappa_2$ becomes 
\begin{align}
\lambda&\geq2\sigma\sqrt{n}\max\limits_{g\in\{1,...,G\}}\frac{ p_g \bigwedge \sqrt{2p_g\log{p_g}}+  \sqrt{6\log{G}}}{w_g}=2\sigma\sqrt{n}\max\limits_{g\in\{1,...,G\}}\frac{\sqrt{2p_g\log{p_g}}+  \sqrt{6\log{G}}}{w_g}\\
\kappa_1&=\max\limits_{g\in\{1,...,G\}} \frac{1}{w_g} \sqrt{\Tr{(\Sigma_{(g)(g)})}} \bigwedge 3\sqrt{\log{p_g}\cdot\max_{i\in (g)}\Sigma_{ii}},\kappa_2= \sqrt{G}\max\limits_{g\in\{1,...,G\}}w_g p_g^{1/2}
\end{align}
By letting $\lambda =2\sigma\sqrt{n}\max\limits_{g\in\{1,...,G\}}\frac{\sqrt{2p_g\log{p_g}}+  \sqrt{6\log{G}}}{w_g}$, Expression (\ref{eqn_l2error}) then becomes
\begin{align}
\|\hat{\beta}-\beta^*\|_2\leq \sqrt{\frac{4\lambda^2 s_G\max\limits_{g\in\{1,...,G\}}w_g^2p_g}{\Big(\frac{\sqrt{n}}{4}\lambda_{\min}(\Sigma^{1/2})-3\kappa_1\kappa_2\Big)^4}}\lesssim \sigma\sqrt{s_G}\Big(\sqrt{\frac{\max\limits_{g=1,...,G}p_g^2\log{p_g}}{n}}+\sqrt{\frac{p_g\log{G}}{n}}\Big).
\end{align}
By looking into the term $\max\limits_{g\in\{1,...,G\}}w_g^2p_g$, one may argue that to obtain a smaller factor for the L2 error term for a better performance on sparsity recovery, following a minmax argument and assigning small weight on groups with large covariates size in such way to minimize $\max\limits_{g\in\{1,...,G\}}w_g^2p_g$ could be an option.
\paragraph{(3). $\bm{\tau}=\bm{0},\bm{\alpha}\bm{\equiv}\bm{2}$:} The case $(\tau,\alpha)\equiv (0,2)$ corresponds to the traditional group LASSO problem with only group-wise sparsity. Expression (\ref{eqn_lambda}) and Expression (\ref{eqn_l2error}) become
\begin{align}
\lambda&\geq2\max\limits_{g\in\{1,...,G\}}\sigma\frac{\sqrt{np_g} \bigwedge \sqrt{2np_g\log{p_g}} +  \sqrt{6n\log{G}}}{w_g}=2\max\limits_{g\in\{1,...,G\}}\sigma\frac{\sqrt{np_g}+  \sqrt{6n\log{G}}}{w_g}\\
\|\hat{\beta}-\beta^*\|_2&\leq \sqrt{\frac{4\lambda^2 s_G\max\limits_{g\in\{1,...,G\}}w_g^2}{\Big(\frac{\sqrt{n}}{4}\lambda_{\min}(\Sigma^{1/2})-3\kappa_1\kappa_2\Big)^4}}\lesssim \sigma \sqrt{s_G} \Big(\sqrt{\frac{\max\limits_{g=1,...,G}p_g}{n}}+\sqrt{\frac{\log{G}}{n}}\Big). 
\end{align}
The regularization parameter $\lambda=O\big(\sigma\sqrt{n\max\limits_{g=1,...,G}p_g}+\sigma\sqrt{n\log{G}}\big)$ as well as the L2 norm of error $\|\hat{\beta}-\beta^*\|_2=O\bigg(\sigma \sqrt{s_G\frac{\max\limits_{g=1,...,G}p_g}{n}}+\sigma \sqrt{s_G\frac{\log{G}}{n}}\Big)$ match with the past results derived in \cite{huangBenefitGroupSparsity2010}, \cite{louniciTakingAdvantageSparsity2009}, \cite{negahbanUnifiedFrameworkHighDimensional2012a} for exact block sparsity. Different from previous results, our finding show the influence of various $w_g,p_g$ on convergence rate. To be more specific, if one of the $w_g$ is significantly larger than other weights, $\lambda^2\max_{g=1,...,G}w_g^2\asymp \sigma^2n\log{G}+\sigma^2 n\max\limits_{g=1,...,G}p_g$. The L2 norm of error is therefore bounded by a multiplier determined by $p_g$. 

\paragraph{(4). $\bm{\tau}=\bm{0},\bm{\alpha}\bm{\equiv}\bm{\infty}$:} The case $(\tau,\alpha)\equiv (0,\infty)$ corresponds to the group sparsity problem equipped with $\ell_1/\ell_\infty$-norm. Similar formulation in the optimization community as seen in \cite{turlachSimultaneousVariableSelection2005} considers the $\ell_\infty$-norm in the constraints and discuss the resulting convex quadratic program thereafter. In our setting, Expression (\ref{eqn_lambda}) and Expression (\ref{eqn_l2error}) become
\begin{align}
\lambda&\geq2\sigma\max\limits_{g\in\{1,...,G\}}\frac{1}{w_g}\sqrt{n}\Big(p_{g} \bigwedge p_{g}\sqrt{2\log{p_g}} \Big)+  \sqrt{6n\log{G}}=2\sigma\sqrt{n}\max\limits_{g\in\{1,...,G\}}\frac{1}{w_g}(p_{g} +\sqrt{6\log{G}})\\
\|\hat{\beta}-\beta^*\|_2&\leq \sqrt{\frac{4\lambda^2s_G\max\limits_{g\in\{1,...,G\}}w_g^2}{\Big(\frac{\sqrt{n}}{4}\lambda_{\min}(\Sigma^{1/2})-3\kappa_1\kappa_2\Big)^4}}\lesssim \sigma \sqrt{s_G} \Big(\sqrt{\frac{\max\limits_{g=1,...,G}p_g^2}{n}}+\sqrt{\frac{\log{G}}{n}}\Big). 
\end{align}
The L2 norm of error $\|\hat{\beta}-\beta^*\|_2=O\bigg(\sigma \sqrt{s_G}\sqrt{\frac{\max\limits_{g=1,...,G}p_g^2}{n}}+\sigma \sqrt{s_G}\sqrt{\frac{\log{G}}{n}}\Big)$ has similar components as that in $(\tau,\alpha)\equiv (0,\infty)$, having a estimation term ($\frac{\sqrt{s_G}p_g}{\sqrt{n}}$) and a search term ($\sqrt{s_G\frac{\log{G}}{m}}$). Notice that we recover the additional factor of $p_g$ in the estimation term as discussed in \cite{negahbanUnifiedFrameworkHighDimensional2012a}.

Now we consider the following three double sparsity cases $\alpha\equiv 1,2,\infty$ with $\tau\in(0,1)$.

\paragraph{(5). $\bm{\tau}\bm{\in(0,1)},\bm{\alpha}\bm{\equiv}\bm{1}$:} With a little derivation, it is easy to find out that it reduces to Case (2). We may also follow the main Theorem for a sanity check. Expression (\ref{eqn_lambda}) and Expression (\ref{eqn_l2error}) become
\begin{align}
\lambda&\geq2\sigma\sqrt{n}\max\limits_{g\in\{1,...,G\}}\frac{p_g\bigwedge\sqrt{2p_g\log{p_g}} +  \sqrt{6\log{G}}}{\tau+(1-\tau)w_g}=2\sigma\sqrt{n}\max\limits_{g\in\{1,...,G\}}\frac{\sqrt{2p_g\log{p_g}} +  \sqrt{6\log{G}}}{\tau+(1-\tau)w_g}\\
\|\hat{\beta}-\beta^*\|_2&\lesssim \sigma\big(\sqrt{\frac{\max_gp_g\log{p_g}}{n}}+\sqrt{\frac{\log{G}}{n}}\big)\lambda\Big[\tau \sqrt{s}+(1-\tau) \sqrt{s_G}\max\limits_{g\in\{1,...,G\}}[w_g\sqrt{p_g}]\Big]
\end{align}
\paragraph{(6). $\bm{\tau}\bm{\in(0,1)},\bm{\alpha}\bm{\equiv}\bm{2}$:} The case is known as sparse-group LASSO problem as discussed in \cite{idaFastSparseGroup2019}, \cite{ndiayeGAPSafeScreening2016},\cite{simonSparseGroupLasso2013},\cite{wangTwoLayerFeatureReduction2014}. In our setting, Expression (\ref{eqn_lambda}) and Expression (\ref{eqn_l2error}) become
\begin{align}
\lambda&\geq2\sigma\sqrt{n}\max\limits_{g\in\{1,...,G\}}\frac{\Big(\epsilon_g+(1-\epsilon_g)p_g^{1/2}\Big)\sqrt{p_g}+  \sqrt{6\log{G}}}{\tau+(1-\tau)w_g}\\
\|\hat{\beta}-\beta^*\|_2&\lesssim \frac{\sigma}{\sqrt{n}}\big[\max_g(\varepsilon_g+(1-\varepsilon_g)\sqrt{p_g})+\sqrt{\log{G}}\big]\cdot\Big[\tau \sqrt{s}+(1-\tau) \sqrt{s_G}\max\limits_{g\}}w_g\big]
\end{align}
The above upper bound and choice of $\lambda$ recover similar results for equal weight $w_g\equiv w$ case as derived in \cite{caiSparseGroupLasso2019}.
\paragraph{(7). $\bm{\tau}\bm{\in(0,1)},\bm{\alpha}\bm{\equiv}\bm{\infty}$:} The case corresponds to the double group sparsity problem equipped with $\ell_1/\ell_\infty$-norm. In our setting, Expression (\ref{eqn_lambda}) and Expression (\ref{eqn_l2error}) become  
\begin{align}
\lambda&\geq2\sigma\sqrt{n}\max\limits_{g\in\{1,...,G\}}\frac{\Big(\epsilon_g+(1-\epsilon_g)p_g^{1/2}\Big)p_{g}+  \sqrt{6\log{G}}}{\tau+(1-\tau)w_g}\\
\|\hat{\beta}-\beta^*\|_2&\lesssim \frac{\sigma}{\sqrt{n}}\Big[\max\limits_{g}\big(\epsilon_g+(1-\epsilon_g)p_g^{1/2}\big)p_{g}+  \sqrt{6\log{G}}\Big]\cdot\Big[\tau \sqrt{s}+(1-\tau) \sqrt{s_G}\max\limits_{g\in\{1,...,G\}}w_g\Big].
\end{align}
\section{Conclusion}
In this paper we investigate the generalized group sparsity. we first a new vector norm in Euclidean space $\mathbb{R}^n$-$\epsilon q$-norm. We prove its validity as a vector norm and relationship with $\ell_q$ norm. Unique decomposition in term of $\epsilon q$-norm as well as the dual norm are derived, which naturally leads us to discuss the implication in the generalized double sparsity problem. The main convergence results give a guideline for penalty level and show a uniform convergence result in term of L2 norm of estimation error. The main theorem generalizes various convergence rates both for single regularization cases such as LASSO and group LASSO and for double regularization cases such as group-sparse LASSO. 
\medskip
\small
\bibliographystyle{plain}
\bibliography{pco}

\newpage
\section*{Supplement}
\begin{lemma*}
For any $\epsilon\in (0,1]$ and $q\geq 1$, the unique nonnegative solution $\nu\in \mathbb{R}^+$ of equation (\ref{defepsq}) defines a vector norm in $\mathbb{R}^p$.
\end{lemma*}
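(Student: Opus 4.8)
The plan is to verify the four defining properties of a norm --- well-definedness together with nonnegativity, definiteness, absolute homogeneity, and subadditivity --- after first converting the implicit definition into an explicit sublevel-set description via a monotonicity argument. Fix $x\in\mathbb{R}^p$ and assume $\epsilon\in(0,1)$ and $q<\infty$; the boundary cases $\epsilon=1$ (which gives $\|x\|_{\epsilon q}=\|x\|_q$) and $q=\infty$ (which gives $\|x\|_{\epsilon q}=\|x\|_\infty$ by a one-line computation) are already norms and will be dispatched separately at the outset. Consider $\phi_x(v):=\|S_{(1-\epsilon)v}(x)\|_q$ for $v\ge 0$: each coordinate $v\mapsto(|x_i|-(1-\epsilon)v)_+$ is continuous and non-increasing, so $\phi_x$ is continuous and non-increasing with $\phi_x(0)=\|x\|_q$ and $\phi_x(v)=0$ once $(1-\epsilon)v\ge\|x\|_\infty$. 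Hence $g_x(v):=\phi_x(v)-\epsilon v$ is continuous and strictly decreasing from $g_x(0)=\|x\|_q\ge 0$ to $-\infty$, so it has a unique root, which is the value $\|x\|_{\epsilon q}$ defined by (\ref{defepsq}). The strict monotonicity of $g_x$ yields the key equivalence
\[
\|x\|_{\epsilon q}\le\nu\quad\Longleftrightarrow\quad \|S_{(1-\epsilon)\nu}(x)\|_q\le\epsilon\nu,\qquad \nu\ge 0,
\]
which is the workhorse for the rest of the argument. Nonnegativity is then immediate, and definiteness follows because $\|x\|_{\epsilon q}=0$ forces $g_x(0)=0$, i.e.\ $\|x\|_q=0$.

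For absolute homogeneity I will use the coordinatewise scaling identity $S_{(1-\epsilon)v}(\lambda x)=\lambda\,S_{(1-\epsilon)v/|\lambda|}(x)$ for $\lambda\ne 0$ (checking signs entry by entry): it shows that $v$ solves the defining equation for $\lambda x$ exactly when $v/|\lambda|$ solves it for $x$, whence $\|\lambda x\|_{\epsilon q}=|\lambda|\,\|x\|_{\epsilon q}$; the case $\lambda=0$ is trivial.

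The substantive step is the triangle inequality, which I will obtain by combining a soft-thresholding decomposition with a pointwise domination bound. First, for any $x$ set $\nu_x:=\|x\|_{\epsilon q}$, $u_x:=S_{(1-\epsilon)\nu_x}(x)$, $v_x:=x-u_x$; then $\|u_x\|_q=\epsilon\nu_x$ by definition, and a coordinatewise inspection shows $|(v_x)_i|=(1-\epsilon)\nu_x$ on $\{i:|x_i|\ge(1-\epsilon)\nu_x\}$ and $|(v_x)_i|=|x_i|<(1-\epsilon)\nu_x$ elsewhere, so $\|v_x\|_\infty\le(1-\epsilon)\nu_x$. Second, I will record that whenever $\|b\|_\infty\le t$ one has $|S_t(a+b)_i|=(|a_i+b_i|-t)_+\le(|a_i|+|b_i|-t)_+\le|a_i|$ for every $i$, hence $\|S_t(a+b)\|_q\le\|a\|_q$. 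Now, given $x,y$, put $\nu:=\nu_x+\nu_y$, decompose $x=u_x+v_x$ and $y=u_y+v_y$ as above, note $\|v_x+v_y\|_\infty\le(1-\epsilon)\nu$, and apply the domination bound with $a=u_x+u_y$, $b=v_x+v_y$, $t=(1-\epsilon)\nu$ to get $\|S_{(1-\epsilon)\nu}(x+y)\|_q\le\|u_x+u_y\|_q\le\|u_x\|_q+\|u_y\|_q=\epsilon\nu$; the key equivalence then gives $\|x+y\|_{\epsilon q}\le\nu=\|x\|_{\epsilon q}+\|y\|_{\epsilon q}$.

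I expect the main obstacle to be the first paragraph: establishing that the defining equation has a unique nonnegative root and, more importantly, extracting the sublevel-set equivalence $\|x\|_{\epsilon q}\le\nu\Leftrightarrow\|S_{(1-\epsilon)\nu}(x)\|_q\le\epsilon\nu$, since this is precisely what turns the awkward implicit definition into an object the decomposition and domination estimates can act on; once it is available, homogeneity and subadditivity are each only a few lines. The remaining care is bookkeeping at the degenerate points ($x=0$, the endpoints $\epsilon=1$, $q=\infty$), which I will handle explicitly before the main argument.
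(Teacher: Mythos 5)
Your proof is correct and follows essentially the same route as the paper's: uniqueness of the root via monotonicity of $v\mapsto\|S_{(1-\epsilon)v}(x)\|_q-\epsilon v$, homogeneity by the scaling identity, and subadditivity by showing this function is nonpositive at $\nu_x+\nu_y$, which in both arguments reduces to the coordinatewise bound $(|x_i+y_i|-(1-\epsilon)(\nu_x+\nu_y))_+\leq(|x_i|-(1-\epsilon)\nu_x)_++(|y_i|-(1-\epsilon)\nu_y)_+$ followed by the $\ell_q$ triangle inequality. Your packaging of that step through the decomposition $x=u_x+v_x$ and a domination lemma is only a cosmetic variant (it anticipates the paper's $\epsilon$-decomposition lemma), and your explicit sublevel-set equivalence is a clean way to state the monotonicity the paper uses implicitly.
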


\begin{proof}[Proof of Lemma \ref{lemma_isnorm}]
If $q = \infty$, $\|\cdot\|_{\epsilon q}=\|\cdot\|_{\infty}$.
If $q<\infty, \epsilon=1$, we have $\nu=\|x\|_q$.
If $q<\infty, \epsilon<1$, we denote $h(v,x)=\|S_{(1-\epsilon) v}(x)\|_q-\epsilon v$. As $h(v,x)$ is monotonically decreasing and continuous in $v$. In addition, $h(0,x)=\|x\|_q\geq 0$ and $h(v,x)\equiv -\epsilon v<0$ for $v$ sufficiently large (i.e. $v>\frac{1}{1-\epsilon}\|x\|_\infty$). Hence $h(v,x)$ has a unique and nonnegative root for any fixed $x$. We denote $v(x)$ as the non-negative root of $h(v,x)$. 

For any $\alpha\in \mathbb{R}$, the solution $v=|\alpha|v(x)$ satisfies the equality $h(v,\alpha x)=0$. Therefore we have $v(\alpha x)=|\alpha|v(x).$ Moreover, it is easy to verify that $v(x)=0 \iff x=0$.

Now we prove $v(x+y)\leq v(x)+v(y)$.

It is easy to see that 
\begin{equation}
\begin{aligned}\label{derivation_triinequality}
h(v(x)+v(y),x+y)+\epsilon (v(x)+v(y))&=\Big\|\big[|x+y|-(1-\epsilon)(v(x)+v(y))\big]_+\Big\|_q\\
&\leq \Big\|\big[|x|+|y|-(1-\epsilon)(v(x)+v(y))\big]_+\Big\|_q\\
&\leq \|[|x|-(1-\epsilon)v(x)]_++[|y|-(1-\epsilon)v(y)]_+\|_q\\
&\leq \|[|x|-(1-\epsilon)v(x)]_+\|_q+\|[|y|-(1-\epsilon)v(y)]_+\|_q\\
&= h(v(x),x)+\epsilon v(x)+h(v(y),y)+\epsilon v(y)\\
&= \epsilon v(x)+\epsilon v(y)
\end{aligned}
\end{equation}
where we are using the fact that $(a+b)_+\leq (a)_++(b)_+$ and triangle inequality for $\ell_1$ and $\ell_q$-norm.
Therefore we have 
\begin{align}
h(v(x)+v(y),x+y)\leq 0 \ \implies v(x+y)\leq v(x)+v(y)
\end{align}
Notice that for $x=0$ or $y=0$, the triangle inequality trivially holds. For $x,y\neq 0$, denote 
$$I(x)=\{i||x_i|>(1-\epsilon)v(x)\}.$$
It is easy to verify that the sufficient and necessary condition for 
\begin{align}\label{tri_equality}
v(x+y)=v(x)+v(y)
\end{align}
is
\begin{equation}
\begin{aligned}\label{tri_iffcondition1}
I(x)&=I(y)\\
\exists \alpha>0,\quad  y_i &= \alpha x_i,\forall i \in I(x).
\end{aligned}
\end{equation}
Expression (\ref{tri_iffcondition1}) imply Equality (\ref{tri_equality}) trivially holds. Suppose Equality (\ref{tri_equality}) is true, the inequalities in (\ref{derivation_triinequality}) holds as equalities. In particular, the last inequality uses the triangle inequality for $\ell_q$-norm. Therefore, we should have $\exists \alpha>0$,
\begin{align}
\big[|y|-(1-\epsilon)v(y)\big]_+=\alpha \big[|x|-(1-\epsilon)v(x)\big]_+.
\end{align}
Then we must have 
\begin{equation*}
\begin{aligned}
I(x)&=I(y),\\
v(y)&=\alpha v(x)\ &&\implies |y_i|&&=\alpha |x_i|,&&\forall i\in I(x),\\
|x_i+y_i|&=|x_i|+|y_i| ,\forall i\in I(x) \ &&\implies y_i&&=\alpha x_i,&&\forall i\in I(x).
\end{aligned}
\end{equation*}
\end{proof}
\begin{lemma*}
For any $\epsilon\in (0,1]$ and $q\geq 1$, the following tight bounds of the $\epsilon q$-norm holds for any $x\in \mathbb{R}^p$,
\begin{alignat}{2}
\frac{\|x\|_1}{p^{1-1/q}(p^{1/q}(1-\epsilon)+\epsilon)}&\leq \|x\|_{\epsilon q}&&\leq \|x\|_1\\
\frac{\|x\|_q}{p^{1/q}(1-\epsilon)+\epsilon}&\leq \|x\|_{\epsilon q}&&\leq \|x\|_q\\
\frac{\|x\|_2}{p^{(1/2-1/q)_+}(p^{1/q}(1-\epsilon)+\epsilon)} &\leq \|x\|_{\epsilon q}&&\leq p^{(1/q-1/2)_+}\|x\|_2\\
\|x\|_\infty&\leq \|x\|_{\epsilon q}&&\leq \frac{p^{1/q}\|x\|_\infty}{p^{1/q}(1-\epsilon)+\epsilon}
\end{alignat}
\end{lemma*}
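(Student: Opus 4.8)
The plan is to reduce all four displays to two ``master'' sandwiches --- the one against $\|x\|_q$ and the one against $\|x\|_\infty$ --- and then read off the $\ell_1$ and $\ell_2$ versions for free by composing with the elementary inequalities $\|x\|_q\le\|x\|_1\le p^{1-1/q}\|x\|_q$ and $p^{-(1/2-1/q)_+}\|x\|_2\le\|x\|_q\le p^{(1/q-1/2)_+}\|x\|_2$. The case $q=\infty$ is immediate since there $\|\cdot\|_{\epsilon q}=\|\cdot\|_\infty$, so assume $q<\infty$. The only tool needed is the \emph{comparison principle} already implicit in the proof of Lemma~\ref{lemma_isnorm}: writing $h(v,x)=\|S_{(1-\epsilon)v}(x)\|_q-\epsilon v$, this function is continuous and decreasing in $v$ with $\|x\|_{\epsilon q}$ as its unique root, hence for any $a,b\ge 0$ one has $\|x\|_{\epsilon q}\le b\iff h(b,x)\le 0$ and $\|x\|_{\epsilon q}\ge a\iff h(a,x)\ge 0$. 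Thus every bound becomes the purely algebraic task of checking the sign of $h$ at an explicit candidate value.

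For the $\|x\|_q$ sandwich: for the upper bound I evaluate $h$ at $b=\|x\|_q$. Since $q\ge 1$, $|x_i|\le\|x\|_\infty\le\|x\|_q$ for every $i$, and a two-case check gives the pointwise estimate $\bigl(|x_i|-(1-\epsilon)\|x\|_q\bigr)_+\le\epsilon|x_i|$; raising to the $q$-th power and summing yields $\|S_{(1-\epsilon)\|x\|_q}(x)\|_q\le\epsilon\|x\|_q$, i.e. $h(\|x\|_q,x)\le 0$. For the lower bound I evaluate at $a=\|x\|_q/(p^{1/q}(1-\epsilon)+\epsilon)$: using $(|x|-t\mathbf{1})_+=|x|-\min(|x|,t\mathbf{1})$ and the triangle inequality, $\|S_t(x)\|_q\ge\|x\|_q-\|\min(|x|,t\mathbf{1})\|_q\ge\|x\|_q-t\,p^{1/q}$; plugging $t=(1-\epsilon)a$ and the definition of $a$ gives $\|S_{(1-\epsilon)a}(x)\|_q\ge\epsilon a$, i.e. $h(a,x)\ge 0$. (Alternatively this lower bound drops out of the $\varepsilon$-decomposition of Lemma~\ref{lemma_decomposition} together with $\|x^{(1-\epsilon,q)}\|_q\le p^{1/q}\|x^{(1-\epsilon,q)}\|_\infty$.)

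For the $\|x\|_\infty$ sandwich: the lower bound $\|x\|_\infty\le\|x\|_{\epsilon q}$ follows by evaluating $h$ at $a=\|x\|_\infty$, since the coordinate attaining the maximum already contributes $|x_{i^\ast}|-(1-\epsilon)\|x\|_\infty=\epsilon\|x\|_\infty$ to the $\ell_q$-norm, so $h(\|x\|_\infty,x)\ge 0$. For the upper bound I evaluate at $b=p^{1/q}\|x\|_\infty/(p^{1/q}(1-\epsilon)+\epsilon)$, which is chosen precisely so that $\|x\|_\infty-(1-\epsilon)b=\epsilon b\,p^{-1/q}$; then $|S_{(1-\epsilon)b}(x)_i|\le\|x\|_\infty-(1-\epsilon)b$ for every $i$, whence $\|S_{(1-\epsilon)b}(x)\|_q\le p^{1/q}\bigl(\|x\|_\infty-(1-\epsilon)b\bigr)=\epsilon b$, i.e. $h(b,x)\le 0$. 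Finally, combining the $\|x\|_q$ sandwich with the two chains of elementary $\ell_p$ inequalities quoted above produces exactly the $\ell_1$-line and $\ell_2$-line displays, and the constants are sharp: $x=e_1$ saturates the right-hand inequalities and $x=\mathbf{1}$ saturates the left-hand ones (for the $\ell_2$ line the witnessing vectors depend on the sign of $1/2-1/q$, with equality throughout when $q=2$).

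I expect no serious obstacle here; the only mildly delicate points are spotting the pointwise inequality $\bigl(|x_i|-(1-\epsilon)\|x\|_q\bigr)_+\le\epsilon|x_i|$ behind the $\ell_q$ upper bound and the exact balancing value of $b$ for the $\ell_\infty$ upper bound --- both of which are essentially forced once one commits to the comparison-principle strategy. Everything else is bookkeeping with standard norm inequalities.
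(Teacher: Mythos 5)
Your proposal is correct and follows essentially the same strategy as the paper's proof: check the sign of $h(v,x)=\|S_{(1-\epsilon)v}(x)\|_q-\epsilon v$ at the candidate values $\|x\|_q$, $\|x\|_\infty$, $p^{1/q}\|x\|_\infty/(p^{1/q}(1-\epsilon)+\epsilon)$ and $\|x\|_q/(p^{1/q}(1-\epsilon)+\epsilon)$, then obtain the $\ell_1$ and $\ell_2$ lines by composing with standard norm comparisons. The only (harmless) deviation is that for the lower bound against $\|x\|_q$ you give a direct reverse-triangle-inequality argument in place of the paper's appeal to the $\varepsilon$-decomposition of Lemma~\ref{lemma_decomposition}, which makes that step self-contained; both routes are valid.
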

\begin{proof}[Proof of Lemma \ref{lemma_tightbound}]
Notice that 
\begin{align}
h(v,x)+\epsilon v=\|[|x|-(1-\epsilon)v]_+\|_q=\|[\epsilon|x|-(1-\epsilon)(v-|x|)]_+\|_q
\end{align}
If $v=\|x\|_q$, we have $v-|x|\geq v-\|x\|_\infty \geq 0$, implying $\|[\epsilon|x|-(1-\epsilon)(\|x\|_q-|x|)]_+\|_q\leq \epsilon\|x\|_q$. Therefore, we have $h(\|x\|_q,x)\leq 0$, implying that $\|x\|_{\epsilon q}\leq \|x\|_q$. Since $\|x\|_q\leq \|x\|_1$. We also have $\|x\|_{\epsilon q}\leq \|x\|_1$. Notice that if $x=e_i$ where $e_i$ is a unit vector that $i$th entry is one and the others are zero, we have $\|x\|_{\epsilon q}=\|x\|_q=\|x\|_1$.

Notice that 
\begin{align}
h(\|x\|_\infty,x)+\epsilon \|x\|_\infty &=\|[|x|-(1-\epsilon)\|x\|_\infty]_+\|_q\geq \|[|x|-(1-\epsilon)\|x\|_\infty]_+\|_\infty\\
&=\|x\|_\infty-(1-\epsilon)\|x\|_\infty=\epsilon \|x\|_\infty.
\end{align}
Therefore $h(\|x\|_\infty,x)\geq 0$, implying $\|x\|_\infty\leq \|x\|_{\epsilon q}$. The inequality is achievable, for example if $x=e_i$.

Denote $v_\infty = \frac{p^{1/q}\|x\|_\infty}{p^{1/q}(1-\epsilon)+\epsilon}$ and we have 
\begin{align}
h(v_\infty,x)+\epsilon v_\infty&=\|[|x|-(1-\epsilon)v_\infty]_+\|_q\leq p^{1/q}\|[|x|-(1-\epsilon)v_\infty]_+\|_\infty\\
&=\frac{p^{1/q}}{p^{1/q}(1-\epsilon)+\epsilon}\|[\epsilon |x|-(1-\epsilon)(\|x\|_\infty-|x|)]_+\|_\infty\\
&\leq \frac{p^{1/q}}{p^{1/q}(1-\epsilon)+\epsilon}\epsilon\|x\|_\infty=\epsilon v_\infty
\end{align}
Therefore $h(v_\infty,x)\leq 0$, implying $\|x\|_{\epsilon q}\leq \frac{p^{1/q}\|x\|_\infty}{p^{1/q}(1-\epsilon)+\epsilon}$. The inequality is achievable, for example if $x$ is an all-one vector.

Finally, based on $\epsilon$-decomposition, 
\begin{align}
x = x^{(\epsilon,q)}+x^{(1-\epsilon,q)} 
\end{align}
with $x^{(\epsilon,q)},x^{(1-\epsilon,q)}\in \mathbb{R}^p$ such that 
\begin{align}
\|x^{(\epsilon,q)}\|_q&=\epsilon\|x\|_{\epsilon q},\\
\|x^{(1-\epsilon,q)}\|_\infty &= (1-\epsilon)\|x\|_{\epsilon q},
\end{align} 
implying
\begin{align}
\|x\|_{\epsilon q}&=\epsilon\|x\|_{\epsilon q}+(1-\epsilon)\|x\|_{\epsilon q}=\|x^{(\epsilon,q)}\|_q+\|x^{(1-\epsilon,q)}\|_\infty\\
\|x\|_{q}&\leq \|x^{(\epsilon,q)}\|_q+\|x^{(1-\epsilon,q)}\|_q\leq \epsilon\|x\|_{\epsilon q}+p^{1/q}\|x^{(1-\epsilon,q)}\|_\infty=\epsilon\|x\|_{\epsilon q}+p^{1/q}(1-\epsilon)\|x\|_{\epsilon q}.
\end{align}
Therefore we have $\frac{\|x\|_q}{p^{1/q}(1-\epsilon)+\epsilon}\leq \|x\|_{\epsilon q}$. Notice that $\|x\|_q\geq q^{1/q-1} \|x\|_1$, we then have $\frac{\|x\|_1}{q^{1-1/q}(p^{1/q}(1-\epsilon)+\epsilon)}\leq \|x\|_{\epsilon q}$. The inequalities are achievable, for example if $x$ is an all-one vector.

Since $\frac{\|x\|_2}{p^{(1/2-1/q)_+}}\leq \|x\|_q\leq p^{(1/q-1/2)_+}\|x\|_2$, we have 

\begin{align}
\frac{\|x\|_2}{p^{(1/2-1/q)_+}(p^{1/q}(1-\epsilon)+\epsilon)} \leq \frac{\|x\|_q}{p^{1/q}(1-\epsilon)+\epsilon}\leq \|x\|_{\epsilon q}\leq \|x\|_q\leq p^{(1/q-1/2)_+}\|x\|_2.
\end{align}
It concludes the proof.
\end{proof}
Denote 
\begin{align}
I_* &=\{i:|x_i|-(1-\epsilon)\|x\|_\epsilon>0\},\\
U(v)&=\{u\in \mathbb{R}^p:\|u\|_q\leq \epsilon v\},\\
V(v)&=\{v\in \mathbb{R}^p:\|v\|_\infty\leq (1-\epsilon) v\}.
\end{align}
\begin{lemma*}[$\varepsilon$-decomposition]
Any vector $x\in \mathbb{R}^p$ can be uniquely decomposed and written in the form 
\begin{align}
x = x^{(\epsilon,q)}+x^{(1-\epsilon,q)} 
\end{align}
with $x^{(\epsilon,q)},x^{(1-\epsilon,q)}\in \mathbb{R}^p$ such that 
\begin{align}
\|x^{(\epsilon,q)}\|_q&=\epsilon\|x\|_{\epsilon q},\\
\|x^{(1-\epsilon,q)}\|_\infty &= (1-\epsilon)\|x\|_{\epsilon q}.
\end{align} 
In addition, we have 
\begin{align}
\{x\in\mathbb{R}^p:\|x\|_{\epsilon q}\leq \nu\}= \{u+v:u,v\in\mathbb{R}^p,\|u\|_q\leq \epsilon \nu,\|v\|_\infty\leq (1-\epsilon) \nu\}. 
\end{align}
\end{lemma*}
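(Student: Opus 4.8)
The plan is to produce the decomposition \emph{explicitly} by soft‑thresholding, read off the set identity~(\ref{setrep}) as a consequence, and then establish uniqueness by a coordinatewise squeezing argument. Write $\nu:=\|x\|_{\epsilon q}$ and set $x^{(\epsilon,q)}:=S_{(1-\epsilon)\nu}(x)$ and $x^{(1-\epsilon,q)}:=x-x^{(\epsilon,q)}$. The identity $\|x^{(\epsilon,q)}\|_q=\epsilon\nu$ is immediate from the defining equation~(\ref{defepsq}) of $\|\cdot\|_{\epsilon q}$. For the second identity, note that coordinatewise $x_i-S_{(1-\epsilon)\nu}(x)_i=\text{sgn}(x_i)\min\{|x_i|,(1-\epsilon)\nu\}$, so $\|x^{(1-\epsilon,q)}\|_\infty=\min\{\|x\|_\infty,(1-\epsilon)\nu\}$; and $\|x\|_\infty\ge(1-\epsilon)\nu$, because otherwise $S_{(1-\epsilon)\nu}(x)=0$ and then~(\ref{defepsq}) forces $\epsilon\nu=0$, i.e. $x=0$, a degenerate case in which both claimed equalities read $0=0$. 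Hence $\|x^{(1-\epsilon,q)}\|_\infty=(1-\epsilon)\nu$, which settles existence.

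\paragraph{Set identity.} Take first $x=u+v$ with $\|u\|_q\le\epsilon\nu$ and $\|v\|_\infty\le(1-\epsilon)\nu$. Then $|x_i|\le|u_i|+(1-\epsilon)\nu$ gives $(|x_i|-(1-\epsilon)\nu)_+\le|u_i|$ for every $i$, hence $\|S_{(1-\epsilon)\nu}(x)\|_q\le\|u\|_q\le\epsilon\nu$. Since $v\mapsto h(v,x):=\|S_{(1-\epsilon)v}(x)\|_q-\epsilon v$ is monotonically decreasing with the unique nonnegative root $\|x\|_{\epsilon q}$ (as shown in the proof of Lemma~\ref{lemma_isnorm}), the inequality just obtained says $h(\nu,x)\le0$, whence $\|x\|_{\epsilon q}\le\nu$. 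Conversely, if $\|x\|_{\epsilon q}\le\nu$, the canonical decomposition of the previous paragraph exhibits $x$ as such a sum, since $\|x^{(\epsilon,q)}\|_q=\epsilon\|x\|_{\epsilon q}\le\epsilon\nu$ and $\|x^{(1-\epsilon,q)}\|_\infty=(1-\epsilon)\|x\|_{\epsilon q}\le(1-\epsilon)\nu$. This is~(\ref{setrep}).

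\paragraph{Uniqueness.} Suppose $x=a+b$ with $\|a\|_q=\epsilon\nu$ and $\|b\|_\infty=(1-\epsilon)\nu$, $\nu=\|x\|_{\epsilon q}$. The same coordinatewise estimate as above yields $|S_{(1-\epsilon)\nu}(x)_i|=(|x_i|-(1-\epsilon)\nu)_+\le|a_i|$; raising to the $q$‑th power and summing, and using $\sum_i|S_{(1-\epsilon)\nu}(x)_i|^q=(\epsilon\nu)^q=\sum_i|a_i|^q$, forces equality in each coordinate, $|a_i|=|S_{(1-\epsilon)\nu}(x)_i|$. On $I_*=\{i:|x_i|>(1-\epsilon)\nu\}$ this gives $|a_i|=|x_i|-(1-\epsilon)\nu>0$; combining $b_i=x_i-a_i$ with $|b_i|\le(1-\epsilon)\nu$ forces $|x_i-a_i|=|x_i|-|a_i|$, which for real numbers with $|x_i|>|a_i|>0$ forces $a_i$ to have the same sign as $x_i$, i.e. $a_i=S_{(1-\epsilon)\nu}(x)_i$; off $I_*$ we get $|a_i|=0$. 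Hence $a=x^{(\epsilon,q)}$ and $b=x^{(1-\epsilon,q)}$.

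\paragraph{Main obstacle and boundary cases.} The delicate step is the last one: converting the $\ell_q$‑\emph{norm} equality $\|a\|_q=\|S_{(1-\epsilon)\nu}(x)\|_q$ into coordinatewise equalities. This rests on strict monotonicity of $t\mapsto t^q$ on $[0,\infty)$, which is exactly why the uniqueness part needs $q<\infty$; for $q=\infty$ the soft‑threshold decomposition still exists but is not unique in general, so the statement should be read there in the existence sense. For the extension to $0\le q<1$ mentioned in the remark the functional $\|\cdot\|_q$ is no longer a norm, but $h(\cdot,x)$ is still monotonically decreasing — using $(s+t)^q\le s^q+t^q$ for $s,t\ge0$ when $0<q<1$, and a support/cardinality argument when $q=0$ — so the three steps above carry over with only cosmetic changes.
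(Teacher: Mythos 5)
Your proof is correct. You construct the same explicit soft-thresholding decomposition $x^{(\epsilon,q)}=S_{(1-\epsilon)\|x\|_{\epsilon q}}(x)$ that the paper uses for existence, but your arguments for uniqueness and for the set identity (\ref{setrep}) take a genuinely different route. For the inclusion $\{u+v:\|u\|_q\leq\epsilon\nu,\ \|v\|_\infty\leq(1-\epsilon)\nu\}\subseteq\{x:\|x\|_{\epsilon q}\leq\nu\}$ the paper argues by contradiction, reusing its uniqueness inequality chain $\|x-v\|_q^q\geq\|x^{(\epsilon,q)}\|_q^q+\|x^{(1-\epsilon,q)}-v\|_q^q$; you instead note the coordinatewise bound $(|x_i|-(1-\epsilon)\nu)_+\leq|u_i|$ and invoke the monotonicity of $h(\cdot,x)$ from the proof of Lemma \ref{lemma_isnorm}, which is shorter and avoids the contradiction. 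For uniqueness the paper pins down the $\ell_\infty$ component, showing that any competitor $v\neq x^{(1-\epsilon,q)}$ in the $\ell_\infty$ ball forces $\|x-v\|_q$ strictly above $\epsilon\|x\|_{\epsilon q}$ via the superadditivity $(\alpha+\beta)^q\geq\alpha^q+\beta^q$; you pin down the $\ell_q$ component by a squeeze: termwise domination together with equality of the $q$-th power sums forces $|a_i|=(|x_i|-(1-\epsilon)\nu)_+$, and the sign is then recovered from $|x_i-a_i|=|x_i|-|a_i|$. Both rest on strict monotonicity of $t\mapsto t^q$ for $q<\infty$ and are of comparable strength (your squeeze in fact also works with $\|a\|_q\leq\epsilon\nu$ in place of equality, recovering the paper's slightly stronger formulation in which uniqueness holds among all admissible pairs, not only those attaining the equalities). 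Your explicit handling of the degenerate cases ($x=0$, and why $\|x\|_\infty\geq(1-\epsilon)\nu$) and the caveat that uniqueness fails at $q=\infty$ are points the paper glosses over; only your closing remark on $0\leq q<1$ is sketchy, but that concerns the supplement's extension rather than the stated lemma.
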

\begin{proof}[Proof of Lemma \ref{lemma_decomposition}]
Denote \begin{align}
I_* &=\{i:|x_i|-(1-\epsilon)\|x\|_\epsilon>0\},\\
U(v)&=\{u\in \mathbb{R}^p:\|u\|_q\leq \epsilon v\},\\
V(v)&=\{v\in \mathbb{R}^p:\|v\|_\infty\leq (1-\epsilon) v\}.
\end{align}
We consider
\begin{align}
x^{(\epsilon,q)}&=\text{sgn}(x)\odot [|x|-(1-\epsilon)\|x\|_{\epsilon q}]_+,\\
x^{(1-\epsilon,q)}&=\text{sgn}(x)\odot \big\{|x|-[|x|-(1-\epsilon)\|x\|_{\epsilon q}]_+\big\}.
\end{align}
Then we have $x^{(\epsilon,q)}+x^{(1-\epsilon,q)}=x$, $\|x^{(\epsilon,q)}\|_q=\epsilon\|x\|_{\epsilon q},\|x^{(1-\epsilon,q)}\|_\infty = (1-\epsilon)\|x\|_{\epsilon q}.$

We now prove the uniqueness of this decomposition. If $x=0$,the statement is trivial. For $x\neq 0$, consider any $v\in \mathbb{R}^p$ such that $v\in V(\|x\|_{\epsilon q})$ and $v \neq x^{(1-\epsilon,q)}$. It is easy to see that 
\begin{align}
\|x-v\|_q^q&=\|x^{(\epsilon,q)}+x^{(1-\epsilon,q)}-v\|_q^q=\sum\limits_{i=1}^p |x_i^{(\epsilon,q)}+x_i^{(1-\epsilon,q)}-v_i|^q\\
&=\sum\limits_{i=1,i\in I^*}^p |x_i^{(\epsilon,q)}+x_i^{(1-\epsilon,q)}-v_i|^q+\sum\limits_{i=1,i\notin I^*}^p |x_i^{(\epsilon,q)}+x_i^{(1-\epsilon,q)}-v_i|^q\\
&=\sum\limits_{i=1,i\in I^*}^p \big||x_i^{(\epsilon,q)}|+|x_i^{(1-\epsilon,q)}|-\text{sgn}(x_i)v_i\big|^q+\sum\limits_{i=1,i\notin I^*}^p |x_i^{(1-\epsilon,q)}-v_i|^q\\
&=\sum\limits_{i=1,i\in I^*}^p \big(|x_i^{(\epsilon,q)}|+|x_i^{(1-\epsilon,q)}|-\text{sgn}(x_i)v_i\big)^q+\sum\limits_{i=1,i\notin I^*}^p |x_i^{(1-\epsilon,q)}-v_i|^q\\
&\geq \sum\limits_{i=1,i\in I^*}^p |x_i^{(\epsilon,q)}|^q+(|x_i^{(1-\epsilon,q)}|-\text{sgn}(x_i)v_i)^q+\sum\limits_{i=1,i\notin I^*}^p |x_i^{(1-\epsilon,q)}-v_i|^q\\
&=\|x^{(\epsilon,q)}\|_q^q+\|x^{(1-\epsilon,q)}-v\big\|_q^{q}> \epsilon^p\|x\|_{\epsilon q}^q
\end{align}
where we are using the fact that 
\begin{align}
&v\in V(\|x\|_{\epsilon q}), \|x^{(1-\epsilon,q)}\|_\infty=(1-\epsilon)\|x\|_{\epsilon,q}\implies |x_i^{(1-\epsilon,q)}|-\text{sgn}(x_i)v_i\geq 0, \ \forall i\in I_*.\\
&\because \text{With Minkowski inequality }(a^q+b^q)^{1/q}\leq (a^q)^{1/q}+(b^q)^{1/q}=a+b,\forall a\geq 0,b\geq 0, q\geq 1.\\
&\therefore \big(|x_i^{(\epsilon,q)}|+|x_i^{(1-\epsilon,q)}|-\text{sgn}(x_i)v_i\big)^q\geq |x_i^{(\epsilon,q)}|^q+ \big(|x_i^{(1-\epsilon,q)}|-\text{sgn}(x_i)v_i\big)^q \\
&\sum\limits_{i=1,i\in I^*}^p\big(|x_i^{(1-\epsilon,q)}|-\text{sgn}(x_i)v_i\big)^{q}+\sum\limits_{i=1,i\notin I^*}^p |x_i^{(1-\epsilon,q)}-v_i|^q\\
=&\sum\limits_{i=1,i\in I^*}^p\big|x_i^{(1-\epsilon,q)}-(x_i)v_i\big|^{q}+\sum\limits_{i=1,i\notin I^*}^p |x_i^{(1-\epsilon,q)}-v_i|^q=\|x^{(1-\epsilon,q)}-v\|_q^q.
\end{align}
Therefore, we conclude the proof of decomposition uniqueness.

Based on the $\epsilon$-decomposition, for any $\nu\geq 0$, we can see that $\forall x\in \{x\in\mathbb{R}^p:\|x\|_{\epsilon q}\leq \nu\}$, we have $x^{(1-\epsilon,q)}, x^{(\epsilon,q)}\in \mathbb{R}^p$ such that $x=x^{(1-\epsilon,q)}+x^{(\epsilon,q)},\|x^{(\epsilon,q)}\|_q= \epsilon\|x\|_{\epsilon q}\leq \epsilon \nu,\|x^{(1-\epsilon,q)}\|_\infty= (1-\epsilon)\|x\|_{\epsilon q}\leq (1-\epsilon) \nu$. Therefore, 
\begin{align}
\{x\in\mathbb{R}^p:\|x\|_{\epsilon q}\leq \nu\}\subseteq \{u+v:u,v\in\mathbb{R}^p,\|u\|_q\leq \epsilon \nu,\|v\|_\infty\leq (1-\epsilon) \nu\}. 
\end{align}
On the other hand, for any $u,v\in\mathbb{R}^p$ such that $\|u\|_q\leq \epsilon \nu,\|v\|_\infty\leq (1-\epsilon) \nu$. We prove that $x=u+v$ satisfying $\|x\|_{\epsilon q}\leq \nu$. Suppose it is false, i.e. $\|x\|_{\epsilon q}>\nu$. Then $\|v\|_\infty\leq (1-\epsilon)v< (1-\epsilon)\|x\|_{\epsilon q}$. From $\epsilon$-decomposition and similar argument from the proof above, we have
\begin{align}
(\epsilon \nu)^q \geq \|u\|_q^q&=\|x-v\|^q_q=\|x^{(\epsilon,q)}+x^{(1-\epsilon,q)}-v\|^q_q\\
&\geq\|x^{(\epsilon,q)}\|_q^q+\|x^{(1-\epsilon,q)}-v\big\|_q^{q}\geq \epsilon^q \|x\|_{\epsilon q}^q>\epsilon^q\nu^q. 
\end{align}
Contradict! Therefore we must have $\|x\|_{\epsilon q}\leq\nu$, implying 
\begin{align}
\{x\in\mathbb{R}^p:\|x\|_{\epsilon q}\leq \nu\}= \{u+v:u,v\in\mathbb{R}^p,\|u\|_q\leq \epsilon \nu,\|v\|_\infty\leq (1-\epsilon) \nu\}. 
\end{align}
For $q=0$, the results can be easily verified. For $0<q<1$, once again we consider proof by contradiction.  Suppose $\|x\|_{\epsilon q}>\nu$. Then $\|v\|_\infty\leq (1-\epsilon)v< (1-\epsilon)\|x\|_{\epsilon q}$.
Then 
\begin{align}
(1-\epsilon)\|x\|_{\epsilon q}>\|v\|_\infty&=\|x-u\|_\infty=\|x^{(1-\epsilon,q)}+x^{(\epsilon,q)}-u\|_\infty=\big\||x^{(1-\epsilon,q)}|+|x^{(\epsilon,q)}|-\text{sgn}(x)\odot u\big\|_\infty\\
&\geq \big\|x^{(1-\epsilon,q)}\|_\infty = (1-\epsilon)\|x\|_{\epsilon q}.
\end{align}
Contradict! We are using the fact that there must exist an $i$ such that $|x_i^{(\epsilon,q)}|-(\text{sgn}(x)\odot u)_i\geq 0$. If $|x_i^{(\epsilon,q)}|-(\text{sgn}(x)\odot u)_i<0,\forall i\geq 1$, then $\|u\|_{ q}>\|x^{(\epsilon,q)}\|_{ q}=\epsilon \|x\|_{\epsilon q}>\epsilon \nu$. Since $\|u\|_{q}\leq \epsilon \nu$, we get a contradiction.

\end{proof}
\begin{lemma*}
For any $\epsilon\in (0,1]$ and $q\in [1,\infty)$, the norm dual of the $\epsilon q$-norm is given by 
\begin{alignat}{2}
\|y\|_{\epsilon q}^*=\begin{cases}\epsilon\|y\|_{q/(q-1)}+(1-\epsilon)\|y\|_{1},\quad &q<\infty,\\
\|y\|_{1}, &q=\infty\end{cases}.
\end{alignat}
\end{lemma*}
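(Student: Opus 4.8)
The plan is to compute the dual norm straight from its definition $\|y\|_{\epsilon q}^*=\sup_{\|x\|_{\epsilon q}\leq 1}\langle y,x\rangle$ and to feed in the set representation (\ref{setrep}) from Lemma \ref{lemma_decomposition}, which identifies the unit ball $\{\|x\|_{\epsilon q}\leq 1\}$ with the Minkowski sum of an $\ell_q$ ball of radius $\epsilon$ and an $\ell_\infty$ ball of radius $1-\epsilon$. All the work has already been done in the decomposition lemma, so this proof is short.

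For $q\in[1,\infty)$, substituting $\nu=1$ into (\ref{setrep}) gives
\[
\|y\|_{\epsilon q}^*=\sup\big\{\langle y,u+v\rangle:\ \|u\|_q\leq\epsilon,\ \|v\|_\infty\leq 1-\epsilon\big\}.
\]
Because $u$ and $v$ range over their respective balls independently and $\langle y,u+v\rangle=\langle y,u\rangle+\langle y,v\rangle$ is linear, the supremum separates:
\[
\|y\|_{\epsilon q}^*=\sup_{\|u\|_q\leq\epsilon}\langle y,u\rangle+\sup_{\|v\|_\infty\leq 1-\epsilon}\langle y,v\rangle.
\]
By positive homogeneity together with the standard fact that the dual of $\ell_q$ is $\ell_{q^*}$ with $q^*=q/(q-1)$, and the dual of $\ell_\infty$ is $\ell_1$, the two terms are exactly $\epsilon\|y\|_{q^*}$ and $(1-\epsilon)\|y\|_1$, which is the claimed formula.

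For $q=\infty$, I would note that a Minkowski sum of an $\ell_\infty$ ball of radius $\epsilon$ and an $\ell_\infty$ ball of radius $1-\epsilon$ is precisely the $\ell_\infty$ ball of radius $1$; hence the defining equation (\ref{defepsq}) makes $\|\cdot\|_{\epsilon\infty}=\|\cdot\|_\infty$, whose dual norm is $\|\cdot\|_1$, matching the stated case. There is essentially no obstacle here once Lemma \ref{lemma_decomposition} is available; the single point deserving a word of care is the splitting of the supremum over the Minkowski sum, which is legitimate exactly because the two summand sets vary independently and the pairing is bilinear — this is where the geometric content of the $\epsilon$-decomposition is used. An alternative route, bypassing (\ref{setrep}), would be to run Lagrangian duality directly on the convex program $\max\{\langle y,x\rangle:\|x\|_{\epsilon q}\leq 1\}$ after introducing the splitting $x=u+v$; but that merely re-proves the decomposition, so invoking Lemma \ref{lemma_decomposition} is cleaner.
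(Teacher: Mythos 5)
Your proposal is correct and follows essentially the same route as the paper: both compute $\sup_{\|x\|_{\epsilon q}\le 1}\langle y,x\rangle$ by substituting the Minkowski-sum representation of the unit ball from Lemma \ref{lemma_decomposition}, splitting the supremum over the two independent balls, and invoking the standard $\ell_q$--$\ell_{q/(q-1)}$ and $\ell_\infty$--$\ell_1$ dualities. Your explicit justification of the split and your handling of the $q=\infty$ case are, if anything, slightly more careful than the paper's one-line dismissal of that case.
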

\begin{proof}[Proof of Lemma \ref{lemma_dual_norm_primitive}]
The case $q=\infty$ is trivial. We only consider the case that $q<\infty$.

For $q<\infty$, 
\begin{align}
\|y\|_{\epsilon q}^*&=\sup\limits_{\|x\|_{\epsilon q}\leq 1}\langle y,x\rangle=\sup\limits_{\|u\|_{q}\leq \epsilon,\|v\|_{\infty}\leq 1-\epsilon}\langle y,u+v\rangle=\epsilon\sup\limits_{\|u\|_{q}\leq 1}\langle y,u\rangle+(1-\epsilon)\sup\limits_{\|v\|_{\infty}\leq 1}\langle y,x\rangle\\
&=\epsilon\|y\|_{q/(q-1)}+(1-\epsilon)\|y\|_1.
\end{align}
\end{proof}
\begin{lemma*}
First denote $\epsilon_g:=\frac{(1-\tau)w_g}{\tau+(1-\tau)w_g},\forall g\in\{1,...,G\}$ and $\alpha^*=(\alpha^*_1,...,\alpha^*_G)$ where $\alpha^*_g=\frac{\alpha_g}{\alpha_g-1}$. If $\alpha_g=\infty,\alpha_g^*=1$. The double sparsity norm $\|\cdot\|_\ds$ satisfies the following properties: $\forall \beta\in\mathbb{R}^p,x\in \mathbb{R}^p$,
\begin{align}
\|\beta\|_\ds =\sum\limits_{g\in \{1,...,G\}}(\tau+(1-\tau)w_g)\|\beta_{(g)}\|_{\epsilon_g \alpha_g^*}^*, \quad \textup{and}\quad  \|x\|_\ds^* = \max\limits_{g\in \{1,...,G\}}\frac{\|x_{(g)}\|_{\epsilon_g \alpha_g^*}}{\tau+(1-\tau)w_g}.
\end{align}
\end{lemma*}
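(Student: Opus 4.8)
The plan is to prove the two identities separately: the primal one by regrouping terms and invoking Lemma~\ref{lemma_dual_norm_primitive}, the dual one by a general duality fact for norms that split over a coordinate partition.

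\textbf{The primal identity.} Since the groups $(1),\dots,(G)$ partition $\{1,\dots,p\}$ we have $\|\beta\|_1=\sum_{g=1}^G\|\beta_{(g)}\|_1$, so that
\begin{align*}
\|\beta\|_\ds=\sum_{g=1}^G\Big(\tau\|\beta_{(g)}\|_1+(1-\tau)w_g\|\beta_{(g)}\|_{\alpha_g}\Big).
\end{align*}
Factoring $\tau+(1-\tau)w_g$ out of the $g$-th summand rewrites the bracket as $(1-\epsilon_g)\|\beta_{(g)}\|_1+\epsilon_g\|\beta_{(g)}\|_{\alpha_g}$ with $\epsilon_g=\tfrac{(1-\tau)w_g}{\tau+(1-\tau)w_g}$. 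Because $(\alpha_g^*)^*=\alpha_g$, Lemma~\ref{lemma_dual_norm_primitive} applied with exponent $\alpha_g^*$ identifies this expression as exactly $\|\beta_{(g)}\|_{\epsilon_g\alpha_g^*}^*$, which gives the first formula. The degenerate case $\epsilon_g=0$ (that is, $\tau=1$) is covered by the $\ell_\infty$ convention for the $\epsilon q$-norm noted after Lemma~\ref{lemma_isnorm}, whose dual is $\ell_1$, and reproduces the pure $\ell_1$ term; the case $\epsilon_g=1$ ($\tau=0$) is already inside the range of Lemma~\ref{lemma_dual_norm_primitive}.

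\textbf{The dual norm.} Set $N_g(z):=(\tau+(1-\tau)w_g)\|z\|_{\epsilon_g\alpha_g^*}^*$ for $z\in\mathbb{R}^{p_g}$, so that $\|\beta\|_\ds=\sum_{g=1}^G N_g(\beta_{(g)})$ is block-separable along the partition. I will prove the general fact that such a norm has dual $x\mapsto\max_g N_g^*(x_{(g)})$. Starting from $\|x\|_\ds^*=\sup_{\|\beta\|_\ds\le1}\langle x,\beta\rangle$, and using that both the constraint and the objective decouple across groups, parametrize a feasible $\beta$ by its per-group budgets $t_g:=N_g(\beta_{(g)})\ge0$ with $\sum_g t_g\le1$; for fixed $(t_g)$ the supremum of $\sum_g\langle x_{(g)},\beta_{(g)}\rangle$ over $\{\beta:N_g(\beta_{(g)})=t_g\ \forall g\}$ equals $\sum_g t_g N_g^*(x_{(g)})$ by positive homogeneity and the definition of the dual norm. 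Hence $\|x\|_\ds^*=\sup\{\sum_g t_g N_g^*(x_{(g)}):t_g\ge0,\ \sum_g t_g\le1\}=\max_g N_g^*(x_{(g)})$, the last equality because a linear functional on the standard simplex attains its maximum at a vertex. Finally, positive homogeneity gives $N_g^*=\tfrac{1}{\tau+(1-\tau)w_g}\big(\|\cdot\|_{\epsilon_g\alpha_g^*}^*\big)^*$, and reflexivity of finite-dimensional normed spaces gives $\big(\|\cdot\|_{\epsilon_g\alpha_g^*}^*\big)^*=\|\cdot\|_{\epsilon_g\alpha_g^*}$, so $N_g^*(x_{(g)})=\|x_{(g)}\|_{\epsilon_g\alpha_g^*}\big/(\tau+(1-\tau)w_g)$ and the second formula follows.

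\textbf{Main obstacle.} The algebra here is light; the one step that deserves care is the block-separable duality argument — justifying the reparametrization by the budgets $(t_g)$, the exchange of the inner supremum with the simplex constraint, and the reduction to a vertex (including the edge cases where several $N_g^*(x_{(g)})$ coincide or some $x_{(g)}$ vanish, where the argument still goes through since the linear objective is maximized by putting all mass on a maximizing coordinate). A secondary point is to confirm that Lemma~\ref{lemma_dual_norm_primitive} and the bipolar/reflexivity fact are applied legitimately at the boundary values $\epsilon_g\in\{0,1\}$, corresponding to $\tau=1$ (no group penalty) and $\tau=0$ (no element penalty).
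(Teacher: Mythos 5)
Your proof is correct. The first identity is handled exactly as in the paper: split $\|\beta\|_1$ over the partition, factor out $\tau+(1-\tau)w_g$, and recognize $(1-\epsilon_g)\|\cdot\|_1+\epsilon_g\|\cdot\|_{\alpha_g}$ as $\|\cdot\|_{\epsilon_g\alpha_g^*}^*$ via Lemma~\ref{lemma_dual_norm_primitive} and $(\alpha_g^*)^*=\alpha_g$. For the dual formula, however, you take a genuinely different route. The paper runs a Lagrangian computation: it writes the sup over the $\ds$-ball as $\sup_\beta\inf_{\nu\ge0}$, swaps the order, evaluates the per-group conjugate of $\nu\tau\|\cdot\|_1+\nu(1-\tau)w_g\|\cdot\|_{\alpha_g}$ as an infimal convolution of indicator functions (Rockafellar, Theorem 16.4), uses that a sum of $\{0,\infty\}$-valued indicators equals their max, and then undoes the Lagrangian. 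You instead prove the clean general fact that a block-separable norm $\sum_g N_g(\beta_{(g)})$ has dual $\max_g N_g^*(x_{(g)})$ via the budget reparametrization $t_g=N_g(\beta_{(g)})$ and the observation that a nonnegative linear functional on the simplex is maximized at a vertex, then finish with positive homogeneity and reflexivity of the finite-dimensional $\epsilon q$-norm (which both arguments ultimately need at the last step). Your argument is shorter and more transparent, and it isolates a reusable lemma; the paper's longer computation has the side benefit of exhibiting the dual constraint as the Minkowski-sum / infimal-convolution form $u_g+v_g=X_{(g)}^\top\theta$ with $\|u_g\|_{\alpha_g^*}$ and $\|v_g\|_\infty$ bounds, which is exactly the shape reused in the dual program of Lemma~\ref{lemma_dual_dslasso}. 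Your handling of the boundary cases $\epsilon_g\in\{0,1\}$ is also adequate.
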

\begin{proof}[Proof of Lemma \ref{lemma_dual_norm}]
\begin{align}
\|\beta\|_\ds&=\tau\|\beta\|_1+(1-\tau)\sum\limits_{g=1}^Gw_g\|\beta\|_{\alpha_g}=\sum\limits_{g=1}^G(\tau\|\beta_{(g)}\|_1+(1-\tau) w_g\|\beta_{(g)}\|_{\alpha_g})\\
&=\sum\limits_{g=1}^G[\frac{\tau}{\tau+(1-\tau)w_g}\|\beta_{(g)}\|_1+\frac{(1-\tau) w_g}{\tau+(1-\tau)w_g}\|\beta_{(g)}\|_{\alpha_g}]\big(\tau+(1-\tau)w_g\big)\\
&=\sum\limits_{g=1}^G\big[(1-\epsilon_g)\|\beta_{(g)}\|_1+\epsilon_g\|\beta_{(g)}\|_{\alpha_g}\big]\big(\tau+(1-\tau)w_g\big)=\sum\limits_{g=1}^G \big(\tau+(1-\tau)w_g\big)\|\beta_{(g)}\|_{\epsilon_g,\alpha_g^*}^*\\
\|x\|_\ds^* &= \sup\limits_{\|\beta\|_\ds\leq 1}\langle \beta,x \rangle= \sup\limits_{\beta}\inf\limits_{\nu\geq 0}\langle \beta,x \rangle-\nu(\|\beta\|_\ds-1)=\sup\limits_{\beta}\inf\limits_{\nu\geq 0}\langle \beta,x \rangle-\nu(\|\beta\|_\ds-1)\\
&=\inf\limits_{\nu\geq0}\sup\limits_{\beta}\sum\limits_{g=1}^G\beta_{(g)}x_{(g)}-\nu\tau\|\beta_{(g)}\|_1-\nu(1-\tau)w_g\|\beta_{(g)}\|_{\alpha_g}+\nu\\
&=\inf\limits_{\nu\geq0}\sum\limits_{g=1}^G\sup\limits_{\beta_{(g)}}\Big(\beta_{(g)}x_{(g)}-\nu\tau\|\beta_{(g)}\|_1-\nu(1-\tau)w_g\|\beta_{(g)}\|_{\alpha_g}\Big)+\nu\\
&=\inf\limits_{\nu\geq0}\sum\limits_{g=1}^G\inf_{x_1+x_2=x_{(g)}}\mathcal{I}(\|x_1\|_\infty\leq \nu\tau)+\mathcal{I}(\|x_2\|_{\alpha_g^*}\leq \nu(1-\tau)w_g)+\nu\\
&=\inf\limits_{\nu\geq0}\max\limits_{g\in\{1,...,G\}}\inf_{x_1+x_2=x_{(g)}}\mathcal{I}(\|x_1\|_\infty\leq \nu\tau)+\mathcal{I}(\|x_2\|_{\alpha_g^*}\leq \nu(1-\tau)w_g)+\nu\\
&=\inf\limits_{\nu\geq0}\max\limits_{g\in\{1,...,G\}}\sup\limits_{\beta_{(g)}}\Big(\beta_{(g)}x_{(g)}-\nu\tau\|\beta_{(g)}\|_1-\nu(1-\tau)w_g\|\beta_{(g)}\|_{\alpha_g}\Big)+\nu\\
&=\max\limits_{g\in\{1,...,G\}}\sup\limits_{\|\beta_{(g)}\|_\ds\leq 1}\Big(\beta_{(g)}x_{(g)}\Big)=\max\limits_{g\in\{1,...,G\}}\sup\limits_{(1-\epsilon_g)\|\beta'_{(g)}\|_1+\epsilon_g\|\beta'_{(g)}\|_{\alpha_g}\leq 1 }\Big(\frac{\beta'_{(g)}x_{(g)}}{\tau+(1-\tau)w_g}\Big)\\
&=\max\limits_{g\in\{1,...,G\}}\frac{\|x_{(g)}\|_{\epsilon_g \alpha_g^*}}{\tau+(1-\tau)w_g}
\end{align}

The above derivations are using the following facts:
\begin{align}
\Omega_1(\beta) :&= \alpha_1\|\beta\|_1,\Omega_2(\beta) :=\alpha_2\|\beta\|_q\\
\Omega_1^*(x) &= \sup\limits_{\beta} \ (x^\top\beta-\alpha_1\|\beta\|_1)=\alpha_1\mathcal{I}(\|x\|_\infty\leq \alpha_1)=\mathcal{I}(\|x\|_\infty\leq \alpha_1)\\
\Omega_2^*(x) &= \sup\limits_{\beta} \ (x^\top\beta-\alpha_2\|\beta\|_q)=\alpha_2\mathcal{I}(\|x\|_{q/(q-1)}\leq \alpha_2)=\mathcal{I}(\|x\|_{q/(q-1)}\leq \alpha_2)\\
(\Omega_1+\Omega_2)^*(x)&=\inf\limits_{x_1+x_2=x} \ \big[\mathcal{I}(\|x_1\|_\infty\leq \alpha_1)+\mathcal{I}(\|x_2\|_{q/(q-1)}\leq \alpha_2)\big] \quad (\text{From Theorem 16.4 of \cite{rockafellarConvexAnalysis1970}}).
\end{align}
\end{proof}
\begin{lemma*}
The dual formulation of optimization problem \ref{DS-LASSO} is given by a convex optimization below
\begin{equation}
\begin{aligned}
& \underset{\theta,u_g,v_g}{\text{max}}
& &  \|y\|_2^2-\|\frac{\lambda \theta}{2}-y\|_2^2\\
& & & u_g+v_g = X_{(g)}^\top \theta,\forall g=1,...,G,\\
& & &\|u_g\|_{\alpha_g^*} \leq\epsilon_g(\tau+(1-\tau)w_g),\forall g=1,...,G,\\
& & & \|v_g\|_{\infty} \leq(1-\epsilon_g)\big(\tau+(1-\tau)w_g\big),\forall g=1,...,G.
\end{aligned}
\end{equation}

\end{lemma*}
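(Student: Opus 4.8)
The plan is to obtain (\ref{dual_dslasso}) by Lagrangian duality after splitting off the residual, and then to rewrite the resulting dual-norm constraint in block form using Lemmas~\ref{lemma_dual_norm} and~\ref{lemma_decomposition}. First I would recast (\ref{DS-LASSO}) equivalently as
\begin{equation*}
\min_{\beta\in\mathbb{R}^p,\, r\in\mathbb{R}^n}\ \|r\|_2^2+\lambda\|\beta\|_{\ds}\quad\text{subject to}\quad r=y-X\beta,
\end{equation*}
with Lagrangian $L(\beta,r,\theta)=\|r\|_2^2+\lambda\|\beta\|_{\ds}+\lambda\theta^\top(y-X\beta-r)$, where the multiplier of the affine equality constraint is rescaled by $\lambda$ for convenience. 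The constraint is affine and the objective is finite and convex on all of $\mathbb{R}^p\times\mathbb{R}^n$, so Slater's condition holds trivially and there is no duality gap; hence the optimal value of (\ref{DS-LASSO}) equals $\sup_\theta\inf_{\beta,r}L(\beta,r,\theta)$.

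Next I would carry out the inner minimization, which separates over $r$ and $\beta$. Completing the square in $\|r\|_2^2-\lambda\theta^\top r$ gives minimum value $-\tfrac{\lambda^2}{4}\|\theta\|_2^2$, attained at $r=\tfrac{\lambda}{2}\theta$; minimizing $\lambda\big(\|\beta\|_{\ds}-\langle X^\top\theta,\beta\rangle\big)$ over $\beta$ yields $0$ when $\|X^\top\theta\|_{\ds}^*\le 1$ and $-\infty$ otherwise, directly from the definition of the dual norm. Thus the dual objective is $\lambda\theta^\top y-\tfrac{\lambda^2}{4}\|\theta\|_2^2$ on the set $\{\theta:\|X^\top\theta\|_{\ds}^*\le 1\}$, and one more completion of the square rewrites this as $\|y\|_2^2-\big\|\tfrac{\lambda}{2}\theta-y\big\|_2^2$, matching the objective of (\ref{dual_dslasso}).

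It then remains to translate $\|X^\top\theta\|_{\ds}^*\le 1$ into the block constraints. Using the dual-norm identity of Lemma~\ref{lemma_dual_norm} together with $(X^\top\theta)_{(g)}=X_{(g)}^\top\theta$, this constraint is equivalent to $\|X_{(g)}^\top\theta\|_{\epsilon_g\alpha_g^*}\le\tau+(1-\tau)w_g$ for every $g\in\{1,\dots,G\}$; and invoking the set equality (\ref{setrep}) of Lemma~\ref{lemma_decomposition} with $q=\alpha_g^*$, $\epsilon=\epsilon_g$, $\nu=\tau+(1-\tau)w_g$, each of these is in turn equivalent to the existence of $u_g,v_g\in\mathbb{R}^{p_g}$ with $u_g+v_g=X_{(g)}^\top\theta$, $\|u_g\|_{\alpha_g^*}\le\epsilon_g\big(\tau+(1-\tau)w_g\big)$, and $\|v_g\|_\infty\le(1-\epsilon_g)\big(\tau+(1-\tau)w_g\big)$. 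Promoting the $u_g,v_g$ to decision variables then gives exactly (\ref{dual_dslasso}). I expect the only delicate point to be this last splitting step: one must use the full set \emph{equality} in Lemma~\ref{lemma_decomposition}, not merely one inclusion, so that introducing $u_g,v_g$ neither enlarges nor shrinks the feasible region, and one should check that the boundary cases $\alpha_g=1$ and $\alpha_g=\infty$ (where $\alpha_g^*=\infty$ and $\alpha_g^*=1$ respectively) remain covered by the $q=\infty$ branches of Lemmas~\ref{lemma_dual_norm_primitive} and~\ref{lemma_decomposition}.
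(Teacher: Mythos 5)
Your proposal is correct and follows essentially the same route as the paper: introduce the residual variable, take the Lagrangian with multiplier $\lambda\theta$, complete the square in the residual to get $\|y\|_2^2-\|\tfrac{\lambda}{2}\theta-y\|_2^2$, obtain the constraint $\|X_{(g)}^\top\theta\|_{\epsilon_g\alpha_g^*}\le\tau+(1-\tau)w_g$ from the conjugate of the regularizer (you via the dual-norm characterization of Lemma~\ref{lemma_dual_norm}, the paper via the groupwise Fenchel conjugate, which is the same computation), and then split each block constraint using the set equality (\ref{setrep}) of Lemma~\ref{lemma_decomposition}. Your explicit remarks on strong duality and on needing the full set equality rather than one inclusion are sound refinements of points the paper leaves implicit.
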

\begin{proof}[Proof of Lemma \ref{lemma_dual_dslasso}]
We derive the dual problem via the Lagrangian multiplier method. 

Denote $z=y-X\beta$ and $\lambda\theta$ as the Lagrangian multiplier, the dual function is given by 
\begin{align}
D(\theta)&=\inf_{z,\beta}\|z\|_2^2+\lambda \|\beta\|_\ds+\langle\lambda\theta.y-X\beta-z\rangle\\
&=\inf_{z,\beta}\lambda\sum\limits_{g=1}^G\big[\|\beta_{(g)}\|_\ds-\langle\theta, X_{(g)}\beta_{(g)}\rangle\big]-\lambda\langle\theta,z\rangle+\|z\|_2^2+\lambda\langle\mu,y\rangle\\
&=-\lambda\sum\limits_{g=1}^G\sup_{\beta_{(g)}}\big[\langle X_{(g)}^\top\theta, \beta_{(g)}\rangle-\|\beta_{(g)}\|_\ds\big]-\frac{\lambda^2\|\theta\|_2^2}{4}+\lambda\langle\theta,y\rangle\\
&=-\lambda\sum\limits_{g=1}^G\sup_{\beta_{(g)}}\big[\langle X_{(g)}^\top\theta, \beta_{(g)}\rangle-\big(\tau+(1-\tau)w_g\big)\|\beta_{(g)}\|_{\epsilon_g \alpha_g^*}^*\big]+ \|y\|^2_2-\big\|y-\frac{\lambda \theta}{2}\big\|_2^2\\
&=-\lambda\sum\limits_{g=1}^G \mathcal{I}\big(\|X_{(g)}^\top\theta\|_{\epsilon_g \alpha_g^*}\leq (\tau+(1-\tau)w_g\big)+\|y\|^2_2-\big\|y-\frac{\lambda \theta}{2}\big\|_2^2
\end{align} 
where we are using the fact that 
\begin{align}
&\inf_z -\lambda\langle \theta,z\rangle+\|z\|_2^2=\inf_z\frac{\lambda^2\|\theta\|^2}{4}+\big\|z-\frac{\lambda \theta}{2}\big\|_2^2=\frac{\lambda^2\|\theta\|^2}{4},\\
&\Omega(\beta) = \alpha\|\beta\|_{\epsilon q}^*,\Omega^*(x)=\sup_\beta(\langle x,\beta\rangle-\alpha\|\beta\|_{\epsilon q}^*)=\alpha\mathcal{I}\big(\|x\|_{\epsilon q}\leq \alpha\big).
\end{align}
Therefore the dual problem is 
\begin{equation}
\begin{aligned}
& \underset{\theta}{\text{max}}
& &  \|y\|_2^2-\|\frac{\lambda \theta}{2}-y\|_2^2\\
& \text{ subject to }& &   \|X_{(g)}^\top\theta\|_{\epsilon_g \alpha_g^*}\leq \tau+(1-\tau)w_g,\forall g=1,...,G
\end{aligned}
\end{equation}
The set representation (\ref{setrep}) gives the program (\ref{dual_dslasso}). Finally $\|\frac{\lambda\theta}{2}-y\|_2^2$ is a convex function in $\theta$. $\|u_g\|_{\alpha_g^*}$ and $\|v_g\|_\infty$ are convex function. $u_g+v_g=X^{\top}_{(g)}\theta$ are affine constraints. Therefore it is a convex optimization. One can solve it using algorithm for convex programming and guarantee an optimal value. It concludes the proof.
\end{proof}
\begin{theorem*}
For any Gaussian random design matrix $X\in\mathbb{R}^{n\times p}$ with i.i.d. $\mathcal{N}(0,\Sigma)$ rows, the following inequalities hold for all $v\in \mathbb{R^p}$ with probability $1-c\exp(-c'n)$, in which $c,c'$ are some constants,
\begin{align}
\|Xv\|_2&\geq \frac{\sqrt{n}}{4}\|\Sigma^{1/2}v\|_{2}-3\kappa_1\|v\|_\ds,\\
\|Xv\|_2&\geq \Big(\frac{\sqrt{n}}{4}\lambda_{\min}(\Sigma^{1/2})-3\kappa_1\kappa_2\Big)\|v\|_2,\quad \textup{ for } n> \frac{144\kappa_1^2\kappa_2^2}{\lambda_{\min}(\Sigma)},
\end{align}
where 
\begin{align*}
\kappa_1:&=\max\limits_{g\in\{1,...,G\}} \Bigg[\frac{p_{g}^{(1/2-1/\alpha_g)_+}s\sqrt{\Tr{(\Sigma_{(g)(g)})}}}{\tau+(1-\tau)w_g}\bigwedge \frac{3\sqrt{\log{p_g}\cdot\max_{i\in (g)}\Sigma_{ii}}}{(\tau+(1-\tau)w_g)(1-\epsilon_g+\epsilon_g p_g^{1/\alpha_g-1})}\Bigg]\\
\kappa_2:&= \sqrt{G}\max\limits_{g\in\{1,...,G\}}(\tau p_g^{1/2}+(1-\tau)w_g p_g^{(1/\alpha_g-1/2)_+}),\ \epsilon_g=\frac{(1-\tau)w_g}{\tau+(1-\tau)w_g},\forall g\in\{1,...,G\}.
\end{align*}

The second inequality \ref{rsc} is called restricted eigenvalues condition\cite{raskuttiRestrictedEigenvalueProperties}.
\end{theorem*}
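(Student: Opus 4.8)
I would obtain the restricted‑eigenvalue inequality~\eqref{rsc} as a deterministic corollary of the first inequality, so the substance is to prove $\|Xv\|_2\geq \tfrac{\sqrt n}{4}\|\Sigma^{1/2}v\|_2-3\kappa_1\|v\|_{\ds}$ for all $v$, on an event of probability $1-c\,e^{-c'n}$. Both sides are positively homogeneous in $v$, so it suffices to work on the ellipsoidal sphere $E=\{v:\|\Sigma^{1/2}v\|_2=1\}$ and show $\|Xv\|_2\geq\tfrac{\sqrt n}{4}-3\kappa_1\|v\|_{\ds}$ there; since the left side is nonnegative this can only fail for $v\in E$ with $\|v\|_{\ds}<\tfrac{\sqrt n}{12\kappa_1}$, and on $E$ one has $\|v\|_{\ds}$ bounded away from $0$ (finite‑dimensional norm equivalence). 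The plan is a \emph{peeling} argument: cover $\{v\in E:\|v\|_{\ds}<\tfrac{\sqrt n}{12\kappa_1}\}$ by $O(\log n)$ dyadic shells $S_k=\{v\in E:2^{k-1}r_0<\|v\|_{\ds}\leq 2^{k}r_0\}$, prove a uniform lower bound on each $S_k$ (with radius $r=2^{k}r_0$) holding with probability $1-c\,e^{-c'n}$, and union‑bound.

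\textbf{Fixed‑radius bound.} On $A_r=\{v\in E:\|v\|_{\ds}\leq r\}$, write $X=W\Sigma^{1/2}$ with $W$ having i.i.d.\ $\mathcal N(0,1)$ entries, so that $\inf_{v\in A_r}\|Xv\|_2=\inf_{v\in A_r}\sup_{\|u\|_2\leq1}\langle u,W\Sigma^{1/2}v\rangle$ is an inf–sup of a canonical Gaussian process. Gordon's Gaussian min–max comparison (a minimax refinement of Sudakov–Fernique), against the decoupled process $\langle g,u\rangle+\langle\Sigma^{1/2}h,v\rangle$ with $g\in\mathbb R^n$, $h\in\mathbb R^p$ independent standard Gaussians, gives
\[
\mathbb E\Big[\inf_{v\in A_r}\|Xv\|_2\Big]\;\geq\;\mathbb E\|g\|_2-\mathbb E\sup_{v\in A_r}\langle\Sigma^{1/2}h,v\rangle\;\geq\;\mathbb E\|g\|_2-r\,\mathbb E\big\|\Sigma^{1/2}h\big\|_{\ds}^{*},
\]
where the last step uses $\langle\Sigma^{1/2}h,v\rangle\leq\|v\|_{\ds}\|\Sigma^{1/2}h\|_{\ds}^{*}$ and $\mathbb E\|g\|_2\geq\sqrt{n/2}$. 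Now Lemma~\ref{lemma_dual_norm} identifies $\|\Sigma^{1/2}h\|_{\ds}^{*}=\max_g\|Z_{(g)}\|_{\epsilon_g\alpha_g^{*}}/(\tau+(1-\tau)w_g)$ with $Z=\Sigma^{1/2}h\sim\mathcal N(0,\Sigma)$, hence $Z_{(g)}\sim\mathcal N(0,\Sigma_{(g)(g)})$; Lemma~\ref{lemma_tightbound} (with $q=\alpha_g^{*}$, using $1/\alpha_g^{*}-1/2=1/2-1/\alpha_g$ and $p_g^{1/\alpha_g^{*}}/(p_g^{1/\alpha_g^{*}}(1-\epsilon_g)+\epsilon_g)=(1-\epsilon_g+\epsilon_g p_g^{1/\alpha_g-1})^{-1}$) bounds each group term by $\min\big(p_g^{(1/2-1/\alpha_g)_+}\|Z_{(g)}\|_2,\ \|Z_{(g)}\|_\infty(1-\epsilon_g+\epsilon_g p_g^{1/\alpha_g-1})^{-1}\big)$. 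Taking expectations with $\mathbb E\|Z_{(g)}\|_2\leq\sqrt{\Tr(\Sigma_{(g)(g)})}$ and $\mathbb E\|Z_{(g)}\|_\infty\leq 3\sqrt{\log p_g\cdot\max_{i\in(g)}\Sigma_{ii}}$, plus a Gaussian maximal inequality over the $G$ groups, yields $\mathbb E\|\Sigma^{1/2}h\|_{\ds}^{*}\lesssim\kappa_1$ — recognizing exactly the two alternatives in the definition of $\kappa_1$ — so $\mathbb E\inf_{v\in A_r}\|Xv\|_2\geq\sqrt{n/2}-O(\kappa_1 r)$.

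\textbf{Concentration and peeling.} The map $W\mapsto\inf_{v\in A_r}\|W\Sigma^{1/2}v\|_2$ is $1$‑Lipschitz in the Frobenius norm (since $\sup_{v\in A_r}\|\Sigma^{1/2}v\|_2=1$), so Gaussian concentration gives $\inf_{v\in A_r}\|Xv\|_2<\mathbb E[\cdot]-t$ with probability at most $e^{-t^2/2}$. Choosing $t$ a fixed fraction of $\sqrt n$ — large enough to absorb the $O(\log n)$‑fold union over shells in the exponent — gives, on each $S_k$, $\inf_{v\in S_k}\|Xv\|_2\geq\tfrac{\sqrt n}{4}-3\kappa_1\cdot2^{k-1}r_0\geq\tfrac{\sqrt n}{4}-3\kappa_1\|v\|_{\ds}$; the union bound over the shells (and the central region $\|v\|_{\ds}\leq r_0$, covered by the same estimate with $r=r_0$) establishes the first inequality with probability $1-c\,e^{-c'n}$.

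\textbf{Deducing~\eqref{rsc}.} For every $v$, $\|\Sigma^{1/2}v\|_2\geq\lambda_{\min}(\Sigma^{1/2})\|v\|_2$, while, using $\|v_{(g)}\|_1\leq p_g^{1/2}\|v_{(g)}\|_2$, $\|v_{(g)}\|_{\alpha_g}\leq p_g^{(1/\alpha_g-1/2)_+}\|v_{(g)}\|_2$ and Cauchy–Schwarz over the $G$ groups,
\[
\|v\|_{\ds}=\sum_g\big(\tau\|v_{(g)}\|_1+(1-\tau)w_g\|v_{(g)}\|_{\alpha_g}\big)\leq\sum_g\big(\tau p_g^{1/2}+(1-\tau)w_g p_g^{(1/\alpha_g-1/2)_+}\big)\|v_{(g)}\|_2\leq\kappa_2\|v\|_2.
\]
Substituting both bounds into the first inequality gives $\|Xv\|_2\geq\big(\tfrac{\sqrt n}{4}\lambda_{\min}(\Sigma^{1/2})-3\kappa_1\kappa_2\big)\|v\|_2$, whose leading coefficient is positive exactly when $n>144\kappa_1^2\kappa_2^2/\lambda_{\min}(\Sigma)$. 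The main obstacle is the fixed‑radius step: getting Gordon's comparison, the width bound $\mathbb E\|\Sigma^{1/2}h\|_{\ds}^{*}\lesssim\kappa_1$ with the stated shape, and the dyadic peeling to interlock so that the shell union bound still collapses to $1-c\,e^{-c'n}$ — in particular, checking that Lemma~\ref{lemma_tightbound} reproduces precisely the two terms of $\kappa_1$ and that the cost of the maximum over groups and of peeling is swallowed by the displayed numerical constants.
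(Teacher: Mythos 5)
Your plan follows essentially the same route as the paper's proof: a Gaussian minimax comparison (you cite Gordon's inequality where the paper invokes Sudakov--Fernique on the same $\sup\inf$ process) to bound $\mathbb{E}\inf_{v}\|Xv\|_2$ from below by $\mathbb{E}\|g\|_2 - r\,\mathbb{E}\|\Sigma^{1/2}h\|_{\ds}^*$, the identical duality and $\epsilon q$-norm bounds to identify the two alternatives in $\kappa_1$, Lipschitz Gaussian concentration, a peeling/shelling argument over $\|v\|_{\ds}$ (the paper packages this as Lemma~\ref{peeling} rather than your explicit truncated dyadic shells, but the substance is the same), and the same deterministic deduction of~\eqref{rsc} via $\|v\|_{\ds}\leq\kappa_2\|v\|_2$. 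The only differences are cosmetic constant bookkeeping (e.g.\ $\mathbb{E}\|g\|_2\geq\sqrt{n/2}$ versus the paper's $\tfrac{3}{4}\sqrt{n}$), which you correctly flag as the remaining work.
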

\begin{proof}[Proof of Theorem \ref{thm_rec}]
We first consider the set $V(r)$ indexed by $r$, i.e. $V(r):=\{v\in\mathbb{R}^p|\|\Sigma^{1/2}v\|_2=1,\|v\|_{\ds}\leq r\}$. Define a random variable 
\begin{align}
M(r,X):=1-\inf\limits_{v\in V(r)}\frac{\|Xv\|_2}{\sqrt{n}}=\sup\limits_{v\in V(r)}\big\{1-\frac{\|Xv\|_2}{\sqrt{n}}\big\}.
\end{align}

We first consider an upper bound for $\mathbb{E}[M(r,X)]$. Let $S^{n-1}=\{u\in \mathbb{R}^n|\|u\|_2=1\}$. Then we have 
\begin{align}
\mathbb{E}[M(r,X)]&=1-\frac{1}{\sqrt{n}}\mathbb{E}[\inf\limits_{v\in V(r)}\|Xv\|_2]=1-\frac{1}{\sqrt{n}}\mathbb{E}[\inf\limits_{v\in V(r)}\sup\limits_{u\in S^{n-1}}u^\top Xv]\\
&=1+\frac{1}{\sqrt{n}}\mathbb{E}[\sup\limits_{v\in V(r)}\inf\limits_{u\in S^{n-1}}u^\top Xv]
\end{align}
by noticing that $-\inf f(x)=\sup [-f(x)], \inf\limits_{u\in S^{n-1}} [-u^\top Xv]=\inf\limits_{u\in S^{n-1}} [u^\top Xv]$.

Note that $X$ can be written as $W\Sigma^{1/2}$ where $W$ is Gaussian matrix in $\mathbb{R}^{n\times p}$ with i.i.d $\mathcal{N}(0,1)$ entries. We define a zero-mean Gaussian random variable $Y_{u,v}:=u^TXv$ for each pair $(u,v)\in S^{n-1}\times V(r)$ and write $\tilde{v}=\Sigma^{1/2}v$.
\begin{align}
Y_{u,v}&=u^\top W\Sigma^{1/2}v=u^\top W\tilde{v}\\
\sigma^2(Y_{u,v}-Y_{u',v'})&=\mathbb{E}\big[(u^\top W\tilde{v}-u'^\top W\tilde{v}')^2\big]=\mathbb{E}\Big[\big(\sum\limits_{1\leq i\leq n,1\leq j\leq p}W_{i,j}(u_i\tilde{v}_j-u'_i\tilde{v}'_j)\big)^2\Big]\\
&=\sum\limits_{1\leq i\leq n,1\leq j\leq p}(u_i\tilde{v}_j-u'_i\tilde{v}'_j)^2=\sum\limits_{1\leq i\leq n,1\leq j\leq p}((u_i-u'_i)\tilde{v}_j-u'_i(\tilde{v}_j-\tilde{v}'_j))^2\\
&=\|u-u'\|_2^2\|\tilde{v}\|^2_2+\|u'\|_2^2\|\tilde{v}-\tilde{v}'\|_2^2-2(u^\top u'-\|u'\|_2)(\|\tilde{v}\|_2^2-\tilde{v}'^\top \tilde{v})\\
&\leq \|u-u'\|_2^2+\|\tilde{v}-\tilde{v}'\|_2^2. 
\end{align}
The derivation follows immediately with the following facts: 
\begin{align*}
\mathbb{E}{W_{ij}}&=0,\mathbb{E}{W^2_{ij}}=1,\mathbb{E}{W_{ij}W_{i'j'}}=0, \textup{ if } i\neq i' \text{ or } j\neq j',\\
\|u\|_2&=\|u'\|=1,\|\tilde{v}\|_2=\|\Sigma^{1/2}v\|=\|\Sigma^{1/2}v'\|=\|\tilde{v}'\|_2=1\\
u'^\top u&\leq \|u'\|_2\|u\|_2,\tilde{v}'^\top \tilde{v}\leq \|\tilde{v}'\|_2\|\tilde{v}\|_2
\end{align*}

Moreover, if $v=v'$, we then have the equality hold, i.e. $\sigma^2(Y_{u,v}-Y_{u',v'})=\|u-u'\|_2^2$. 

We define another zero-mean Gaussian process $Z_{u,v}$ indexed by $S^{n-1}\times V(r)$ given by the following expression:
\begin{align}
Z_{u,v}=s^\top u+ t^\top \Sigma^{1/2}v
\end{align}
where $s\sim \mathcal{N}(0,I_{n\times n}),t\sim\mathcal{N}(0,I_{p\times p})$. It is easy to verify that 
\begin{align}
\sigma^2(Z_{u,v}-Z_{u',v'}) = \mathbb{E}\big[(s^\top (u-u')+ t^\top \Sigma^{1/2}(v-v'))^2\big]=\|u-u'\|_2^2+\|\tilde{v}-\tilde{v}'\|_2^2.
\end{align}

Then according to Sudakov-Fernique inequality, we have 
\begin{align}
\mathbb{E}[M(r,X)]&=1+\frac{1}{\sqrt{n}}\mathbb{E}[\sup\limits_{v\in V(r)}\inf\limits_{u\in S^{n-1}}u^\top Xv]\\
&=1+\frac{1}{\sqrt{n}}\mathbb{E}[\sup\limits_{v\in V(r)}\inf\limits_{u\in S^{n-1}}Y_{u,v}]\leq 1+\frac{1}{\sqrt{n}}\mathbb{E}[\sup\limits_{v\in V(r)}\inf\limits_{u\in S^{n-1}}Z_{u,v}]\\
&= 1-\frac{1}{\sqrt{n}}\mathbb{E}\|s\|_2+\frac{1}{\sqrt{n}}\mathbb{E}[\sup\limits_{v\in V(r)}t^\top \Sigma^{1/2}v]\leq \frac{1}{4}+\frac{1}{\sqrt{n}}\mathbb{E}[\sup\limits_{v\in V(r)}t^\top \Sigma^{1/2}v]\\
&\leq \frac{1}{4}+\frac{1}{\sqrt{n}}\mathbb{E}[\sup\limits_{v\in V(r)}\|t^\top \Sigma^{1/2}\|_{\ds}^*\|v\|_{\ds}]\leq \frac{1}{4}+\frac{r}{\sqrt{n}}\mathbb{E}[\|\Sigma^{1/2}t\|_{\ds}^*]\\
&\leq \frac{1}{4}+\frac{r}{\sqrt{n}}\max\limits_{g\in\{1,...,G\}} \Big[\frac{p_{g}^{(1/2-1/\alpha_g)_+}\sqrt{\Tr{(\Sigma_{(g)(g)})}}}{\tau+(1-\tau)w_g}\wedge \frac{3\sqrt{\log{p_g}\cdot\max_{i\in (g)}\Sigma_{ii}}}{(\tau+(1-\tau)w_g)(1-\epsilon_g+\epsilon_g p_g^{1/\alpha_g-1})}\Big]
\end{align}
by noticing that 
\begin{itemize}
  \item 
$\mathbb{E}[\|s\|_2]\geq \frac{3}{4}\sqrt{n}$ for all $n\geq 10$ from standard $\chi$ distribution tail bound result;  
  \item Generalized Cauchy–Schwarz inequality: $t^\top \Sigma^{1/2}v\leq \|t^\top \Sigma^{1/2}\|_{\ds}^*\|v\|_{\ds}$;
  \item
$\|\Sigma^{1/2}t\|_\ds^* = \max\limits_{g\in \{1,...,G\}}\frac{\|\Sigma_{(g),:}^{1/2}t\|_{\epsilon_g \alpha_g^*}}{\tau+(1-\tau)w_g}$, \\
$\|\Sigma_{(g),:}^{1/2}t\|_{\epsilon_g \alpha_g^*}\leq  \big[p_{g}^{(1/\alpha_g^*-1/2)_+}\|\Sigma_{(g),:}^{1/2}t\|_{2}\big] \bigwedge \frac{p_g^{1/\alpha_g^*}\|\Sigma_{(g),:}^{1/2}t\|_\infty}{p_g^{1/\alpha_g^*}(1-\epsilon_g)+\epsilon_g}$
\item $\mathbb{E}[\|\Sigma_{(g),:}^{1/2}t\|_2]\leq \sqrt{\mathbb{E}[\|\Sigma_{(g),:}^{1/2}t\|_2^2]}=\sqrt{\Tr(\Sigma_{(g)(g)})},\\
\mathbb{E}[\|\Sigma_{(g),:}^{1/2}t\|_\infty]\leq 3\sqrt{\log{p_g}\cdot\max\limits_{i\in (g)}\Sigma_{ii}}$ (see Equation (3.13) of \cite{ledouxProbabilityBanachSpaces1991} for the upper bound on $\mathbb{E}[\|\Sigma^{1/2}t\|_\infty]$). 
\end{itemize}

We denote $t(r):=\frac{1}{4}+\frac{r}{\sqrt{n}}\max\limits_{g\in\{1,...,G\}} \Big[\frac{p_{g}^{(1/2-1/\alpha_g)_+}\sqrt{\Tr{(\Sigma_{(g)(g)})}}}{\tau+(1-\tau)w_g}\wedge \frac{3\sqrt{\log{p_g}\cdot\max_{i\in (g)}\Sigma_{ii}}}{(\tau+(1-\tau)w_g)(1-\epsilon_g+\epsilon_g p_g^{1/\alpha_g-1})}\Big]$ and rewrite $M(r,X)$ as $H(r,W):=1+\frac{1}{\sqrt{n}}\sup_{v\in V(r)}(-\|W\Sigma^{1/2}v\|_2)$ and show that $H(r,W)$ is a Lipschitz function, i.e. $H(r,W')-H(r,W)\leq \frac{1}{\sqrt{n}}\|W-W'\|_F$.

Since $\|Xv\|_2$ is continuous for $v$ and $V(r)$ is closed and bounded, we denote $\hat{v}$ as one maximizer of $\sup_{v\in V(r)}(-\|Xv\|_2)$. 
\begin{align}
\sqrt{n}[H(r,X)-H(r,X')]=&\sup_{v\in V(r)}(-\|W\Sigma^{1/2}v\|_2)-\sup_{v\in V(r)}(-\|W'\Sigma^{1/2}v\|_2)\leq -\|W\Sigma^{1/2}\hat{v}\|_2+\|W'\Sigma^{1/2}\hat{v}\|_2\\&\leq \sup_{v\in V(r)} [\|W'\Sigma^{1/2}v\|_2-\|W\Sigma^{1/2}v\|_2]\leq \sup_{v\in V(r)} \|W'\Sigma^{1/2}v-W\Sigma^{1/2}v\|_2\\
&\leq \sup_{v\in V(r)} \|W-W'\|_2\|\Sigma^{1/2}v\|_2\leq \|W-W'\|_2\leq \|W-W'\|_F
\end{align}

Therefore, from concentration bound for Lipschitz functions of Gaussian random variables (see for example Theorem 3.8 from \cite{massartConcentrationInequalitiesModel2007}), we have the following:
\begin{align}
P\Big(|M(r,X)-\mathbb{E}[M(r,X)]|\geq t(r)/2\Big)\leq 2\exp\big(-nt(r)^2/8\big),
\end{align}
implying 
\begin{align}
P\Big(M(r,X)\geq 3t(r)/2\Big)\leq 2\exp\big(-nt(r)^2/8\big).
\end{align}

If we use the notation in the peeling argument (Lemma \ref{peeling}). Define an event $\mathcal{E}:=\{\exists v\in\mathbb{R}^p,\ \textup{s.t.}\  \|\Sigma^{1/2}v\|_2=1, (1-\|Xv\|_2/\sqrt{n})\geq 3t(\|v\|_{\ds})\}$.  Let $f(v,X)=1-\frac{1}{\sqrt{n}}\|Xv\|_2,h(v)=\|v\|_{\ds},g(r)=3t(r)/2$, $a_n=n, A=\{v\in \mathbb{R}^p|\|\Sigma^{1/2}v\|_2=1\}$. We then have $g(r)\geq 3/8$, $g(r)$ non-negative and strictly increasing and $h(v)$ non-negative and increasing. From the lower bound of $g(r)$ we can set $\mu = \frac{3}{8}$. Moreover, 
\begin{align}
P\Big(\sup_{v\in A,h(v)\leq r} f(v,X)\geq g(r)\Big)&=P(\sup_{\|\Sigma^{1/2}v\|_2=1,\|v\|_{\ds}\leq r} 1-\frac{1}{\sqrt{n}}\|Xv\|_2\geq 3t(r)/2)\\
&\leq 2\exp\big(-nt(r)^2/8\big) = 2\exp\big(-\frac{1}{18}a_n g^2(r)\big)
\end{align}
According to peeling argument, we have 
\begin{align}
P(\mathcal{E})\leq 2\frac{\exp{(-\frac{4n}{18}\frac{9}{64})}}{1-\exp{(-\frac{4n}{18}\frac{9}{64})}}=\frac{2\exp(-n/32)}{1-2\exp(-n/32)} \implies P(\mathcal{E}^c)\geq 1-\frac{2\exp(-n/32)}{1-2\exp(-n/32)}
\end{align}
Therefore, with probability at least $1-\frac{2\exp(-n/32)}{1-2\exp(-n/32)}$, for all $v\in \mathbb{R}^p$ with $\|\Sigma^{1/2}v\|_2=1$, we have 
\begin{align}
\frac{\|Xv\|_2}{\sqrt{n}}&\geq \frac{1}{4}-\frac{3\|v\|_\ds}{\sqrt{n}}\max\limits_{g\in\{1,...,G\}} \Big[\frac{p_{g}^{(1/2-1/\alpha_g)_+}\sqrt{\Tr{(\Sigma_{(g)(g)})}}}{\tau+(1-\tau)w_g}\wedge \frac{3\sqrt{\log{p_g}\cdot\max_{i\in (g)}\Sigma_{ii}}}{(\tau+(1-\tau)w_g)(1-\epsilon_g+\epsilon_g p_g^{1/\alpha_g-1})}\Big]
\end{align}
implying for all $v\in \mathbb{R}^p$,
\begin{align}
&\|Xv\|_2\\
\geq &\frac{\sqrt{n}}{4}\|\Sigma^{1/2}v\|_{2}-3\|v\|_\ds\max\limits_{g\in\{1,...,G\}} \Big[\frac{p_{g}^{(1/2-1/\alpha_g)_+}\sqrt{\Tr{(\Sigma_{(g)(g)})}}}{\tau+(1-\tau)w_g}\wedge \frac{3\sqrt{\log{p_g}\cdot\max_{i\in (g)}\Sigma_{ii}}}{(\tau+(1-\tau)w_g)(1-\epsilon_g+\epsilon_g p_g^{1/\alpha_g-1})}\Big]  
\end{align}
The restricted eigenvalues condition holds with the fact that $\|\Sigma^{1/2}v\|_2\geq \lambda_{\min}(\Sigma^{1/2})\|v\|_2$ and 
\begin{align}
\|v\|_\ds&=\sum\limits_{g=1}^G \tau\|v_{(g)}\|_1+(1-\tau)w_g\|v_{(g)}\|_{\alpha_g}\leq \sum\limits_{g=1}^G \tau p_g^{1/2}\|v_{(g)}\|_2+(1-\tau)w_g p_g^{(1/\alpha_g-1/2)_+}\|v_{(g)}\|_2\\
&\leq \max\limits_{g\in\{1,...,G\}}(\tau p_g^{1/2}+(1-\tau)w_g p_g^{(1/\alpha_g-1/2)_+})\sum\limits_{g=1}^G\|v_{(g)}\|_2\\
&\leq \max\limits_{g\in\{1,...,G\}}(\tau p_g^{1/2}+(1-\tau)w_g p_g^{(1/\alpha_g-1/2)_+})\sqrt{G} \|v\|_2
\end{align}
\end{proof}

\begin{theorem*}
Suppose that $X$ satisfies the block column normalization condition \ref{column_normalization} and the observation noise $\varepsilon$ is sub-Gaussian, satisfying sub-Gaussian tail \ref{sub_gaussian}. Then we have with probability at least $1-\frac{2}{G^2}$, the following inequality holds
\begin{align}
\frac{\|X^\top \varepsilon\|_\ds^*}{n}\leq \max\limits_{g\in\{1,...,G\}}\sigma\frac{\bigg[\frac{\epsilon_gp_g^{(1/\alpha_g-1/2)_+}+(1-\epsilon_g)p_g^{1/2}}{\sqrt{n}}\Big(\big[p_{g}^{(1/2-1/\alpha_g)_+}\sqrt{p_g}\big] \bigwedge \frac{{p_g}^{1/\alpha_g^*}\sqrt{2\log{p_g}}}{{p_g}^{1/\alpha_g^*}(1-\epsilon_g)+\epsilon_g} \Big)+  \sqrt{\frac{6\log{G}}{n}}\bigg]}{\tau+(1-\tau)w_g}.
\end{align}
\end{theorem*}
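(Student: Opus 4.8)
The plan is to pull the dual norm apart group by group and reduce everything to a maximum over $g$ of a single $\epsilon q$-norm, which we then estimate deterministically via the norm-comparison bounds of Lemma~\ref{lemma_tightbound} and stochastically via the sub-Gaussian tail. Concretely, Lemma~\ref{lemma_dual_norm} gives the identity $\|X^\top\varepsilon\|_\ds^* = \max_{g}\|X_{(g)}^\top\varepsilon\|_{\epsilon_g\alpha_g^*}/(\tau+(1-\tau)w_g)$, so it suffices to produce, for each fixed $g$, a high-probability upper bound on $\|X_{(g)}^\top\varepsilon\|_{\epsilon_g\alpha_g^*}$ at confidence level roughly $1-2/G^3$, and then to union-bound over the $G$ groups to obtain the stated probability $1-2/G^2$. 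The extra $\sqrt{6\log G/n}$ summand in the theorem is precisely the inflation of the per-group deviation level that this union bound forces.

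For a fixed group $g$ I would invoke Lemma~\ref{lemma_tightbound} with $q=\alpha_g^*$ and ambient dimension $p_g$, using $1/\alpha_g^* = 1-1/\alpha_g$ so that $(1/\alpha_g^*-1/2)_+ = (1/2-1/\alpha_g)_+$. This gives two bounds on the same quantity: the $\ell_q$/$\ell_2$ bound $\|X_{(g)}^\top\varepsilon\|_{\epsilon_g\alpha_g^*}\le p_g^{(1/2-1/\alpha_g)_+}\|X_{(g)}^\top\varepsilon\|_2$ (route A), and the $\ell_\infty$ bound $\|X_{(g)}^\top\varepsilon\|_{\epsilon_g\alpha_g^*}\le p_g^{1/\alpha_g^*}\|X_{(g)}^\top\varepsilon\|_\infty/(p_g^{1/\alpha_g^*}(1-\epsilon_g)+\epsilon_g)$ (route B). Taking the smaller of the two is exactly what produces the $[\,\cdot\,]\wedge[\,\cdot\,]$ in the statement, with the factor $p_g^{(1/2-1/\alpha_g)_+}\sqrt{p_g}$ emerging from route A once $\|X_{(g)}^\top\varepsilon\|_2$ is controlled, and the factor $p_g^{1/\alpha_g^*}\sqrt{2\log p_g}/(p_g^{1/\alpha_g^*}(1-\epsilon_g)+\epsilon_g)$ emerging from route B once $\|X_{(g)}^\top\varepsilon\|_\infty$ is controlled.

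It remains to bound the two random quantities $\|X_{(g)}^\top\varepsilon\|_\infty = \max_{i\in(g)}|X_i^\top\varepsilon|$ and $\|X_{(g)}^\top\varepsilon\|_2$. For the first, apply Assumption~\ref{column_normalization} with $u=e_i$: since $\epsilon_g\|e_i\|_{\alpha_g}+(1-\epsilon_g)\|e_i\|_1 = \|e_i\|_{\epsilon_g\alpha_g^*}^* = 1$ (by the dual-norm formula of Lemma~\ref{lemma_dual_norm_primitive}), we get $\|X_i\|_2\le\sqrt n$; combining with the sub-Gaussian tail of Assumption~\ref{sub_gaussian} applied to each $X_i^\top\varepsilon/\|X_i\|_2$ and a union bound over all $p$ columns and all $G$ groups, calibrated so the total failure probability is $2/G^2$, yields the $\sqrt{2\log p_g}$ and $\sqrt{6\log G}$ contributions. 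For the second, I would use that $X_{(g)}^\top\varepsilon = X_{(g)}^\top P_{(g)}\varepsilon$ with $P_{(g)}$ the orthogonal projection onto the (at most $p_g$-dimensional) column space of $X_{(g)}$, so that $\|X_{(g)}^\top\varepsilon\|_2 \le \|X_{(g)}\|_{2\to2}\,\|P_{(g)}\varepsilon\|_2$; here Assumption~\ref{column_normalization} together with the elementary comparisons $\|u\|_{\alpha_g}\le p_g^{(1/\alpha_g-1/2)_+}\|u\|_2$, $\|u\|_1\le p_g^{1/2}\|u\|_2$ and the identity $\|u\|_{\epsilon_g\alpha_g^*}^* = \epsilon_g\|u\|_{\alpha_g}+(1-\epsilon_g)\|u\|_1$ force $\|X_{(g)}\|_{2\to2}\le(\epsilon_g p_g^{(1/\alpha_g-1/2)_+}+(1-\epsilon_g)p_g^{1/2})\sqrt n$, which is exactly the prefactor in the theorem, while $\|P_{(g)}\varepsilon\|_2$ is bounded by a standard covering/chaining argument over the unit sphere of a $p_g$-dimensional subspace, each increment being sub-Gaussian by Assumption~\ref{sub_gaussian}, giving a bound that scales like $\sigma\sqrt{p_g}$.

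Finally I would assemble: substitute the two per-group estimates into the $\min$, take the maximum over $g$, divide by $n$, and add the failure probabilities. The step I expect to be the main obstacle is the $\ell_2$ piece: one must get $\|X_{(g)}^\top\varepsilon\|_2$ to scale with the group dimension $p_g$ and the block column-normalization constant rather than with a covering number of the full ball, which is why the reduction to the $p_g$-dimensional column space of $X_{(g)}$ (and the careful use of Assumption~\ref{column_normalization} to control $\|X_{(g)}\|_{2\to2}$) is essential; by contrast the $\ell_\infty$ part is a routine coordinatewise union bound. A minor point of care is the degenerate parameter regimes $\epsilon_g\in\{0,1\}$ and $\alpha_g=\infty$, where the $\epsilon q$-norm collapses to an ordinary $\ell_q$ or $\ell_\infty$ norm and the comparison constants must be interpreted by the corresponding limits, exactly as in the proofs of Lemmas~\ref{lemma_tightbound} and \ref{lemma_dual_norm_primitive}.
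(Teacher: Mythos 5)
Your route is genuinely different from the paper's, and the difference matters for the exact form of the bound. You apply the deterministic comparison of Lemma~\ref{lemma_tightbound} \emph{first}, reducing $\|X_{(g)}^\top\varepsilon\|_{\epsilon_g\alpha_g^*}$ to the smaller of an $\ell_2$ and an $\ell_\infty$ quantity, and only then bring in probability (coordinatewise union bound for $\ell_\infty$, a net over the column space for $\ell_2$). The paper does the reverse: it keeps the $\epsilon_g\alpha_g^*$-norm intact, shows via Assumption~\ref{column_normalization} that $\varepsilon\mapsto\frac{1}{n}\|X_{(g)}^\top\varepsilon\|_{\epsilon_g\alpha_g^*}$ is $\frac{1}{\sqrt n}$-Lipschitz, applies Gaussian concentration to get the deviation term $\sqrt{6\log G/n}$ with \emph{no} dimensional multiplier, and then bounds the mean by a Sudakov--Fernique comparison with $Z_u=\sqrt{\kappa^{(g)}/n}\,\langle u,\nu\rangle$, $\nu\sim\mathcal N(0,I_{p_g})$; Lemma~\ref{lemma_tightbound} enters only at the last step, applied to $\mathbb E[\|\nu\|_{\epsilon_g\alpha_g^*}]$. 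The variance computation in that comparison uses exactly the operator-norm estimate you derive, $\|X_{(g)}(u-u')\|_2\le\sqrt n\,(\epsilon_gp_g^{(1/\alpha_g-1/2)_+}+(1-\epsilon_g)p_g^{1/2})\|u-u'\|_2$, which is why that prefactor multiplies \emph{both} branches of the $\wedge$ in the statement. A real merit of your route is that it uses only the tail bound of Assumption~\ref{sub_gaussian}, whereas the paper's concentration and Sudakov--Fernique steps tacitly require $\varepsilon$ to be exactly Gaussian.

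The concrete shortfall is that your argument cannot produce the displayed inequality, only a cousin of it. Because you compare norms before taking the union bound, the comparison factor is welded to the entire stochastic fluctuation: your route B yields $\frac{1}{n}\|X_{(g)}^\top\varepsilon\|_{\epsilon_g\alpha_g^*}\le\frac{p_g^{1/\alpha_g^*}}{p_g^{1/\alpha_g^*}(1-\epsilon_g)+\epsilon_g}\cdot\frac{\sigma(\sqrt{2\log p_g}+\sqrt{6\log G})}{\sqrt n}$, so the factor $\frac{p_g^{1/\alpha_g^*}}{p_g^{1/\alpha_g^*}(1-\epsilon_g)+\epsilon_g}$ (as large as $p_g^{1/\alpha_g^*}$ when $\epsilon_g$ is near $1$) unavoidably multiplies the $\sqrt{6\log G}$ search term, which in the theorem appears bare; symmetrically, your $\sqrt{2\log p_g}$ term lacks the prefactor $\epsilon_gp_g^{(1/\alpha_g-1/2)_+}+(1-\epsilon_g)p_g^{1/2}$ that the theorem carries. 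The two bounds are therefore incomparable, and the same issue afflicts route A, where your net argument attaches $p_g^{(1/2-1/\alpha_g)_+}\|X_{(g)}\|_{2\to2}$ to the $\sqrt{\log G}$ contribution as well. To recover the stated form you must decouple the deviation from the norm comparison, i.e., concentrate the whole functional around its mean first (the Lipschitz step, which is where the clean additive $\sqrt{6\log G/n}$ comes from) and perform the $\ell_2$-versus-$\ell_\infty$ comparison only inside the expectation.
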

\begin{proof}[Proof of Theorem \ref{thm_lambda}] 
Without loss of generality, we assume $\sigma^2=1$. Since $\|X^\top \varepsilon\|_\ds^* = \max\limits_{g\in \{1,...,G\}}\frac{\|X_{(g)}^\top \varepsilon\|_{\epsilon_g \alpha_g^*}}{\tau+(1-\tau)w_g},$ we consider a probability bound for $\|X_{(g)}^\top \varepsilon\|_{\epsilon_g \alpha_g^*}$ and the bound for $\|X^\top \varepsilon\|_\ds^*$ can be easily derived by union bound. 

Notice that for any pair $\varepsilon,\varepsilon'\in \mathbb{R}^n$, we have 
\begin{align}
\frac{1}{n}\big|\|X_{(g)}^\top \varepsilon\|_{\epsilon_g \alpha_g^*}-\|X_{(g)}^\top \varepsilon'\|_{\epsilon_g \alpha_g^*}\big|&\leq \frac{1}{n}\|X_{(g)}^\top(\varepsilon-\varepsilon')\|_{\epsilon_g \alpha_g^*}=\frac{1}{n}\sup\limits_{u:\|u\|_{\epsilon_g \alpha_g^*}^*\leq 1}\langle u,X_{(g)}^\top(\varepsilon-\varepsilon') \rangle\\
&=\frac{1}{n}\sup\limits_{u:\|u\|_{\epsilon_g \alpha_g^*}^*\leq 1}\langle X_{(g)}u,(\varepsilon-\varepsilon') \rangle\leq \frac{1}{n}\|X\|_{(g)\to 2}\|\varepsilon-\varepsilon'\|_2^2\\
&=\frac{1}{\sqrt{n}}\|\varepsilon-\varepsilon'\|_2, 
\end{align}
where we are using the column normalization assumption that
\begin{align}
\sup\limits_{u:\|u\|_{\epsilon_g \alpha_g^*}^*\leq 1}\|X_{(g)}u\|_2=\sup\limits_{u:\epsilon_g\|u\|_{\alpha_g}+(1-\epsilon_g)\|u\|_{1}\leq 1}\|X_{(g)}u\|_2=\|X\|_{(g)\to 2}=\sqrt{n}
\end{align}

Therefore, by Gaussian concentration of measure for Lipschitz function (see for example Theorem 3.8 from \cite{massartConcentrationInequalitiesModel2007}), we have 
\begin{align}
P(\frac{1}{n}\|X_{(g)}^\top \varepsilon\|_{\epsilon_g \alpha_g^*}\geq \frac{1}{n}\mathbb{E}[\|X_{(g)}^\top \varepsilon\|_{\epsilon_g \alpha_g^*}]+ \delta) &\leq P(\frac{1}{n}\big|\|X_{(g)}^\top \varepsilon\|_{\epsilon_g \alpha_g^*}-\mathbb{E}[\|X_{(g)}^\top \varepsilon\|_{\epsilon_g \alpha_g^*}]\big|\geq \delta)\\
&\leq 2\exp(-n\delta^2/2), \ \forall \delta>0
\end{align}

Denote $Y_u = \frac{1}{n}\langle u,X_{(g)}^\top \varepsilon\rangle$ and $Z_u=\frac{\epsilon_gp_g^{(1/\alpha_g-1/2)_+}+(1-\epsilon_g)p_g^{1/2}}{\sqrt{n}}\langle u,\nu\rangle$ where $u\in \mathbb{R}^{p_g}$ and $\nu\sim\mathcal{N}(0,I_{p_g})$. Then we have $\frac{1}{n}\|X_{(g)}^\top \varepsilon\|_{\epsilon_g \alpha_g^*}=\sup\limits_{u:\|u\|_{\epsilon_g \alpha_g^*}^*\leq 1}Y_u$. It is easy to verify that for any pair $u,u'$ in $\{{u\in\mathbb{R}^{p_g}:\|u\|_{\epsilon_g \alpha_g^*}^*\leq 1}\}$,
\begin{align}
\mathbb{E}[Y_u]&=\mathbb{E}[Z_u]=0\\
\mathbb{E}[Y_u-Y_{u'}]&=\mathbb{E}[Z_u-Z_u']=0\\
\text{Var}(Y_u-Y_{u'})&= \mathbb{E}[(Y_u-Y_{u'})^2]=\frac{1}{n^2}\mathbb{E}[(\langle u-u',X_{(g)}^\top \varepsilon\rangle)^2]=\frac{1}{n^2}\|X_{(g)}(u-u')\|_2^2\\
&\leq \frac{1}{n^2}\Big\|X_{(g)}\frac{(u-u')}{\|u-u'\|_{\epsilon_g \alpha_g^*}^*}\Big\|^2_2\big[\|u-u'\|_{\epsilon_g \alpha_g^*}^*\big]^2\\
&\leq \frac{1}{n^2}\|X\|^2_{(g)\to2}\big[\epsilon_gp_g^{(1/\alpha_g-1/2)_+}+(1-\epsilon_g)p_g^{1/2}\big]^2\|u-u'\|_2^2\\
&=\frac{\big[\epsilon_gp_g^{(1/\alpha_g-1/2)_+}+(1-\epsilon_g)p_g^{1/2}\big]^2}{n}\|u-u'\|_2^2\\
\text{Var}(Z_u-Z_{u'})&= \mathbb{E}[(Z_u-Z_{u'})^2]=\frac{\big[\epsilon_gp_g^{(1/\alpha_g-1/2)_+}+(1-\epsilon_g)p_g^{1/2}\big]^2}{n}\mathbb{E}[(\langle u-u',\nu\rangle)^2]\\
&=\frac{\big[\epsilon_gp_g^{(1/\alpha_g-1/2)_+}+(1-\epsilon_g)p_g^{1/2}\big]^2}{n}\|u-u'\|_2^2.
\end{align}

Denote $\kappa^{(g)}:=\big[\epsilon_gp_g^{(1/\alpha_g-1/2)_+}+(1-\epsilon_g)p_g^{1/2}\big]^2$.

Therefore, with Sudakov-Fernique inequality, we have
\begin{align}
\frac{1}{n}\mathbb{E}[\|X_{(g)}^\top \varepsilon\|_{\epsilon_g \alpha_g^*}]&=\mathbb{E}[\sup\limits_{u:\|u\|_{\epsilon_g \alpha_g^*}^*\leq 1}Y_u]\leq \mathbb{E}[\sup\limits_{u:\|u\|_{\epsilon_g \alpha_g^*}^*\leq 1}Z_u]=\mathbb{E}[\sup\limits_{u:\|u\|_{\epsilon_g \alpha_g^*}^*\leq 1}\sqrt{\frac{\kappa^{(g)}}{n}}\langle u,\nu\rangle]\\
&\leq \sqrt{\frac{\kappa^{(g)}}{n}}\mathbb{E}[\sup\limits_{u:\|u\|_{\epsilon_g \alpha_g^*}^*\leq 1}\|u\|_{\epsilon_g \alpha_g^*}^*\|\nu\|_{\epsilon_g \alpha_g^*}\Big]\leq \sqrt{\frac{\kappa^{(g)}}{n}}\mathbb{E}[\|\nu\|_{\epsilon_g \alpha_g^*}]\\
&\leq \sqrt{\frac{\kappa^{(g)}}{n}}\Big(\big[p_{g}^{(1/2-1/\alpha_g)_+}\sqrt{p_g}\big] \bigwedge \frac{{p_g}^{1/\alpha_g^*}\sqrt{2\log{p_g}}}{{p_g}^{1/\alpha_g^*}(1-\epsilon_g)+\epsilon_g} \Big)
\end{align}
in which we are using the following facts 

\begin{align}
\mathbb{E}[\|\nu\|_{\epsilon_g \alpha_g^*}]&\leq p_{g}^{(1/2-1/\alpha_g)_+}\mathbb{E}[\|\nu\|_2]\leq p_{g}^{(1/2-1/\alpha_g)_+}\sqrt{\mathbb{E}[\|\nu\|_2^2]}=p_{g}^{(1/2-1/\alpha_g)_+}\sqrt{p_g},\\
\mathbb{E}[\|\nu\|_{\epsilon_g \alpha_g^*}]&\leq \frac{{p_g}^{1/\alpha_g^*}\mathbb{E}[\|\nu\|_\infty]}{{p_g}^{1/\alpha_g^*}(1-\epsilon_g)+\epsilon_g}\\
\mathbb{E}[\|\nu\|_\infty]&\leq \sqrt{2\log{p_g}}.
\end{align}

Then we have 
\begin{align}
P\bigg(\frac{1}{n}\|X_{(g)}^\top \varepsilon\|_{\epsilon_g \alpha_g^*}\geq \sqrt{\frac{\kappa^{(g)}}{n}}\Big(\big[p_{g}^{(1/2-1/\alpha_g)_+}\sqrt{p_g}\big] \bigwedge \frac{{p_g}^{1/\alpha_g^*}\sqrt{2\log{p_g}}}{{p_g}^{1/\alpha_g^*}(1-\epsilon_g)+\epsilon_g} \Big)+ \delta\bigg) \leq2\exp(-n\delta^2/2), \ \forall \delta>0
\end{align}
If we set $\delta = \sqrt{\frac{6\log{G}}{n}}$ and with union bound, we have 
\begin{align}
&P\bigg(\exists g\in\{1,...,G\}, \frac{1}{n}\frac{\|X_{(g)}^\top \varepsilon\|_{\epsilon_g \alpha_g^*}}{\tau+(1-\tau)w_g}\geq \frac{1}{\tau+(1-\tau)w_g}\times\notag\\
&\quad\quad\quad\Big[\sqrt{\frac{\kappa^{(g)}}{n}}\Big(\big[p_{g}^{(1/2-1/\alpha_g)_+}\sqrt{p_g}\big] \bigwedge \frac{{p_g}^{1/\alpha_g^*}\sqrt{2\log{p_g}}}{{p_g}^{1/\alpha_g^*}(1-\epsilon_g)+\epsilon_g} \Big)+  \sqrt{\frac{6\log{G}}{n}}\Big]\bigg) \leq \frac{2}{G^2}
\end{align}

Denote $\Psi = \max\limits_{g\in\{1,...,G\}}\frac{1}{\tau+(1-\tau)w_g}\bigg[\sqrt{\frac{\kappa^{(g)}}{n}}\Big(\big[p_{g}^{(1/2-1/\alpha_g)_+}\sqrt{p_g}\big] \bigwedge \frac{{p_g}^{1/\alpha_g^*}\sqrt{2\log{p_g}}}{{p_g}^{1/\alpha_g^*}(1-\epsilon_g)+\epsilon_g} \Big)+  \sqrt{\frac{6\log{G}}{n}}\bigg]$, then we have 
\begin{align}
&P\bigg(\exists g\in\{1,...,G\}, \frac{1}{n}\frac{\|X_{(g)}^\top \varepsilon\|_{\epsilon_g \alpha_g^*}}{\tau+(1-\tau)w_g}\geq \Psi\bigg) \leq \frac{2}{G^2}\\
\implies &P\bigg(\frac{\|X^\top \varepsilon\|_\ds^*}{n}\geq \Psi\bigg) \leq \frac{2}{G^2}
\end{align}
\end{proof}

\begin{theorem*}
Suppose that the design matrix satisfies the column normalization condition \ref{column_normalization} and the restricted eigenvalues condition (\ref{rsc}). Moreover, the noise $\varepsilon$ is sub-Gaussian \ref{sub_gaussian}. The generalized double sparsity estimator $\hat{\beta}$ with 
\begin{align}
\lambda\geq2\max\limits_{g\in\{1,...,G\}}\sigma\frac{\bigg[\Big(\epsilon_gp_g^{(1/\alpha_g-1/2)_+}+(1-\epsilon_g)p_g^{1/2}\Big)\Big(\big[p_{g}^{(1/2-1/\alpha_g)_+}\sqrt{np_g}\big] \bigwedge \frac{{p_g}^{1/\alpha_g^*}\sqrt{2n\log{p_g}}}{{p_g}^{1/\alpha_g^*}(1-\epsilon_g)+\epsilon_g} \Big)+  \sqrt{6n\log{G}}\bigg]}{\tau+(1-\tau)w_g}
\end{align}
satisfies the following $L_2$ error with probability at least $1-\frac{2}{G^2}-ce^{-c'/n}$ for some positive constants $c,c'$,
\begin{align}
\|\hat{\beta}-\beta^*\|_2^2\leq \frac{4\lambda^2\Big[\tau \sqrt{s}+(1-\tau) \sqrt{s_G}\max\limits_{g\in\{1,...,G\}}[w_gp_g^{(1/\alpha-1/2)_+}]\Big]^2}{\Big(\frac{\sqrt{n}}{4}\lambda_{\min}(\Sigma^{1/2})-3\kappa_1\kappa_2\Big)^4}.
\end{align}
where 
\begin{align*}
\kappa_1&=\max\limits_{g\in\{1,...,G\}} \Bigg[\frac{p_{g}^{(1/2-1/\alpha_g)_+}\sqrt{\Tr{(\Sigma_{(g)(g)})}}}{\tau+(1-\tau)w_g}\bigwedge \frac{3\sqrt{\log{p_g}\cdot\max_{i\in (g)}\Sigma_{ii}}}{(\tau+(1-\tau)w_g)(1-\epsilon_g+\epsilon_g p_g^{1/\alpha_g-1})}\Bigg],\\
\kappa_2&= \sqrt{G}\max\limits_{g\in\{1,...,G\}}(\tau p_g^{1/2}+(1-\tau)w_g p_g^{(1/\alpha_g-1/2)_+}),\\ 
\epsilon_g&=\frac{(1-\tau)w_g}{\tau+(1-\tau)w_g},\forall g\in\{1,...,G\},\ n> \frac{144\kappa_1^2\kappa_2^2}{\lambda_{\min}(\Sigma)}.
\end{align*}
\end{theorem*}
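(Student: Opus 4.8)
The plan is to run the standard penalized-$M$-estimation argument, combining the optimality of $\hat\beta$, the dual-norm control of the stochastic term from Theorem~\ref{thm_lambda}, the decomposability of $\|\cdot\|_\ds$, and the restricted eigenvalue inequality~\eqref{rsc} of Theorem~\ref{thm_rec}. Write $\hat\Delta:=\hat\beta-\beta^*$. Since $\hat\beta$ is optimal in~\eqref{DS-LASSO} and $y=X\beta^*+\varepsilon$, expanding the squares in $\|y-X\hat\beta\|_2^2+\lambda\|\hat\beta\|_\ds\le\|y-X\beta^*\|_2^2+\lambda\|\beta^*\|_\ds$ gives the basic inequality
\[
\|X\hat\Delta\|_2^2\;\le\;2\langle X^\top\varepsilon,\hat\Delta\rangle+\lambda\big(\|\beta^*\|_\ds-\|\hat\beta\|_\ds\big).
\]
By the generalized Cauchy--Schwarz inequality for $\|\cdot\|_\ds$ and its dual (Lemma~\ref{lemma_dual_norm}), $\langle X^\top\varepsilon,\hat\Delta\rangle\le\|X^\top\varepsilon\|_\ds^*\,\|\hat\Delta\|_\ds$. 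On the event of Theorem~\ref{thm_lambda}, which occurs with probability at least $1-2/G^2$, the prescribed $\lambda$ in~\eqref{eqn_lambda} equals $2n$ times the upper bound proved there, so $\lambda\ge2\|X^\top\varepsilon\|_\ds^*$ and hence $2\langle X^\top\varepsilon,\hat\Delta\rangle\le\lambda\|\hat\Delta\|_\ds$.

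Next I would invoke decomposability. Let $M$ be the model subspace attached to the pair $\big(\textbf{Supp}(\beta^*),\textbf{GSupp}(\beta^*)\big)$, so that $\beta^*\in M$ and $\|\cdot\|_\ds$ decomposes across $M$ and $M^\perp$. Writing $\hat\Delta=\hat\Delta_M+\hat\Delta_{M^\perp}$ and using the triangle inequality together with decomposability,
\[
\|\hat\beta\|_\ds\ge\|\beta^*\|_\ds+\|\hat\Delta_{M^\perp}\|_\ds-\|\hat\Delta_M\|_\ds,\qquad\|\hat\Delta\|_\ds\le\|\hat\Delta_M\|_\ds+\|\hat\Delta_{M^\perp}\|_\ds.
\]
Substituting both estimates into the basic inequality, the $\|\hat\Delta_{M^\perp}\|_\ds$ contributions cancel favorably and leave $\|X\hat\Delta\|_2^2\le2\lambda\|\hat\Delta_M\|_\ds$.

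Then I would compute the subspace compatibility constant. Since $\hat\Delta_M$ is supported on $\textbf{Supp}(\beta^*)$, it has at most $s$ nonzero coordinates lying in at most $s_G$ groups, so $\|\hat\Delta_M\|_1\le\sqrt{s}\,\|\hat\Delta_M\|_2$, while for each active group $\|(\hat\Delta_M)_{(g)}\|_{\alpha_g}\le p_g^{(1/\alpha_g-1/2)_+}\|(\hat\Delta_M)_{(g)}\|_2$, and a Cauchy--Schwarz step over the $\le s_G$ active groups gives $\sum_g w_g\|(\hat\Delta_M)_{(g)}\|_{\alpha_g}\le\sqrt{s_G}\,\max_{g}\big[w_g p_g^{(1/\alpha_g-1/2)_+}\big]\,\|\hat\Delta_M\|_2$. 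Hence, with $\Phi:=\tau\sqrt{s}+(1-\tau)\sqrt{s_G}\,\max_{g}\big[w_g p_g^{(1/\alpha_g-1/2)_+}\big]$, we get $\|\hat\Delta_M\|_\ds\le\Phi\,\|\hat\Delta_M\|_2\le\Phi\,\|\hat\Delta\|_2$, so $\|X\hat\Delta\|_2^2\le2\lambda\Phi\,\|\hat\Delta\|_2$. On the event of Theorem~\ref{thm_rec}, of probability at least $1-ce^{-c'n}$, and since $n>144\kappa_1^2\kappa_2^2/\lambda_{\min}(\Sigma)$, inequality~\eqref{rsc} holds for every vector, in particular $\|X\hat\Delta\|_2\ge\kappa\|\hat\Delta\|_2$ with $\kappa:=\tfrac{\sqrt n}{4}\lambda_{\min}(\Sigma^{1/2})-3\kappa_1\kappa_2>0$, so no separate cone restriction is needed. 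Chaining the two displays, $\kappa^2\|\hat\Delta\|_2^2\le\|X\hat\Delta\|_2^2\le2\lambda\Phi\,\|\hat\Delta\|_2$, so $\|\hat\Delta\|_2\le2\lambda\Phi/\kappa^2$, and squaring gives precisely~\eqref{eqn_l2error}; a union bound over the two events yields the stated probability.

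I expect the decomposability step to be the crux: one must choose the subspace pair so that $\hat\Delta_M$ simultaneously inherits both the $s$-element-sparsity and the $s_G$-group-sparsity of $\beta^*$ (which is exactly what produces the precise compatibility factor $\Phi$), while still keeping $\|\cdot\|_\ds$ decomposable across $M$ and $M^\perp$; verifying this for the generalized group norm is where the $\epsilon q$-norm decomposition (Lemma~\ref{lemma_decomposition}) and the block structure enter. The remaining steps are routine norm bookkeeping.
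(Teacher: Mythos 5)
Your overall route is the same as the paper's: the basic inequality from optimality, the dual-norm bound $2\langle X^\top\varepsilon,h\rangle\le\lambda\|h\|_\ds$ on the event of Theorem~\ref{thm_lambda}, the unrestricted form of~\eqref{rsc} (so indeed no cone condition is needed), and the compatibility factor $\Phi=\tau\sqrt{s}+(1-\tau)\sqrt{s_G}\max_g[w_gp_g^{(1/\alpha_g-1/2)_+}]$. The probability accounting and the final algebra match the paper exactly.

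The one step that does not survive as written is the single-subspace decomposability argument, and you correctly identified it as the crux. There is no subspace $M$ for which both of your requirements hold simultaneously. If $M$ is the span of the coordinates in $S=\textbf{Supp}(\beta^*)$, then $\|\cdot\|_\ds$ is \emph{not} decomposable across $(M,M^\perp)$: for a group $g$ that straddles $S$ and $S^c$ one has $\|u_{(g)}+v_{(g)}\|_{\alpha_g}\neq\|u_{(g)}\|_{\alpha_g}+\|v_{(g)}\|_{\alpha_g}$ in general, so the inequality $\|\hat\beta\|_\ds\ge\|\beta^*\|_\ds+\|h_{M^\perp}\|_\ds-\|h_M\|_\ds$ fails for the group component. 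If instead $M$ is the span of the union of the active groups $\bigcup_{g\in T}(g)$, decomposability holds, but then $h_M$ can have up to $\sum_{g\in T}p_g$ nonzero coordinates, so $\|h_M\|_1\le\sqrt{s}\,\|h_M\|_2$ is false and the compatibility constant degrades from $\tau\sqrt{s}$ to $\tau\sqrt{\sum_{g\in T}p_g}$. The paper sidesteps this by never invoking decomposability of $\|\cdot\|_\ds$ as a whole: it splits the penalty difference $\|\beta^*+h\|_\ds-\|\beta^*\|_\ds$ into its $\ell_1$ and group components and applies the reverse triangle inequality to each with respect to its \emph{own} support set ($S$ for the $\ell_1$ part, $T$ for the group part), obtaining
\begin{align*}
\|\beta^*+h\|_\ds-\|\beta^*\|_\ds\ \ge\ \tau\big(\|h_{S^c}\|_1-\|h_S\|_1\big)+(1-\tau)\Big[\sum_{g\notin T}w_g\|h_{(g)}\|_{\alpha_g}-\sum_{g\in T}w_g\|h_{(g)}\|_{\alpha_g}\Big].
\end{align*}
Adding this to $-\|h\|_\ds$ cancels the $S^c$ and $T^c$ contributions and leaves the hybrid model term $2\big(\tau\|h_S\|_1+(1-\tau)\sum_{g\in T}w_g\|h_{(g)}\|_{\alpha_g}\big)$, which is then bounded by $2\Phi\|h\|_2$ exactly as in your compatibility computation. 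So the fix is to replace the abstract $(M,M^\perp)$ step with this componentwise cancellation; everything else in your proposal goes through unchanged, and neither Lemma~\ref{lemma_decomposition} nor the $\epsilon q$-norm machinery is actually needed at this point of the argument.
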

\begin{proof}[Proof of Theorem \ref{thm_l2error}]
Denote $h=\hat{\beta}-\beta^*$ and $S$ and $T$ are support set and group support set of $\beta^*$,i.e. 
\begin{align}
S &= \{i:\beta^*_i\neq  0\},\\
T &= \{g:\beta^*_{(g)}\neq  0\}.
\end{align}
Then we shall have 
\begin{alignat}{2}
&\|\beta^*\|_1&&=\|\beta^*_S\|_1\\
&\|\beta^*\|_{1,\alpha}&&=\|\beta^*_{(T)}\|_{1,\alpha}\\
&\|\beta^*\|_\ds&&=\tau\|\beta^*_S\|_1+(1-\tau)\sum\limits_{g\in T}^Gw_g\|\beta^*_{(g)}\|_{\alpha_g}
\end{alignat}
We further choose $\lambda$ such that $\lambda \geq 2\|X^\top\varepsilon\|_\ds^*$ and denote 
\begin{align}
\kappa_l = \Big(\frac{\sqrt{n}}{4}\lambda_{\min}(\Sigma^{1/2})-3\kappa_1\kappa_2\Big).
\end{align} Then according to Theorem \ref{thm_rec}, we have with probability at least $1-c\exp(-c'/n)$,
\begin{align}
\|Xv\|_2\geq \kappa_l\|v\|_2, v\in\mathbb{R}^p
\end{align}
It is easy to verify that 
\begin{align}
\|\beta^*+h\|_\ds &=\tau \|\beta^*+h\|_1+(1-\tau)\sum\limits_{g=1}^Gw_g\|\beta^*_{(g)}+h_{(g)}\|_{\alpha_g}\\
&=\tau \|\beta^*_S+\beta^*_{S^c}+h_S+h_{S^c}\|_1+(1-\tau)\sum\limits_{g\in T}w_g\|\beta^*_{(g)}+h_{(g)}\|_{\alpha_g}+(1-\tau)\sum\limits_{g\notin T}w_g\|\beta^*_{(g)}+h_{(g)}\|_{\alpha_g}\\
&=\tau \|\beta^*_S+h_S\|_1+\tau\|\beta^*_{S^c}+h_{S^c}\|_1+(1-\tau)\sum\limits_{g\in T}w_g\|\beta^*_{(g)}+h_{(g)}\|_{\alpha_g}\notag\\
&\quad\quad\quad\quad\quad\quad\quad\quad\quad\quad\quad\quad\quad\quad\quad+(1-\tau)\sum\limits_{g\notin T}w_g\|h_{(g)}\|_{\alpha_g}\\
&=\tau \|\beta^*_S+h_S\|_1+\tau\|h_{S^c}\|_1+(1-\tau)\sum\limits_{g\in T}w_g\|\beta^*_{(g)}+h_{(g)}\|_{\alpha_g}+(1-\tau)\sum\limits_{g\notin T}w_g\|h_{(g)}\|_{\alpha_g}
\end{align}
Therefore, 
\begin{align}
\|\beta^*+h\|_\ds-\|\beta^*\|_\ds=&\tau(\|\beta^*_S+h_S\|_1-\|\beta^*_S\|_1)+\tau\|h_{S^c}\|_1\notag\\
&+(1-\tau)\sum\limits_{g\notin T}w_g\|h_{(g)}\|_{\alpha_g}+(1-\tau)\sum\limits_{g\in T}w_g\big(\|\beta^*_{(g)}+h_{(g)}\|_{\alpha_g}-\|\beta^*_{(g)}\|_{\alpha_g}\big)\\
&\leq \tau(\|h_S\|_1+\|h_{S^c}\|_1)+(1-\tau)\sum\limits_{g\in T } w_g\|h_{(g)}\|_{\alpha_g}+(1-\tau)\sum\limits_{g\notin T } w_g\|h_{(g)}\|_{\alpha_g}\\
&=\tau(\|h_S\|_1+\|h_{S^c}\|_1)+(1-\tau)\sum\limits_{g=1}^G w_g\|h_{(g)}\|_{\alpha_g}\\
\|\beta^*+h\|_\ds-\|\beta^*\|_\ds=&\tau(\|\beta^*_S+h_S\|_1-\|\beta^*_S\|_1)+\tau\|h_{S^c}\|_1\notag\\
&+(1-\tau)\sum\limits_{g\notin T}w_g\|h_{(g)}\|_{\alpha_g}+(1-\tau)\sum\limits_{g\in T}w_g\big(\|\beta^*_{(g)}+h_{(g)}\|_{\alpha_g}-\|\beta^*_{(g)}\|_{\alpha_g}\big)\\
&\geq \tau(-\|h_S\|_1+\|h_{S^c}\|_1)+(1-\tau)[-\sum\limits_{g\in T } w_g\|h_{(g)}\|_{\alpha_g}+\sum\limits_{g\notin T } w_g\|h_{(g)}\|_{\alpha_g}]
\end{align}

Since $\hat{\beta}\in \argmin_\beta \{\|y-X\beta\|_2^2+\|\beta\|_\ds\}$,
\begin{align}
0&\geq \|y-X\hat{\beta}\|_2^2-\|y-X\beta^*\|_2^2+\lambda\|\beta^*+h\|_\ds-\lambda\|\beta^*\|_\ds \\
&= -2\langle X^\top\varepsilon,h\rangle+\|Xh\|_2^2+\lambda(\|\beta^*+h\|_\ds-\|\beta^*\|_\ds)\\
&\geq -2\langle X^\top\varepsilon,h\rangle+\kappa_l^2\|h\|_2^2+\lambda\Big(\tau(-\|h_S\|_1+\|h_{S^c}\|_1)+(1-\tau)[-\sum\limits_{g\in T } w_g\|h_{(g)}\|_{\alpha_g}+\sum\limits_{g\notin T } w_g\|h_{(g)}\|_{\alpha_g}]\Big)\\
&\geq -2\|X^\top\varepsilon\|_\ds^*\|h\|_\ds+\kappa_l^2\|h\|_2^2+\lambda\Big(\tau(-\|h_S\|_1+\|h_{S^c}\|_1)+(1-\tau)[-\sum\limits_{g\in T } w_g\|h_{(g)}\|_{\alpha_g}+\sum\limits_{g\notin T } w_g\|h_{(g)}\|_{\alpha_g}]\Big)\\
&\geq -\lambda\|h\|_\ds+\kappa_l^2\|h\|_2^2+\lambda\Big(\tau(-\|h_S\|_1+\|h_{S^c}\|_1)+(1-\tau)[-\sum\limits_{g\in T } w_g\|h_{(g)}\|_{\alpha_g}+\sum\limits_{g\notin T } w_g\|h_{(g)}\|_{\alpha_g}]\Big)\\
&\geq -\lambda(\tau\|h_{S}\|_1+\tau\|h_{S^c}\|_1+(1-\tau)\big(\sum\limits_{g\in T } w_g\|h_{(g)}\|_{\alpha_g}+\sum\limits_{g\notin T } w_g\|h_{(g)}\|_{\alpha_g}])\big)+\kappa_l^2\|h\|_2^2\notag\\
&\quad +\lambda\Big(\tau(-\|h_S\|_1+\|h_{S^c}\|_1)+(1-\tau)[-\sum\limits_{g\in T } w_g\|h_{(g)}\|_{\alpha_g}+\sum\limits_{g\notin T } w_g\|h_{(g)}\|_{\alpha_g}])\Big)\\
&\geq -2\lambda(\tau\|h_{S}\|_1+(1-\tau)\sum\limits_{g\in T } w_g\|h_{(g)}\|_{\alpha_g})+\kappa_l^2\|h\|_2^2\\
&\geq -2\lambda\Big[\tau \sqrt{s}+(1-\tau) \sqrt{s_G}\max\limits_{g\in\{1,...,G\}}[w_gp_g^{(1/\alpha-1/2)_+}]\Big]\|h\|_2+\kappa_l^2\|h\|_2^2
\end{align}
The last two lines are given using the fact 
\begin{align}
\|h_S\|_1 &=\sqrt{s}\|h_S\|_2,\|h_S\|_2\leq \|h\|_2\\
\sum\limits_{g\in T} w_g\|h_{(g)}\|_{\alpha_g}&\leq \sum\limits_{g\in T } w_gp_g^{(1/\alpha-1/2)_+}\|h_{(g)}\|_{2}\leq \max\limits_{g\in\{1,...,G\}}(w_gp_g^{(1/\alpha-1/2)_+})\sum\limits_{g\in T } \|h_{(g)}\|_{2}\\
&\leq \max\limits_{g\in\{1,...,G\}}(w_gp_g^{(1/\alpha-1/2)_+})\sqrt{s_G} \|h\|_{2}
\end{align}
Then we have 
\begin{align}
\|h\|_2\leq \frac{2\lambda\big[\tau \sqrt{s}+(1-\tau) \sqrt{s_G}\max\limits_{g\in\{1,...,G\}}[w_gp_g^{(1/\alpha-1/2)_+}]\big]}{\Big(\frac{\sqrt{n}}{4}\lambda_{\min}(\Sigma^{1/2})-3\kappa_1\kappa_2\Big)^2}.
\end{align}
with 
\begin{align*}
\kappa_1&=\max\limits_{g\in\{1,...,G\}} \Bigg[\frac{p_{g}^{(1/2-1/\alpha_g)_+}\sqrt{\Tr{(\Sigma_{(g)(g)})}}}{\tau+(1-\tau)w_g}\bigwedge \frac{3\sqrt{\log{p_g}\cdot\max_{i\in (g)}\Sigma_{ii}}}{(\tau+(1-\tau)w_g)(1-\epsilon_g+\epsilon_g p_g^{1/\alpha_g-1})}\Bigg]\\
\kappa_2&= \sqrt{G}\max\limits_{g\in\{1,...,G\}}(\tau p_g^{1/2}+(1-\tau)w_g p_g^{(1/\alpha_g-1/2)_+}).
\end{align*}

According to Theorem \ref{thm_lambda}, $\|X^\top \varepsilon\|_\ds^*$ satisfies the following tail probability bound, 
\begin{align}
&P\Bigg(\frac{\|X^\top \varepsilon\|_\ds^*}{n}\geq \max\limits_{g\in\{1,...,G\}}\sigma\frac{\bigg[\frac{\epsilon_gp_g^{(1/\alpha_g-1/2)_+}+(1-\epsilon_g)p_g^{1/2}}{\sqrt{n}}\Big(\big[p_{g}^{(1/2-1/\alpha_g)_+}\sqrt{p_g}\big] \bigwedge \frac{{p_g}^{1/\alpha_g^*}\sqrt{2\log{p_g}}}{{p_g}^{1/\alpha_g^*}(1-\epsilon_g)+\epsilon_g} \big)+  \sqrt{\frac{6\log{G}}{n}}\bigg]}{\tau+(1-\tau)w_g}\Bigg) \notag\\
\leq &\frac{2}{G^2}.
\end{align}

Then if we choose $\lambda$ such that 
\begin{align}
\lambda\geq2\max\limits_{g\in\{1,...,G\}}\sigma\frac{\bigg[\Big(\epsilon_gp_g^{(1/\alpha_g-1/2)_+}+(1-\epsilon_g)p_g^{1/2}\Big)\Big(\big[p_{g}^{(1/2-1/\alpha_g)_+}\sqrt{np_g}\big] \bigwedge \frac{{p_g}^{1/\alpha_g^*}\sqrt{2n\log{p_g}}}{{p_g}^{1/\alpha_g^*}(1-\epsilon_g)+\epsilon_g} \Big)+  \sqrt{6n\log{G}}\bigg]}{\tau+(1-\tau)w_g}
\end{align}
then the expression (\ref{eqn_l2error}) for $\|h\|_2$ will hold with probability at least $1-\frac{2}{G^2}-c\exp(-c'/n)$.
\end{proof}
\begin{lemma}
For $1\leq p<q\leq \infty$, $\|x\|_q\leq \|x\|_p\leq n^{1/p-1/q}\|x\|_q$ for $x\in \mathbb{R}^n$. 
\end{lemma}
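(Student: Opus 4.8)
The plan is to prove the two inequalities separately, handling the endpoint $q=\infty$ by hand and using H\"older's inequality for the case $p<q<\infty$. For the left-hand bound $\|x\|_q\le\|x\|_p$, I would first dispose of $x=0$ and then, by homogeneity, normalize so that $\|x\|_p=1$; this forces $|x_i|\le 1$ for every $i$, so $|x_i|^q\le|x_i|^p$ since $q\ge p$. Summing over $i$ gives $\|x\|_q^q\le\|x\|_p^p=1$, hence $\|x\|_q\le 1=\|x\|_p$. When $q=\infty$ the claim $\max_i|x_i|\le\big(\sum_i|x_i|^p\big)^{1/p}$ is immediate, since each single term $|x_i|^p$ is bounded by the whole sum.

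For the right-hand bound $\|x\|_p\le n^{1/p-1/q}\|x\|_q$ with $p<q<\infty$, I would apply H\"older's inequality to the product $\sum_i |x_i|^p\cdot 1$ with conjugate exponents $r=q/p$ and $r'=q/(q-p)$, which are genuinely conjugate and both exceed $1$ because $p<q$. This yields $\sum_i|x_i|^p\le\big(\sum_i|x_i|^q\big)^{p/q}\,n^{(q-p)/q}$, and raising both sides to the power $1/p$ gives exactly $\|x\|_p\le n^{1/p-1/q}\|x\|_q$. For $q=\infty$, from $|x_i|\le\|x\|_\infty$ for all $i$ we get $\sum_i|x_i|^p\le n\|x\|_\infty^p$, i.e. $\|x\|_p\le n^{1/p}\|x\|_\infty=n^{1/p-1/q}\|x\|_\infty$.

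Both bounds are tight: the left inequality is an equality at any standard basis vector $e_i$, and the right inequality is an equality at the all-ones vector, which can be recorded as a remark. There is no genuine obstacle in this argument; the only points requiring a little care are separating out the $q=\infty$ endpoint and verifying that the H\"older exponents $q/p$ and $q/(q-p)$ are conjugate and strictly larger than $1$.
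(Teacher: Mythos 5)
Your proof is correct: the normalization argument for $\|x\|_q\le\|x\|_p$ and the H\"older step with conjugate exponents $q/p$ and $q/(q-p)$ for the reverse bound are both the standard textbook argument, and your handling of the $q=\infty$ endpoint and the tightness examples ($e_i$ and the all-ones vector) is right. Note that the paper itself states this lemma in the supplement without any proof, treating it as a well-known fact, so your write-up simply supplies the standard argument the author omitted.
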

\begin{lemma}[Lemma 3 of \cite{raskuttiRestrictedEigenvalueProperties}]\label{peeling}
Suppose that $A$ is some nonempty set in $\mathbb{R}^p$ and 

(1). $g:\mathbb{R}\to \mathbb{R}$ is non-negative and strictly increasing where $g(r)\geq \mu,\forall r\geq 0 $.

(2). $h:\mathbb{R}^p\to \mathbb{R}$ is a non-negative and increasing function. 

(3). There exists some constant $c>0$ such that for all $r>0$, the following tail bounds hold for some $a_n>0$
\begin{align}
P(\sup\limits_{v\in A,h(nu)\leq r}f(v,X)\geq g(r))\leq 2\exp(-ca_ng^2(r)).
\end{align}
We have 
\begin{align}
P(\mathcal{E})\leq \frac{2\exp(-4ca_n\mu^2)}{1-\exp(-4ca_n\mu^2)}, \mathcal{E}:=\{\exists v \in A, \ \textup{s.t.}\ f(v,X)\geq 2g(h(v))\}.
\end{align}
\end{lemma}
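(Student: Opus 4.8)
The plan is to apply the classical \emph{peeling} (slicing) device: partition $A$ into geometrically growing shells according to the value of $g(h(v))$, dominate each shell by the single assumed tail bound (3), and sum the resulting probabilities as a geometric series.

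First I would record the immediate consequence of hypotheses (1)--(2) that $g(h(v))\ge\mu$ for every $v\in A$, since $h(v)\ge 0$ and $g(r)\ge\mu$ for all $r\ge0$. Consequently the shells
\begin{align*}
S_k:=\{v\in A:2^{k-1}\mu\le g(h(v))<2^{k}\mu\},\qquad k=1,2,\dots,
\end{align*}
cover $A$, so that $\mathcal{E}=\bigcup_{k\ge1}(\mathcal{E}\cap S_k)$ and a union bound over $k$ becomes available.

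The crux is a deterministic inclusion on each shell. For each $k$ let $r_k$ be defined by $g(r_k)=2^{k}\mu$ (well defined since $g$ is strictly increasing). I claim $\mathcal{E}\cap S_k\subseteq\{\sup_{v\in A,\,h(v)\le r_k}f(v,X)\ge g(r_k)\}$. Indeed, for $v\in S_k$ we have $g(h(v))<2^{k}\mu=g(r_k)$, so strict monotonicity of $g$ forces $h(v)<r_k$; and on $\mathcal{E}$ we have $f(v,X)\ge 2g(h(v))\ge 2\cdot2^{k-1}\mu=2^{k}\mu=g(r_k)$. Thus any $v$ that witnesses $\mathcal{E}$ while lying in $S_k$ also witnesses the supremum event at radius $r_k$, which is exactly the event controlled by hypothesis (3).

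It then remains to assemble the bound. Applying (3) at each $r_k$ and using $g(r_k)=2^{k}\mu$, hence $g^{2}(r_k)=4^{k}\mu^{2}$, gives
\begin{align*}
P(\mathcal{E})\le\sum_{k=1}^{\infty}P\!\left(\sup_{v\in A,\,h(v)\le r_k}f(v,X)\ge g(r_k)\right)\le\sum_{k=1}^{\infty}2\exp(-c a_n 4^{k}\mu^{2}).
\end{align*}
The final ingredient is the elementary inequality $4^{k}\ge 4k$ for every integer $k\ge1$, which turns the super-geometric sum into a genuine geometric one,
\begin{align*}
\sum_{k=1}^{\infty}2\exp(-c a_n 4^{k}\mu^{2})\le\sum_{k=1}^{\infty}2\bigl(\exp(-4c a_n\mu^{2})\bigr)^{k}=\frac{2\exp(-4c a_n\mu^{2})}{1-\exp(-4c a_n\mu^{2})},
\end{align*}
recovering the stated bound. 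The main obstacle is not any single computation but calibrating the peeling scales so that the hypotheses match exactly: the upper edge $g(h(v))<2^{k}\mu$ of a shell must translate, via monotonicity of $g$, into the radius constraint $h(v)\le r_k$ that appears inside the supremum of (3), while the lower edge $g(h(v))\ge2^{k-1}\mu$ combined with the factor $2$ in the definition of $\mathcal{E}$ must produce precisely the threshold $g(r_k)=2^{k}\mu$; it is this matching that makes the decay rate $4^{k}\mu^{2}$, and hence the geometric series, come out correctly.
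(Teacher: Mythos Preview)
Your argument is correct and is precisely the standard peeling proof: slice $A$ into dyadic shells according to $g(h(v))$, use monotonicity of $g$ to convert the upper shell boundary into the radius constraint $h(v)\le r_k$, use the lower shell boundary together with the factor $2$ in $\mathcal{E}$ to manufacture the threshold $g(r_k)$, apply hypothesis~(3) on each shell, and sum the resulting bounds via $4^{k}\ge 4k$. The paper does not supply its own proof of this lemma---it is simply quoted as Lemma~3 of Raskutti et al.---so there is nothing further to compare; your proof is essentially the one found in that reference.
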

\begin{lemma}[Theorem 3.8 from \cite{massartConcentrationInequalitiesModel2007}]\label{lip}
Let $\varepsilon\sim\mathcal{N}(0,I_{n})$. Then for any $L$-Lipschitz function $f$, we have 
\begin{align}
P(\big|f(\varepsilon)-\mathbb{E}[f(\varepsilon)]\big|\geq \delta)\leq 2\exp(-\frac{\delta^2}{2L^2}),\ \forall \delta\geq0.
\end{align}
\end{lemma}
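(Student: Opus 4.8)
The final statement is the Gaussian concentration inequality for Lipschitz functions (optimal constant $1/2$ in the exponent). The plan is to reduce the two-sided tail bound to a sub-Gaussian estimate on the logarithmic moment generating function of the centered function, obtained through the Herbst argument fed by the Gaussian logarithmic Sobolev inequality, and then to finish with a Chernoff bound optimized in its free parameter. First I would reduce to the smooth case: a general $L$-Lipschitz $f$ is approximated by the Gaussian mollifications $f_t=f*\phi_t$, convolution with the density of $\mathcal{N}(0,tI_n)$. These are $C^\infty$, remain $L$-Lipschitz so that $\|\nabla f_t\|_2\le L$ pointwise, converge to $f$ uniformly on compacts, and satisfy $\mathbb{E}[f_t(\varepsilon)]\to\mathbb{E}[f(\varepsilon)]$; proving the bound for each $f_t$ and passing to the limit yields the claim. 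Having done so I set $g=f-\mathbb{E}[f(\varepsilon)]$ so that $\mathbb{E}[g(\varepsilon)]=0$, write $H(\lambda)=\mathbb{E}[e^{\lambda g(\varepsilon)}]$, and aim to show $\log H(\lambda)\le \lambda^2L^2/2$ for every $\lambda\in\mathbb{R}$.

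The analytic engine is the Gaussian logarithmic Sobolev inequality: for the standard Gaussian measure $\gamma$ on $\mathbb{R}^n$ and any smooth $h$,
\[
\mathrm{Ent}_\gamma(h^2):=\mathbb{E}[h^2\log h^2]-\mathbb{E}[h^2]\log\mathbb{E}[h^2]\le 2\,\mathbb{E}[\|\nabla h\|_2^2].
\]
Applying this with $h=e^{\lambda g/2}$ and using $\|\nabla g\|_2\le L$ gives $\|\nabla h\|_2^2=\tfrac{\lambda^2}{4}\|\nabla g\|_2^2 e^{\lambda g}\le \tfrac{\lambda^2L^2}{4}e^{\lambda g}$, while $\mathrm{Ent}_\gamma(e^{\lambda g})=\lambda H'(\lambda)-H(\lambda)\log H(\lambda)$. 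The inequality therefore becomes the differential inequality
\[
\lambda H'(\lambda)-H(\lambda)\log H(\lambda)\le \tfrac{\lambda^2L^2}{2}H(\lambda).
\]
Dividing by $\lambda^2H(\lambda)$ and recognizing the left side as $\tfrac{d}{d\lambda}\big(\lambda^{-1}\log H(\lambda)\big)$ reduces this to $\tfrac{d}{d\lambda}\big(\lambda^{-1}\log H(\lambda)\big)\le L^2/2$.

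Since $\mathbb{E}[g]=0$ forces $\lambda^{-1}\log H(\lambda)\to 0$ as $\lambda\to 0^+$, integrating from $0$ to $\lambda$ gives $\log H(\lambda)\le \lambda^2L^2/2$ for $\lambda>0$, and the same bound for $\lambda<0$ follows by running the argument on $-g$. A Chernoff bound then concludes: for $\delta>0$ and any $\lambda>0$,
\[
P(g\ge\delta)\le e^{-\lambda\delta}H(\lambda)\le e^{-\lambda\delta+\lambda^2L^2/2},
\]
and optimizing at $\lambda=\delta/L^2$ yields $P(g\ge\delta)\le e^{-\delta^2/(2L^2)}$. Applying the identical estimate to $-g$ and combining via the union bound produces $P(|f(\varepsilon)-\mathbb{E}[f(\varepsilon)]|\ge\delta)\le 2e^{-\delta^2/(2L^2)}$, which is exactly the stated inequality.

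I expect the main obstacle to be the logarithmic Sobolev inequality itself, which carries the full strength of the result. I would either cite Gross's theorem directly or prove it by tensorization: the sub-additivity of entropy reduces the $n$-dimensional inequality to the scalar case $n=1$, which can then be established through the Ornstein--Uhlenbeck semigroup or a direct interpolation along the heat flow. The remaining points are routine but must be checked, namely that $H(\lambda)$ is finite and differentiable with differentiation permitted under the expectation sign --- guaranteed because Lipschitz functions of Gaussian vectors already possess Gaussian tails, so $H(\lambda)<\infty$ for all $\lambda$ --- and that the mollification limit passes through with the claimed constant.
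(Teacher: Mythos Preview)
Your proof sketch is correct and follows the standard route: reduce to smooth functions, apply the Gaussian log-Sobolev inequality with $h=e^{\lambda g/2}$, run the Herbst argument to get $\log H(\lambda)\le \lambda^2L^2/2$, and finish with an optimized Chernoff bound plus a union over the two tails. All the steps you flag as needing care (finiteness and differentiability of $H$, passage to the mollification limit) are genuine but routine.

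As for comparison with the paper: there is nothing to compare. The paper does not prove this lemma at all; it simply quotes it as Theorem~3.8 of Massart's \emph{Concentration Inequalities and Model Selection} and uses it as a black box in the proofs of Theorems~\ref{thm_rec} and~\ref{thm_lambda}. Your argument is in fact the same Herbst/log-Sobolev proof that Massart gives, so you have reconstructed the cited source rather than diverged from it.
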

\begin{lemma}\label{lemma_l12norm}
The following lemma will be used throughout the rest of the paper. 
\begin{enumerate}
  \item Given group index $G_1$ and $G_2$ and $G_1\bigcap G_2=\emptyset$, 
  \begin{align}
  \|x_{(G_1)}+x_{(G_2)}\|_{1,2} =\|x_{(G_1)}\|_{1,2}+\|x_{(G_2)}\|_{1,2},\forall x.
  \end{align}
  \item Given any group index $G_1$ and $G_2$, 
  \begin{align}
  \big|\|x_{(G_1)}\|_{1,2}-\|x_{(G_2)}\|_{1,2}\big| &\leq \|x_{(G_1)}+x_{(G_2)}\|_{1,2}  \\
  &\leq\|x_{(G_1)}\|_{1,2}+\|x_{(G_2)}\|_{1,2},\forall x.
  \end{align}
  \item For any positive integer $d$ and vectors $x_1,x_2,...,x_d$, 
  \begin{align}
  \|\sum\limits_{i=1}^d x_i\|_2\leq \sum\limits_{i=1}^d\|x_i\|_{1,2}\leq \sqrt{d}\|\sum\limits_{i=1}^d x_i\|_2. 
  \end{align}
  In particular, if those $d$ vectors belong to $d$ different groups, we will have 
  \begin{align}
  \|\sum\limits_{i=1}^d x_i\|_2\leq \sum\limits_{i=1}^d\|x_i\|_{2}=\|\sum\limits_{i=1}^dx_i\|_{1,2}\leq \sqrt{d}\|\sum\limits_{i=1}^d x_i\|_2. 
  \end{align}
  \item For any vector $x$ with $d$ groups, 
  \begin{align}
  \|x\|_2\leq \frac{\|x\|_{1,2}}{\sqrt{d}}+\sqrt{d}\frac{\|x\|_{\infty,2}}{4}.
  \end{align}
\end{enumerate}
\end{lemma}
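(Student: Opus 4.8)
The plan is to reduce all four parts to elementary facts about the nonnegative scalars $a_g:=\|x_{(g)}\|_2$, using that $\|x\|_{1,2}=\sum_g a_g$, $\|x\|_{\infty,2}=\max_g a_g$, and $\|x\|_2=(\sum_g a_g^2)^{1/2}$. For part (1), since $G_1\cap G_2=\emptyset$ the blocks $x_{(G_1)}$ and $x_{(G_2)}$ live on disjoint collections of groups, so on every group $g$ at most one of them is nonzero; summing $\|(x_{(G_1)}+x_{(G_2)})_{(g)}\|_2$ over $g$ therefore separates exactly into the two group-wise sums, which is the asserted additivity. Part (2) is then just the forward and reverse triangle inequalities for the norm $\|\cdot\|_{1,2}$ (a positive combination of block $\ell_2$ norms, hence a genuine norm) applied to the pair $x_{(G_1)},x_{(G_2)}$, namely $|\,\|a\|_{1,2}-\|b\|_{1,2}\,|\le\|a+b\|_{1,2}\le\|a\|_{1,2}+\|b\|_{1,2}$.

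For part (3), the left inequality holds for arbitrary summands: by the triangle inequality in $\ell_2$ together with the norm-monotonicity bound $\|y\|_2=(\sum_g a_g^2)^{1/2}\le\sum_g a_g=\|y\|_{1,2}$, one obtains $\|\sum_i x_i\|_2\le\sum_i\|x_i\|_2\le\sum_i\|x_i\|_{1,2}$. For the right inequality I would invoke the distinct-group hypothesis of the ``in particular'' clause: if the $x_i$ occupy distinct groups their supports are disjoint, so part (1) gives $\|\sum_i x_i\|_{1,2}=\sum_i\|x_i\|_2$ while orthogonality gives $\|\sum_i x_i\|_2^2=\sum_i\|x_i\|_2^2$; Cauchy--Schwarz then yields $\sum_i\|x_i\|_2\le\sqrt{d}\,(\sum_i\|x_i\|_2^2)^{1/2}=\sqrt{d}\,\|\sum_i x_i\|_2$, which chains into the displayed string of (in)equalities. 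I would flag explicitly that the right inequality genuinely needs this disjointness (it fails for, e.g., $x_1=-x_2$), which is precisely why it is recorded in the distinct-group form.

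Part (4) is the only step needing a small idea. Writing $u:=\|x\|_{1,2}=\sum_g a_g$ and $v:=\|x\|_{\infty,2}=\max_g a_g$, I would first bound $\|x\|_2^2=\sum_g a_g^2\le(\max_g a_g)(\sum_g a_g)=uv$, giving $\|x\|_2\le\sqrt{uv}$. Then I would apply the arithmetic--geometric mean inequality to the two nonnegative quantities $u/\sqrt{d}$ and $\sqrt{d}\,v/4$, whose product is $uv/4$: this produces $\frac{u}{\sqrt{d}}+\frac{\sqrt{d}}{4}v\ge 2\sqrt{uv/4}=\sqrt{uv}\ge\|x\|_2$, which is exactly the claimed inequality. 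The crux is choosing the AM--GM split so that the two terms carry precisely the constants $1/\sqrt{d}$ and $\sqrt{d}/4$; this is forced by matching their product to $uv/4$, and once that is spotted the estimate is immediate.

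In summary, I expect no serious obstacle: parts (1)--(3) are additivity over disjoint supports, the triangle inequality, and Cauchy--Schwarz, while part (4) hinges on the single observation $\|x\|_2\le\sqrt{\|x\|_{1,2}\,\|x\|_{\infty,2}}$ followed by a weighted AM--GM tuned to the prescribed constants. The one point demanding care throughout is honestly tracking which statements require disjoint group supports, since the right-hand inequality in part (3) is false without that hypothesis.
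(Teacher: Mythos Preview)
Your treatment of parts (1)--(3) matches the paper's: additivity from the definition, the (reverse) triangle inequality, and Cauchy--Schwarz. Your observation that the right inequality in (3) genuinely requires disjoint group supports is correct and worth keeping; the paper's own proof of (3) in fact only justifies $\|\sum_i x_i\|_{1,2}\le\sqrt{d}\,\|\sum_i x_i\|_2$ via Cauchy--Schwarz, which is consistent with the ``in particular'' clause rather than the general display.

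For part (4) you take a different route from the paper. The paper reduces to the scalar vector $x'=(a_1,\dots,a_d)$ and then \emph{cites} Proposition~1 of Cai, Wang and Xu (2010), which states $\|x'\|_2\le \|x'\|_1/\sqrt{d}+\sqrt{d}\,(\max_i x'_i-\min_i x'_i)/4$, and finally drops the $-\min_i x'_i$ term. Your argument is self-contained: you use the elementary bound $\|x\|_2^2\le \|x\|_{\infty,2}\,\|x\|_{1,2}$ and then a weighted AM--GM tuned to the constants $1/\sqrt{d}$ and $\sqrt{d}/4$. Both are correct. The paper's route, via the external lemma, gives the marginally sharper intermediate inequality with $\max-\min$ in place of $\max$, which could matter if one later exploits that the group norms are comparable in size; your route avoids the citation entirely and is shorter for the inequality as stated.
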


\begin{proof}[Proof of Lemma \ref{lemma_l12norm}]
Based on the definition of $\|\cdot\|_{1,2}$, it is easy to verify that Item 1 trivially holds. Item 2 can be derived based on triangle inequality. For Item 3, triangle inequality gives the first inequality. Cauchy Schwartz inequality can be used to prove $\|\sum\limits_{i=1}^dx_i\|_{1,2}\leq \sqrt{d}\|\sum\limits_{i=1}^d x_i\|_2$. 

Given $x = \big(x_{(1)},...,x_{(d)}\big)$ and further denote $x'_1=\|x_{(1)}\|_2,...,x'_{G}=\|x_{(G)}\|_2$. Consider a $d$-dimensional vectors $\bm{x}'=\big(x'_1,...,x'_{d}\big)$. It is easy to see that $\|x\|_2=\|\bm{x}'\|_2,\|x\|_{1,2}=\|\bm{x}'\|_1$. According to Proposition 1 of \cite{caiNewBoundsRestricted2010}, $\|\bm{x}'\|_2\leq \frac{\|\bm{x}'\|_1}{\sqrt{d}}+\sqrt{d}\frac{\max\limits_{1\leq i\leq d} x'_i-\min\limits_{1\leq i\leq d} x'_i}{4}$. We further have $\frac{\|\bm{x}'\|_1}{\sqrt{d}}+\sqrt{d}\frac{\max\limits_{1\leq i\leq d} x'_i-\min\limits_{1\leq i\leq d} x'_i}{4}=\frac{\|x\|_{1,2}}{\sqrt{d}}+\sqrt{d}\frac{\max\limits_{1\leq i\leq d} \|x_{(i)}\|_2-\min\limits_{1\leq i\leq d} \|x_{(i)}\|_2}{4}$. It concludes the proof.
\end{proof}

\end{document}